%% file: iclr2027_arxiv.tex
\documentclass{article} 
\usepackage{iclr2027_arxiv,times}

\input{math_commands.tex}

\usepackage{hyperref}
\usepackage{url}
\usepackage{booktabs}       
\usepackage{amsfonts}       
\usepackage{nicefrac}       
\usepackage{microtype}      
\usepackage{xcolor}         
\usepackage{xspace}
\usepackage{enumitem}
\usepackage{amsmath,amsthm,amssymb,amsfonts,dsfont,pifont,bm,bbm,mathrsfs,mathtools,nicefrac,extarrows,relsize}
\usepackage{algorithm,algpseudocode,listings}
\usepackage{algorithmicx}
\usepackage{algpseudocode}
\usepackage{booktabs,multirow,adjustbox,diagbox,threeparttable,tabularray,setspace,wrapfig}
\usepackage{caption}
\usepackage{listings,colortbl,pifont}

\definecolor{codegreen}{rgb}{0,0.6,0}
\definecolor{codegray}{rgb}{0.5,0.5,0.5}
\definecolor{codepurple}{rgb}{0.58,0,0.82}
\definecolor{backcolour}{rgb}{1.0,1.0,1.0}
\lstdefinestyle{mystyle}{
    backgroundcolor=\color{backcolour},
    commentstyle=\color{codegreen},
    keywordstyle=\color{magenta},
    numberstyle=\tiny\color{codegray},
    stringstyle=\color{codepurple},
    basicstyle=\ttfamily\scriptsize,
    breakatwhitespace=false,
    breaklines=true,
    captionpos=b,
    keepspaces=true,
    numbers=left,
    numbersep=5pt,
    showspaces=false,
    showstringspaces=false,
    showtabs=false,
    tabsize=2,
    frame=single, 
    rulecolor=\color{black}
}
\definecolor{myred}{RGB}{213,123,101}
\definecolor{myblue}{RGB}{128,127,222}
\definecolor{mygreen}{RGB}{112,173,71}
\definecolor{cvprblue}{rgb}{0.21,0.49,0.74}
\usepackage[capitalize]{cleveref}  
\newtheorem{theorem}{Theorem}
\newtheorem{proposition}{Proposition}

\crefname{section}{Sec.}{Secs.}
\Crefname{section}{Section}{Sections}
\crefname{appendix}{App.}{Apps.}
\Crefname{appendix}{Appendix}{Appendices}
\crefname{table}{Tab.}{Tabs.}
\Crefname{table}{Table}{Tables}
\crefname{figure}{Fig.}{Figs.}
\Crefname{figure}{Figure}{Figures}
\crefname{equation}{Eq.}{Eqs.}
\Crefname{equation}{Equation}{Equations}
\crefname{theorem}{Thm.}{Thms.}
\Crefname{theorem}{Theorem}{Theorems}
\crefname{lemma}{Lem.}{Lems.}
\Crefname{lemma}{Lemma}{Lemmas}
\crefname{remark}{Rem.}{Rems.}
\Crefname{remark}{Remark}{Remarks}
\crefname{corollary}{Cor.}{Cors.}
\Crefname{corollary}{Corollary}{Corollaries}
\crefname{algorithm}{Alg.}{Algs.}
\Crefname{algorithm}{Algorithm}{Algorithms}
\title{Distilling Drifting Transformers with \\ Representation Autoencoders}

\author{%
Jiawei Zhang$^{1,2}$ \quad
Mengfei Xia$^2$\thanks{Corresponding authors.} \quad
Gen Li$^3$ \quad
Yuantao Gu$^1$\footnotemark[1]
\AND
$^1$Tsinghua University \quad
$^2$Ant Group \quad
$^3$CUHK
}

\newcommand{\tocite}[1]{{\color{red} [TO CITE]}}
\newcommand{\methodname}{Drift-RAE}
\newcommand{\method}{\mbox{\texttt{\methodname}}\xspace}

\iclrfinalcopy 
\begin{document}

\maketitle

\input{sections/0.abs.tex}

\input{sections/1.intro.tex}

\input{sections/2.related.tex}

\input{sections/3.method.tex}
\input{sections/4.exp.tex}
\input{sections/5.conclusion.tex}

\input{sections/6.ref.tex}

\newpage
\appendix

\input{sections/8.appendix.tex}

\end{document}

%% file: math_commands.tex
\usepackage{amsmath,amsfonts,bm}

\def\eqref#1{equation~\ref{#1}}

\def\1{\bm{1}}

\DeclareMathAlphabet{\mathsfit}{\encodingdefault}{\sfdefault}{m}{sl}
\SetMathAlphabet{\mathsfit}{bold}{\encodingdefault}{\sfdefault}{bx}{n}



%% file: sections/0.abs.tex
\begin{abstract}

Despite the significant training acceleration and promising performance, Representation Autoencoders (RAEs) are mainly criticized for poor distillation effectiveness.
In this work, we argue that RAE is competent at high-quality one-step generation.
We achieve \textbf{1.48 FID with only 16-epoch distillation} on ImageNet 256 dataset, surpassing various state-of-the-art methods.
To achieve this, we quantitatively study the geometrical behavior of different underlying data spaces.
We conclude that conventional distillation methods heavily rely on priors of plain teacher denoising trajectories, while RAE incurs much complex trajectories with poor properties due to ill anisotropical latent space.
We introduce the recently proposed drifting field as the distillation methodology, which makes use of semantically rich RAE latents and provides direct supervision involving no dependency.
Bridging our \method with previous generative paradigms, we propose several insightful modifications, including \textit{the first extrapolation-based guided sampling pipeline for one-step generation with barely no cost}.
The code will be made publicly available.

\end{abstract}

%% file: sections/1.intro.tex
\section{Introduction}\label{sec:1.intro}

Diffusion and flow-based models~\citep{dickstein2015dm, ho2020ddpm,song2021ddim,lipman2023,liu2023rectifiedflow, peebles2023dit} have become the most dominant paradigm in generative modeling, achieving remarkable success in image~\citep{esser2024sd3,labs2025flux1kontextflowmatching} and video~\citep{blattmann2024svd,liu2024sora} synthesis.
Among the literature, the autoencoders serve as a core role, which compress the data with low information density into compact latent spaces~\citep{rombach2O22ldm,zheng2025diffusion}.
This technique allows stable and efficient training in large-dimensional spaces such as high-resolution images and video.
Representation Autoencoders (RAEs)~\citep{zheng2025diffusion,scale-rae-2026,singh2026raev2}, one of the most representative work, replace the conventional VAE with pre-trained representation modules and corresponding decoders.
The resulting latent spaces allocate much richer semantics and strongly label-wise clustered features, thus enabling better efficiency and performance.

Despite the great potentials, RAEs are centrally criticized for poor compatibility with distillation.
Resembling any diffusion and flow-based models, RAE also relies on the iterative sampling steps along the denoising trajectories by probability-flow ODE.
Conventional methods distill the pre-trained models using the intrinsic priors of the denoising trajectories~\citep{salimans2022pd,song2023cm, wang2024phased, sauer2024add, sauer2024ladd, lin2024sdxllightning, yin2024dmd, yin2024dmd2, zhou2024sid, yin2024improved}, \textit{i.e.}, compressing the lengthy trajectory consisting of hundreds of thousands of discretized steps into one-step or few-step generation.
However, existing attempt~\citep{hu2025raemeanflow} suggests that distillation in RAE latent space might be ineffective and unstable.
Therefore, the unacceptable inference cost and unsatisfactory distillation performance make RAE an impractical alternative in large-scale generation tasks.

In this work, we introduce an efficient and effective pipeline of RAE distillation.
We first claim in \Cref{sec:3.method.2} that, the instability and poor effectiveness of distillation in RAE latent space stem from the severe anisotropy of the patch-wise token distributions.
Consequently, the mismatch for RAE to transport isotropic Gaussian noises to anisotropic underlying latent space would enforces extremely curved denoising trajectories.
On the other hand, conventional distillation methods implicitly rely on plain or almost linear teacher trajectories.
Therefore, the large curvatures of trajectories in RAE greatly increase the difficulty of distillation.
To this end, we propose to distill using the recent Drifting Models~\citep{deng2026generative}.
Drifting Models are technically a one-step generative paradigm, which directly compute a field to evaluate the discrepancy between generated and real distributions.
The drifting field enforces the generated distribution to coincide with the real one and involves no reliance on teacher trajectories, thus could serve as a competent technique for RAE distillation.

However, the original Drifting Model heavily relies on an auxiliary MAE module as overhead extractor to filter and concentrate features in the underlying space, hindering the practical deployment.
By theoretically studying the behavior, we argue that it is the heavily dispersed involved samples that annihilates the effectiveness of drifting field, and initialization with poor semantics fails to provide meaningful warmup.
Nevertheless, the semantically richer RAE latent space is full of well clustered patch embeddings conversely, enabling effective drifting field supervision.
To this end, we propose our RAE distillation method, namely \method, with high efficiency and great performance.
To make a further step, we bridge the theory of Drifting Model and other generative paradigms, introducing several practical modifications with high efficacy.
Inspired by the drifting methodology, we introduce Internal Drifting Guidance (IDG), the first extrapolation-based guided sampling pipeline for one-step generation with barely no cost.

We evaluate the proposed \method on ImageNet $256\times256$.
Benefiting from the effective designed techniques, we achieve \textbf{1.48 FID within 16-epoch distillation}, surpassing or appearing competitive with state-of-the-art one-step or even few-step distillation methods beyond RAE latent spaces~\citep{Kim2024pagoda,piflow,tong2025freeflow, liu2026learning, peng2026facm}.
Meanwhile, compared with the original Drifting Model, our method achieves improved $\text{FD}_\text{DINOv2}$ without requiring any auxiliary module.
These results verify our analyses of both RAE and Drifting Model, and confirm the efficacy to employ drifting fiels in RAE distillation.

%% file: sections/2.related.tex
\section{Related Work}\label{sec:2.related}

\subsection{Flow-Based Models and Distillation}\label{sec:2.related.1}

Flow-based Models, including Diffusion Models~\citep{dickstein2015dm,song2021scoresde,ho2020ddpm} and Flow Matching~\citep{liu2023rectifiedflow,lipman2023}, are designed to formulate the relation between data and noise distributions through differential equations.
Detailedly, the training stage introduces a forward process by corrupting initial data signals with independent noises, while the inference stage involves an iterative denoiser with scores following either SDE or ODE trajectory.
However, approximating the scores of the whole process in a huge pixel space is extremely time-consuming.
To this end, LDM~\citep{rombach2O22ldm} and RAE~\citep{zheng2025diffusion} separately introduce to train flow-based models in a compressed latent space instead of the original pixel space.
Despite the unprecedented capability, the iterative reverse process hinders the sampling efficiency of flow-based models.
To address this issue, many attempts have been made to distill the knowledge from pre-trained models and reduce the denoising steps~\citep{salimans2022pd,song2023cm,luo2023a,yin2024dmd,zhou2024sid}.

\subsection{One-Step Generation Trained From Scratch}\label{sec:2.related.2}

Generative Adversarial Network (GAN) is the most representative paradigm to train a one-step generator from scratch~\citep{goodfellow2014gan}, which simultaneously train a generator and a discriminator via adversarial training.
Recently, however, GAN seems to fall from the grace on synthesis performance due to mode collapse~\citep{arjovsky2017a}.
Another family of methods directly realizes the one-step generation by incorporating the prior SDE or ODE dynamic and overfitting the corresponding trajectories~\citep{song2023cm,song2024ict,geng2025meanflow}.
Drifting Model~\citep{deng2026generative} is a novel framework, which proposes to progressively evolve the generated distribution towards the real one with a specially designed drifting field.
Concretely, the drifting field is computed to evaluate the discrepancy between two distributions via instance-wise distances and contrastive learning.

\subsection{Guided Sampling}

Modern diffusion and flow-based models applies various guidance method to improve sampling quality.
Classifier Guidance~\citep{dhariwal2021adm} and Classifier-Free Guidance~\citep{ho2021cfg} propose to extrapolate the conditional score with classifier gradient and addtionally trained unconditional score respectively, which increase the conditional fidelity.
AutoGuidance~\citep{karras2024autoguidance} generalizes the idea beyond conditional posterior, and introduce extrapolation between a ``bad version'' and the model itself, serving as a new paradigm of guided sampling.
More recently, \citet{zhou2025IG} demonstrates the internal representations of the diffusion transformer are feasible to provide guidance for the backbone output.
\citet{singh2026raev2} further shows that the REPA head~\citep{yu2025repa} can be viewed as a ``weak model'' for AutoGuidance.
Such guidance methods obtained from intermediate layers avoid auxilary models or additional forward passes and enjoys superior computional overhead.

%% file: sections/3.method.tex
\section{Method}\label{sec:3.method}


\subsection{Prerequisites}\label{sec:3.method.1}

Denote by $\mathbf y\sim q(\mathbf y)$ the real data distribution.
Flow matching~\citep{liu2023rectifiedflow}, one of the most representative flow-based models, defines a forward dynamic by linear interpolation, \textit{i.e.},
\begin{align}
\mathbf y_t=(1-t)\mathbf y+t\boldsymbol\epsilon,
\end{align}
in which $t\in[0,1]$ and $\boldsymbol\epsilon\sim\mathcal N(0,\mathbf I)$.
Then the flow matching starts the generation process at $t=1$ from pure Gaussian noises with an underlying velocity term $\mathbf v(\mathbf y_t,t)$:
\begin{align}
\mathrm d\mathbf y_t=\mathbf v(\mathbf y_t,t)\mathrm dt,
\end{align}
in which the velocity term $\mathbf v(\mathbf y_t,t)$ has closed-form expression as below:
\begin{align}
\mathbf v(\mathbf y_t,t)=\mathbb E[\dot{\mathbf y_t}|\mathbf y_t]=\mathbb E[\boldsymbol\epsilon-\mathbf y|\mathbf y_t].
\end{align}
Therefore, flow matching employs a model $\mathbf v_\theta(\mathbf y_t,t)$ to approximate $\mathbf v(\mathbf y_t,t)$ by optimizing the objective below:
\begin{align}
\mathcal L(\theta)=\int_0^1\mathbb E_{\mathbf y,\boldsymbol\epsilon}\|\mathbf v_\theta(\mathbf y_t,t)-(\boldsymbol\epsilon-\mathbf y)\|^2\mathrm dt.
\end{align}

Drifting Model~\citep{deng2026generative} trains a one-step generator from scratch by computing the drifting field between real data samples $\{\mathbf y_i\}$ and synthesized samples $\{\mathbf x_j\}$.
Notably, the drifting field enforces each $\mathbf x_j$ to move away from other $\{\mathbf x_k\}_{k\neq j}$ (negative samples) and towards $\{\mathbf y_i\}$ (positive samples).
Formally, the drifting field $\mathbf V_j$ for each $\mathbf x_j$ could be formulated as below:
\begin{align}\label{eq:drifting}
\mathbf V_j=\sum_{i}\frac{e^{-\frac{1}{\tau}\|\mathbf y_i-\mathbf x_j\|}}{\sum\limits_{l}e^{-\frac{1}{\tau}\|\mathbf y_l-\mathbf x_j\|}}(\mathbf y_i-\mathbf x_j)-\sum_{k\neq j}\frac{e^{-\frac{1}{\tau}\|\mathbf x_k-\mathbf x_j\|}}{\sum\limits_{m\neq j}e^{-\frac{1}{\tau}\|\mathbf x_m-\mathbf x_j\|}}(\mathbf x_k-\mathbf x_j).
\end{align}
\citet{deng2026generative} claim that, when all drifting fields are annihilated, the synthesized distribution would coincide with real distribution.

\subsection{Rethinking the Dynamics of RAE and Drifting Model}\label{sec:3.method.2}

Trajectory-based distillation methods typically rely on the implicit assumption that, the underlying latent space is approximately isotropic, such that the induced flow ODE trajectories remain sufficiently smooth and have moderate curvature~\citep{fan2026scot}.
Motivated by this, we compare in \Cref{fig:curvatures} the curvatures of $32$ flow ODE trajectories in RAE and traditional SD-VAE latent spaces, following the analysis protocol of~\cite{chen2024trajectory}.
The results show that trajectories in the RAE latent space have curvature values approximately \textbf{two orders of magnitude larger} than those of SD-VAE.
In addition, \Cref{tab:anisotropy} reports the average participation ratio (PR) and spectral entropy (SE) per token, further revealing that the RAE latent space is substantially more anisotropic.
These observations suggest that conventional trajectory-based distillation methods can become unstable or inefficient when directly applied to RAEs, motivating the need for alternative approaches that explicitly account for the geometry of the RAE latent space.

\input{figs/anisotropy_layout}

To address this issue, we claim that the recently proposed Drifting Models~\citep{deng2026generative} are well suited for flow distillation in RAE latent spaces.
Drifting Models are designed to narrow the gap between distributions by directly computing the drifting field with two batches of samples instead of matching ODE trajectories.
Therofore, unlike conventional distillation methods, the negative effects by highly curved trajectories in RAE latent spaces are mostly alleviated.

Despite the straightforward methodology, we further argue that RAE latent spaces could conversely complement the training dynamic of Drifting Models.
Recall that in original Drifting Model, empirically it is necessary to involve a supernumerary MAE as the feature extractor.
We below give a theoretical analysis to confirm the necessity of MAE under some ill-posed assumptions.
Corresponding proof is deferred to Appendix~\ref{sec:proof.1}.

\begin{theorem}\label{thm:counterexample}
Let $\{\mathbf y_i\}_{i=1}^d$ be the positive samples uniformly sampled from $d$-dimensional unit sphere $\mathbb S^{d-1}$, and $\{\mathbf x_j\}_{j=1}^d$ be the negative samples uniformly sampled from $[-r,r]^d$ with fixed $r>0$.
Consider the simplified drifting term in \cref{eq:drifting}, \textit{i.e.},
\begin{align}
\mathbf V_j=\mathbf{V}_j^+-\mathbf{V}_j^-,
\end{align}
\begin{align}
\mathbf V_j^+&=\sum_{i=1}^d\frac{e^{-\frac{1}{\tau}\|\mathbf y_i-\mathbf x_j\|}}{\sum\limits_{l=1}^de^{-\frac{1}{\tau}\|\mathbf y_l-\mathbf x_j\|}}\mathbf y_i, \quad\mathbf V_j^-=\sum_{k\neq j}\frac{e^{-\frac{1}{\tau}\|\mathbf x_k-\mathbf x_j\|}}{\sum\limits_{m\neq j}e^{-\frac{1}{\tau}\|\mathbf x_m-\mathbf x_j\|}}\mathbf x_k,
\end{align}
in which $\mathbf V_j^+$ and $\mathbf V_j^-$ are the positive and negative components, respectively.
When $d\rightarrow+\infty$ we claim that (1) $\|\mathbf V_j^+\|^2\approx\frac{1}{d}$, and (2) $\mathbf V_j\rightarrow\mathbf 0$. 
\end{theorem}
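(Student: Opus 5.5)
The plan is to show that in high dimension all pairwise distances concentrate. Then the softmax weights in both $\mathbf V_j^+$ and $\mathbf V_j^-$ become asymptotically uniform, and each component reduces to an empirical average of roughly independent, mean-zero vectors.

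\textbf{Concentration of distances.} For $\mathbf y_i$ uniform on $\mathbb S^{d-1}$ and $\mathbf x_j$ uniform on $[-r,r]^d$, write
$$\|\mathbf y_i-\mathbf x_j\|^2=1+\|\mathbf x_j\|^2-2\langle\mathbf y_i,\mathbf x_j\rangle.$$
The term $\|\mathbf x_j\|^2$ is a sum of $d$ i.i.d.\ variables with mean $r^2/3$, so $\|\mathbf x_j\|^2=\frac{dr^2}{3}+O_p(\sqrt d)$. The cross term satisfies $\langle\mathbf y_i,\mathbf x_j\rangle=O_p(1)$, since $\mathbf y_i$ is a unit vector independent of $\mathbf x_j$ and the coordinates of $\mathbf x_j$ are bounded with variance $r^2/3$. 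Taking square roots, $\|\mathbf y_i-\mathbf x_j\|=\sqrt{dr^2/3}+O_p(1)$. Crucially, the $O_p(1)$ fluctuation in the difference between two indices $i,l$ sharing the same $\mathbf x_j$ comes only from $\langle\mathbf y_i-\mathbf y_l,\mathbf x_j\rangle/\|\mathbf x_j\|=O_p(1/\sqrt d)$, because the common $\|\mathbf x_j\|$ term cancels.

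Similarly, $\|\mathbf x_k-\mathbf x_j\|^2=\|\mathbf x_k\|^2+\|\mathbf x_j\|^2-2\langle\mathbf x_k,\mathbf x_j\rangle$. Here the $k$-dependent part is $\|\mathbf x_k\|^2-2\langle\mathbf x_k,\mathbf x_j\rangle$, which fluctuates by $O_p(\sqrt d)$. Dividing by $2\|\mathbf x_k-\mathbf x_j\|\sim\sqrt{d}$ gives distance differences of order $O_p(1)$, not $o(1)$.

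\textbf{The obstacle, and how to handle it.} Because these fluctuations are $O(1)$ rather than vanishing, the weights are not exactly uniform. I will instead use that the weights are bounded ratios: with $\tau$ fixed and the log-weight differences sub-Gaussian of order $O(1)$, each weight is $\Theta_p(1/d)$. Establishing this rigorously, including the maximum over $d$ indices (which adds a $\sqrt{\log d}$ factor), is the main technical difficulty. The paper's ``$\approx$'' suggests a heuristic approximation is acceptable, so I will argue the weights are $w_i=\frac1d(1+o(1))$ for $\mathbf V^+$, where the fluctuations genuinely vanish. For $\mathbf V^-$, I will bound only $\max_k w_k=O_p(\mathrm{polylog}(d)/d)$.

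\textbf{Claim (1).} With near-uniform weights,
$$\|\mathbf V_j^+\|^2=\sum_{i,l}w_iw_l\langle\mathbf y_i,\mathbf y_l\rangle.$$
The diagonal terms give $\sum_i w_i^2\approx 1/d$. The off-diagonal inner products are $O_p(1/\sqrt d)$ with mean zero. Since the weights depend on the $\mathbf y$'s only through $O(1/\sqrt d)$ perturbations, the off-diagonal sum is $o(1/d)$ up to correlation corrections, which I will control by a first-order expansion of the weights. This yields $\|\mathbf V_j^+\|^2\approx\frac1d$, so in particular $\mathbf V_j^+\to\mathbf 0$.

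\textbf{Claim (2).} In $\mathbf V_j^-=\sum_{k\ne j}w_k\mathbf x_k$ the vectors have large norm $\|\mathbf x_k\|\approx r\sqrt{d/3}$, so smallness must come from cancellation. I will compute
$$\|\mathbf V_j^-\|^2=\sum_k w_k^2\|\mathbf x_k\|^2+\sum_{k\ne m}w_kw_m\langle\mathbf x_k,\mathbf x_m\rangle.$$
The diagonal part is $O(\max_k w_k)\cdot d\,r^2/3$, which is $O(r^2)$ with uniform weights. This is bounded but not vanishing, so the literal statement $\mathbf V_j\to\mathbf 0$ cannot hold in Euclidean norm.

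I therefore expect to interpret the convergence per coordinate, or after normalizing by $\sqrt d$ (equivalently, relative to the scale $\|\mathbf x_j\|$ of the samples). Each coordinate of $\mathbf V_j^-$ is a weighted average of $d-1$ nearly independent bounded mean-zero variables, hence $O_p(1/\sqrt d)\to0$. The same holds for $\mathbf V_j^+$ by Claim (1). Combining gives $\mathbf V_j\to\mathbf 0$ coordinatewise, and $\|\mathbf V_j\|/\|\mathbf x_j\|\to0$. The heuristic takeaway is that the drifting field carries no signal relative to the sample scale.
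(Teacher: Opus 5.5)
Your proposal follows essentially the same route as the paper's proof. Concentration of pairwise distances makes the softmax weights near-uniform, so $\mathbf V_j^{+}$ and $\mathbf V_j^{-}$ reduce to empirical means. Claim (1) then comes from $\mathbb E\|\frac1d\sum_i\mathbf y_i\|^2=\frac1d$. Claim (2) is read exactly as you propose: the paper itself only derives $\|\mathbf V_j\|\to r/\sqrt3$, and then argues this is negligible against the $\Theta(r\sqrt d)$ size of the cube.

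Your handling of the negative weights is more careful than the paper's. The paper asserts $u_i/R_i\to0$ as though $\|\mathbf x\|$ did not grow like $r\sqrt{d/3}$; for fixed $\tau$ this makes its constant $r/\sqrt3$ inexact, although the $\Theta(1)$ conclusion still holds. The one slip in your proposal is that an $O(\sqrt{\log d})$ spread in the log-weights gives $\max_k w_k=e^{O(\sqrt{\log d})}/d$. That is $d^{-1+o(1)}$, not $\mathrm{polylog}(d)/d$. So bound the diagonal term through $\sum_k w_k^2=\Theta_p(1/d)$ rather than through the maximal weight.
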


\Cref{thm:counterexample} shows that when positive samples are overly dispersed, their induced attraction for each generated sample tends to be almost annihilated in high-dimensional spaces, which can drive the generator towards a sub-optimal solution.
Further empirical evidence is reported in \Cref{tab:latent_dispersion}, in which NN-d reports the average nearest-neighbor distance and S-MMD evaluates the maximum mean discrepancy between the sample distribution and a spherical distribution.
It is noteworthy that SD-VAE suggests severely dispersed latent space.
That is to say, to guarantee the stability of Drifting Models, a well-trained MAE, especially the one fine-tuned with classification loss, is involved to yield more concentrated semantic features.

In contrast, RAE enjoys substantially more concentrated latent spaces, suggesting that RAE could serve as a more favorable underlying latent space for Drifting Models and relieve the redundant module.
Furthermore, we note that \Cref{thm:counterexample} also indicates that poor initialization can be detrimental to Drifting Models.
Yet in distillation stage, the pretrained model itself is already a sufficiently good initialization for the generator.
Therefore in the sequel, we focus only on the distillation in RAE latent spaces via drifting dynamic.
More discussions on training Drifting Models from scratch in RAE latent space is addressed at Appendix~\ref{sec:attempts_train_from_scratch}.

\subsection{Distillation via Drifting in RAE Latent Spaces}\label{sec:3.method.3}

We now introduce our method for distilling flow-based models in RAE latent spaces using Drifting Models.
Let $\mathbf{v}_\theta(\mathbf{y}, t)$ denote a pretrained flow model, we form a one-step generator to distill as:
\begin{align}
\mathbf{G}_\theta(\mathbf{z})=\mathbf{z}-\mathbf{v}_\theta(\mathbf{z},1), 
\quad \mathbf{z}\sim\mathcal{N}(\mathbf{0},\mathbf{I}),
\end{align}
where $\theta$ is initialized from the pretrained flow model.
Given a batch of latent $\{\mathbf{y}_i\}_{i=1}^{N_\mathrm{pos}}$ sampled from the real distribution, we write
\begin{align}
\mathbf{y}_i=(\mathbf{y}_i^1,\dots,\mathbf{y}_i^c,\dots,\mathbf{y}_i^C),
\end{align}
where each $\mathbf{y}_i^c\in\mathbb{R}^D$ denotes the $c$-th patch token, $C$ is the number of patch tokens, and $D$ is the hidden size.
The token-wise output of the generator is defined analogously as $\mathbf{G}_\theta^c(\mathbf{z})$.

As suggested in \Cref{sec:3.method.2}, RAE latent spaces already provide semantically meaningful and sufficiently concentrated features for drifting-based training, requiring no auxiliary feature extractor.
Therefore, it is feasible to directly define drifting objective on each tokenof the RAE latent as
\begin{align}\label{eq:our_drifting}
L(\theta)=
\sum_{j=1}^{N_{\mathrm{neg}}}\sum_{c=1}^{C}
\left\|
\mathbf{G}_\theta^c(\mathbf{z}_j)
-
\operatorname{sg}\left[
\mathbf{G}_\theta^c(\mathbf{z}_j)+\tilde{\mathbf{V}}_j^c
\right]
\right\|^2,
\qquad 
\mathbf{z}_j\sim\mathcal{N}(\mathbf{0},\mathbf{I}),
\end{align}
where $N_\mathrm{neg}$ is the number of generated samples, $\operatorname{sg}[\cdot]$ denotes the stop-gradient operation, and $\tilde{\mathbf{V}}_j^c=\frac{\mathbf{V}_j^c}{\|\mathbf{V}_j^c\|}$ is the \textit{normalized drifting field} with $\mathbf{V}_j^c$ computed by:
\begin{align}
\mathbf{V}_j^c
\!=&
\sum_{i=1}^{N_\mathrm{pos}}\!
\frac{
e^{-\frac{1}{\tau}{\|\mathbf{y}_i^c-\mathbf{G}_\theta^c(\mathbf{z}_j)\|}}\left(\mathbf{y}_i^c-\mathbf{G}_\theta^c(\mathbf{z}_j)\right)
}{
\sum\limits_{l=1}^{N_{\mathrm{pos}}}
e^{-\frac{1}{\tau}{\|\mathbf{y}_l^c-\mathbf{G}_\theta^c(\mathbf{z}_j)\|}}
}
-\!
\sum\limits_{k\neq j}\!
\frac{
e^{-\frac{1}{\tau}{\|\mathbf{G}_\theta^c(\mathbf{z}_k)-\mathbf{G}_\theta^c(\mathbf{z}_j)\|}}\left(\mathbf{G}_\theta^c(\mathbf{z}_k)-\mathbf{G}_\theta^c(\mathbf{z}_j)\right)
}{
\sum\limits_{l\neq j}
e^{-\frac{1}{\tau}{\|\mathbf{G}_\theta^c(\mathbf{z}_l)-\mathbf{G}_\theta^c(\mathbf{z}_j)\|}}
}
.
\end{align}

Beyond the objective in \cref{eq:our_drifting}, we subsequently raise three pillars of modification to further improve the drifting dynamic.
Notably, the insights build upon theoretical perspectives of bridging Drifting Models with Diffusion-GAN~\citep{wang2023diffusiongan}.
Concretely, the drifting field can be recognized as the supervision of the optimal discriminator in GAN literature.
Detailed descriptions are located in Appendix~\ref{sec:appendix:diffusiongan}.

\paragraph{Softmax dimension.}
Recall that original Drifting Model proposed a bi-directional softmax trick which is claimed to improve training stability.
Yet it fails to exactly follow the gradient direction induced by the corresponding potential any longer.
To this end, we retain only one single softmax over sample indices during the computation of drifting field.
This naturally arises from differentiating a log-sum-exp potential, thus is more consistent with the theoretical formulation.

\paragraph{Perturbing inputs with noises.}
Original Drifting Models compute drifting field using the raw version of generated samples.
However, previous works suggest that this is highly likely to lead to unstable training and gradient vanishing due to non-intersection or transversal intersection between real data and generated manifolds~\citep{arjovsky2017a,arjovsky2017wgan}.
We therefore replace $\mathbf{G}_\theta^c(\mathbf{z}_j)$ with a slightly perturbed version:
\begin{align}
\bar{\mathbf{G}}_\theta^c(\mathbf{z}_j)
=
\mathbf{G}_\theta^c(\mathbf{z}_j)+\tau \mathbf{n}_j^c,
\end{align}
where $\mathbf{n}_j^c\in\mathbb{R}^D$ is a random-direction vector whose norm is sampled from a standard Laplace distribution, and $\tau$ is the same temperature hyperparameter used in the drifting field.
Note that this operation resembles the strategy of Diffusion-GAN~\citep{wang2023diffusiongan}, thus does not affect the training convergence while alleviating gradient vanishing and improving the training robustness.

\paragraph{Partially detaching negative samples.}
Recall that \citet{deng2026generative} concludes that enlarging $N_\text{pos}$ and $N_{\mathrm{neg}}$ is benefitial.
%
However, even when each GPU processes only one class, the per-GPU memory budget limits the number of gradient-carrying negative samples to at most $N_{\mathrm{neg}}=64$.
To further increase the number of samples with no additional GPU consumption, we propose to partially detach the generated samples.
These auxiliary samples aim to more accurately approximate the generated distribution, which improves the stability of distillation.
To summarize, by fixing $N_\text{pos}=256$ and $64$ negative samples to backpropogate, we employ $N_{\mathrm{extra\_neg}}=192$ negative samples to detach and equivalently achieve $N_\mathrm{total\_neg}=256$.

\paragraph{CLS token and registers.}
We follow~\cite{caron2021dino} to concatenate a CLS token and four extra register tokens with the pachitified sequence.
Specifically, CLS token is taken to compute the drifting loss, while the extra registers have no target and only provide redundancy.

\subsection{Further Improvements via Internal Drifting Guidance}\label{sec:3.method.4}

Unlike multi-step generators, it is well recognized that one-step generation is incompatible with extrapolation-based guided sampling.
This stems from the fact that guidance acts via intermediate predicted score functions, while one-step generator directly outputs denoised samples and involves no scores.
Nevertheless, we here propose Internal Drifing Guidance (IDG), which is to the best of our knowledge the first attempt of guided sampling on one-step generation.

Inspired by previous works~\citep{zhou2025IG,singh2026raev2}, we design the guidance with internal representations from middle Transformer block.
Then equipped with an projection head $\mathbf P_\psi$, we performance guidance during inference with guidance strength $\omega$ by
\begin{align}\label{IDG}
\mathbf{G}_\theta^{\text{IDG}}(\mathbf{z}, \omega)
=
\omega \mathbf{G}_\theta(\mathbf{z})
-
(\omega-1)\mathbf{P}_\psi(\phi(\mathbf{z})).
\end{align}
Recall that IG~\citep{zhou2025IG} introduces the same MSE loss to supervise $\mathbf P_\psi$, and assumes that with fewer Transformer blocks $\mathbf{P}_\psi(\phi(\mathbf{z}))$ is highly likely to approximate a ``bad version'' of $\mathbf{G}_\theta(\mathbf{z})$.
That is to say, IG is techniquely some variant of AutoGuidance~\citep{karras2024autoguidance}.
However, $\mathbf{P}_\psi(\phi(\mathbf{z}))$ is not \emph{theoretically guaranteed} to be weaker than $\mathbf{G}_\theta(\mathbf{z})$.
Besides, the joint training methodology might weaken the performance of the backbone.

To this end, we propose to optimize $\mathbf P_\psi$ through the drifting loss, in which $\mathbf{G}_\theta(\mathbf{z})$ serves as the positive terms.
This provides an explicit and strict constraint on the quality order between $\mathbf{P}_\psi(\phi(\mathbf{z}))$ and $\mathbf{G}_\theta(\mathbf{z})$.
Further detaching the intermediate $\phi(\mathbf{z})$ from the backbone, the guidance pipeline would have no negative influence on the distillation performance.

It is noteworthy that our IDG involves neither auxiliary noise-data pairs nor additional distillation stage, and could be implemented with barely no cost during inference.
Most notably, benefiting from the drifting objective on $\mathbf{P}_\psi$, the extrapolation in \cref{IDG} could be interpreted as a one-negative drifting from theoretical perspective, \textit{i.e.}, $\mathbf{G}_\theta(\mathbf{z})$ is drifted away from the negative $\mathbf{P}_\psi(\phi(\mathbf{z}))$.
Extensive ablations on efficacy and efficiency of IDG are addressed in~\Cref{app:idg_ablation}.

\begin{figure}[t]
    \centering
    \includegraphics[width=0.93\linewidth]{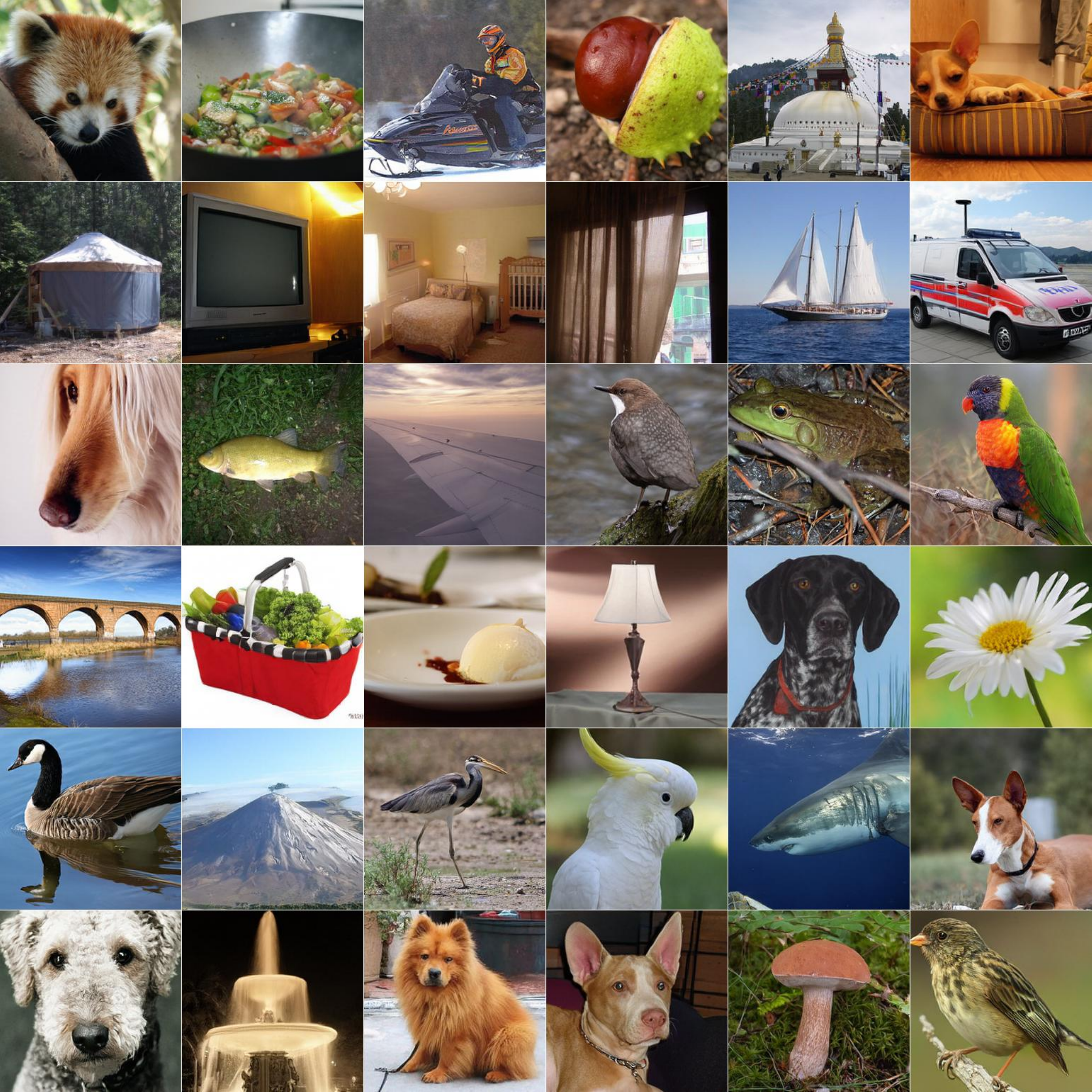}
    \caption{\textbf{Visualizations} of generated samples from distilled $\text{DiT}^\text{DH}\text{-XL}$ (FID = 1.48).}
    \label{fig:main_vis}
    \vspace{-10pt}
\end{figure}

%% file: figs/anisotropy_layout.tex
\begin{figure}[t]
\centering
\begin{minipage}[t]{0.49\linewidth}
\centering
\vspace{-5pt}
\includegraphics[width=0.98\linewidth]{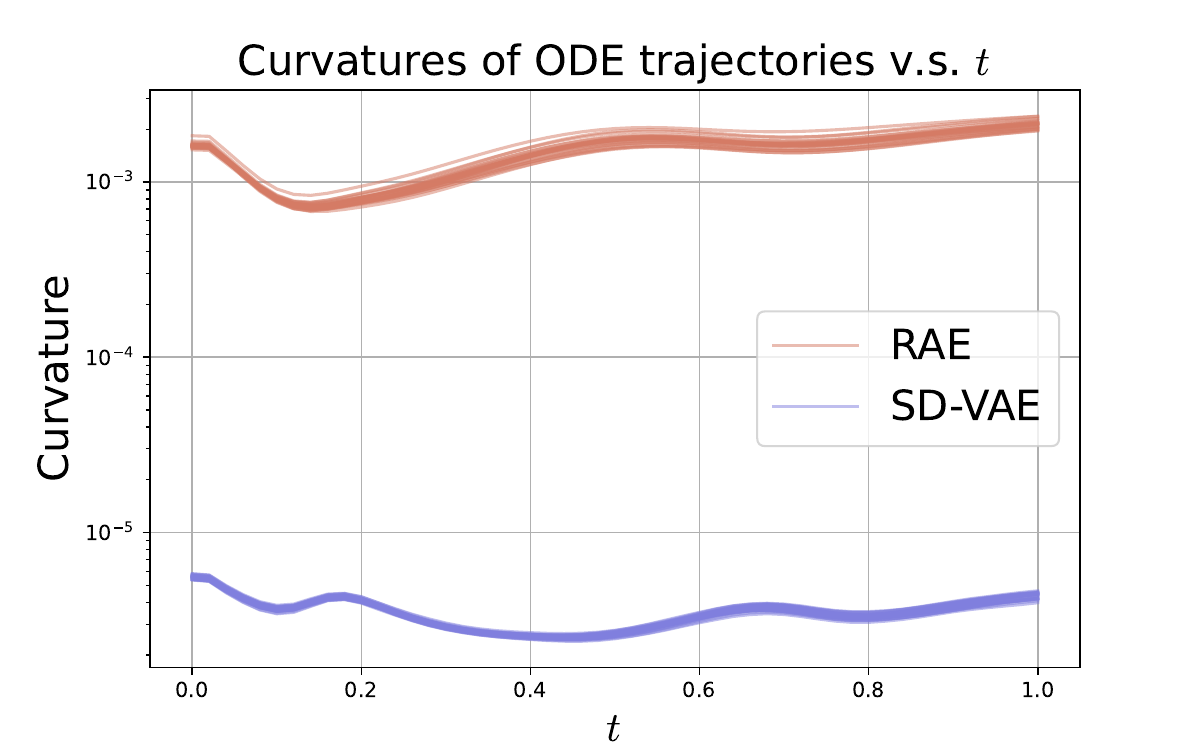}
\vspace{-2pt}
\caption{
    \textbf{Curvatures} of ODE trajectories in RAE and SD-VAE latent spaces.
}
\label{fig:curvatures}
\end{minipage}
\hfill
\begin{minipage}[t]{0.48\linewidth}
\centering
\vspace{0pt}
\captionof{table}{
    \textbf{Isotropy} statistics of latent features.
}
\label{tab:anisotropy}
\SetTblrInner{rowsep=1.55pt}                     
\SetTblrInner{colsep=12.9pt}                     
\footnotesize
\begin{tblr}{
    cell{1-3}{2-3}={halign=c,valign=m},          
    cell{1-3}{1}={halign=l,valign=m},            
    hline{1,2,4}={1-3}{1.0pt},                   
}
Latent space             & PR $(\uparrow)$ & SE $(\uparrow)$ \\
\textcolor{myblue}{SD-VAE} &      \bf 0.2068 &      \bf 0.3781 \\
\textcolor{myred}{RAE}     &          0.0630 &          0.2110 \\
\end{tblr}
\vspace{10pt}
\captionof{table}{
    \textbf{Dispersion} statistics of latent features.
}
\vspace{6pt}
\label{tab:latent_dispersion}
\SetTblrInner{rowsep=1.55pt}                     
\SetTblrInner{colsep=8.5pt}                      
\footnotesize
\begin{tblr}{
    cell{1-3}{2-3}={halign=c,valign=m},          
    cell{1-3}{1}={halign=l,valign=m},            
    hline{1,2,4}={1-3}{1.0pt},                   
}
Latent space             & NN-d $(\downarrow)$ & S-MMD $(\uparrow)$ \\
\textcolor{myblue}{SD-VAE} &            4.3083 &              0.00576 \\
\textcolor{myred}{RAE}     &        \bf 1.0664 &          \bf 0.07913 \\
\end{tblr}
\end{minipage}
\vspace{-4pt}
\end{figure}

%% file: sections/4.exp.tex
\section{Experiments}\label{sec:4.exp}

We evaluate the proposed method on class-conditional image generation with representation autoencoders. 
\Cref{sec:4.exp.1} describes the experimental setup, with detailed hyperparameter settings provided in Appendix~\ref{app:hyperparameters}. 
\Cref{sec:4.exp.2} presents the main results, and \Cref{sec:4.exp.3} provides ablation studies on the modifications introduced in \Cref{sec:3.method.3}.

\subsection{Experimental Setup}\label{sec:4.exp.1}

\noindent\textbf{Dataset and pretrained checkpoints.}
We evaluate our method on ImageNet $256\times256$~\citep{deng2009imagenet} using $\text{DiT}^{\text{DH}}\text{-XL}$ and $\text{DiT}^{\text{DH}}\text{-L}$ from the official RAE codebase\footnote{https://github.com/bytetriper/RAE}~\citep{zheng2025diffusion}.
Specifically, we directly use the released $\text{DiT}^{\text{DH}}\text{-XL}$ checkpoint, and train $\text{DiT}^{\text{DH}}\text{-L}$ ourselves strictly following the official implementation.

\noindent\textbf{Evaluation metrics.}
We report FID~\citep{heusel2017fid} to evaluate generation quality.
In addition, we use Precision and Recall~\citep{kynkaanniemi2019pr} to measure sample fidelity and diversity, respectively.
All metrics are computed using $50{,}000$ generated samples.
We adopt class-balanced sampling~\cite{zheng2025diffusion}, \textit{i.e.}, generating $50$ images for each of the $1{,}000$ ImageNet classes.

\noindent\textbf{Training configurations.}
Following~\cite{deng2026generative}, the Drifting field is computed at three temperature values, $\{0.02, 0.05, 0.2\}$, and the final objective is obtained by averaging the corresponding losses. 
All tokens are normalized by the average norm of real samples within the minibatch.
We fix $N_{\mathrm{class}}=32$ and train for $10{,}000$ steps.
AdamW is used with $\beta_1=0.9$, $\beta_2=0.95$, and weight decay is set to $0$.
For stable distillation in RAE latent spaces, we linearly decay the learning rate from $3\times10^{-5}$ to $3\times10^{-7}$ over training and apply gradient clipping with a threshold of $1.0$. For IDG, we concatenate the features from layers $8$ and $14$, and then apply a projection head consisting of a three-layer MLP with SiLU activations. 
The guidance weight is default to $\omega=1.4$.
We retain the CLS tokens produced by the DINO encoder as positive samples for the generated CLS tokens.
Further implementation details are available in Appendix~\ref{app:hyperparameters}.

\subsection{Main Results}\label{sec:4.exp.2}

\input{tables/configABC}

\Cref{tab:modifications} reports the practical effects of the modifications introduced in~\Cref{sec:3.method.3} with $\text{DiT}^\text{DH}\text{-XL}$. Here the baseline uses the additional $y$-softmax and sets $N_{\mathrm{pos}}=N_{\mathrm{neg}}=64$ following the original Drifting Model~\citep{deng2026generative}.
Perturbing inputs with noises and removing the additional $y$-softmax make the implementation more consistent with theoretical analysis, while also improving empirical performance. 
Further increasing the numbers of positive and negative samples yields the best result.

Final generation results are reported in Table~\ref{tab:main_results}. 
Our proposed \method{} achieves an FID of $1.82$ with $\text{DiT}^\text{DH}\text{-L}$ and $1.48$ with $\text{DiT}^\text{DH}\text{-XL}$ using a single generation step, outperforming the previous distillation method in RAE latent spaces, while remaining competitive with distilled models in other spaces.
Moreover, \method{} reaches an FID of $1.48$ within only $16$ epochs, demonstrating favorable training efficiency.
These results suggest that Drifting provides an effective framework for distilling flow-based models trained in representation spaces.

Compared with the original Drifting Model, \method{} achieves comparable FID while eliminating the need for an auxilary MAE. 
This is consistent with our analysis in \Cref{thm:counterexample}: the auxiliary MAE in the original Drifting Model helps mitigate the effect of overly dispersed latents, while the more compact RAE latent space provides a more favorable geometry for Drifting-based distillation. Qualitative results are shown in~\Cref{fig:main_vis}, with more visualizations provided in Appendix~\ref{sec:more_visualizations}.

\input{tables/main_results_decrease_by_NFE}

\subsection{Ablation Studies}\label{sec:4.exp.3}

\noindent\textbf{Input perturbation and softmax dimension.}
As shown in~\Cref{tab:ablation_softmax_and_noise}, 
removing the $y$-softmax and including input perturbations both lead to improvement over the original drifting baseline. 
Simultaneously applying both modifications yields similar performance as solely removing the $y$-softmax, while we observe \textbf{more stable distillation} with input perturbation. 
We thus apply both modifications in all our experiments.

\noindent\textbf{Increasing the number of samples.}
\Cref{tab:ablation_num_samples} study the effect of increasing the number of positive and negative samples used to estimate the Drifting field. 
Increasing only the number of positive samples or only the number of negative samples leads to substantial performance degradation, while the best performance is achieved when positive samples and negative samples are balanced. 
This observation suggests that an imbalanced estimation of the attractive and repulsive components can bias the resulting Drifting direction.

\noindent\textbf{Increasing the number of gradient-carrying negative samples.}
With $N_{\mathrm{total\_neg}}=256$ fixed, we further vary the number of gradient-carrying negatives as $N_{\mathrm{neg}}\in\{64,128,192,256\}$. 
As shown in \Cref{tab:ablation_more_bp}, increasing $N_{\mathrm{neg}}$ does not bring performance gains.
This indicates that the main benefit of using more negative samples comes from improving the empirical approximation of the generated distribution, rather than from applying gradients to more generated samples. 
Such conclusion is also implementation-friendly, as detached negative samples require no gradient storage.

\begin{figure}[t]
    \centering
    \includegraphics[width=0.44\linewidth]{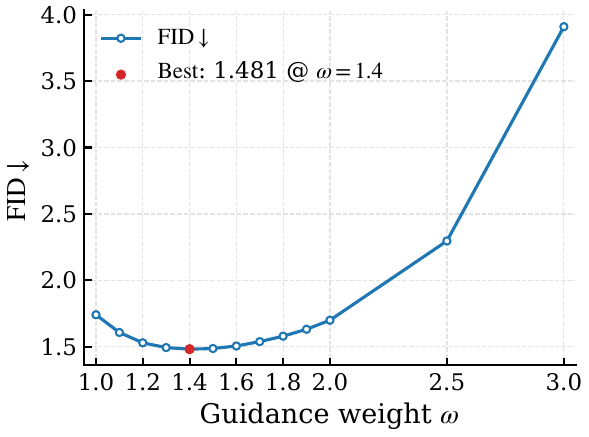}\hfill
    \includegraphics[width=0.44\linewidth]{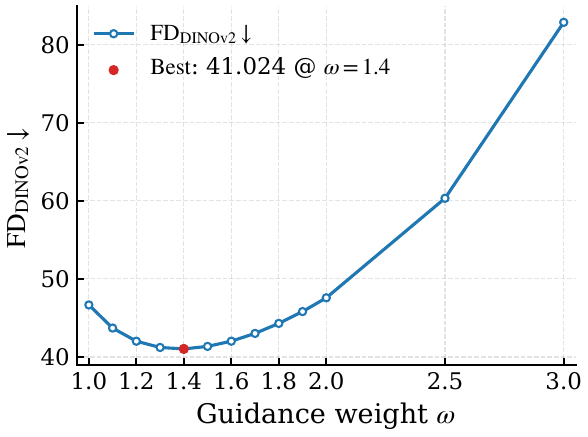}
    \vspace{-4pt}
    \caption{Ablation on the guidance weight $\omega$ of IDG.}
    \label{fig:guidance_weight_ablation}
\end{figure}

\noindent\textbf{Guidance weight.}
\Cref{fig:guidance_weight_ablation} reports FID and $\mathrm{FD}_{\mathrm{DINOv2}}$ under different guidance weights.
Similar to other guidance methods, both FID and $\text{FD}_\text{DINOv2}$ first decrease and then increase as the guidance weight increases, with the optimal point for both metrics occurring at $\omega=1.4$.



\input{tables/ablation_BandC}

\subsection{Limitations and Future Work}

Although this work eliminates the need for an additional MAE for Drifting-based distillation in RAE spaces, training Drifting Models from scratch without an auxiliary MAE remains an open problem.
Developing native Drifting training methods that do not rely on auxiliary modules is an important direction. 
Moreover, the need for many same-class positive samples at each Drifting update may hinder scalability, especially in text-to-image generation settings.
Reducing this dependence on abundant positive samples is also a promising direction for future research.


%% file: tables/configABC.tex
\setlength{\columnsep}{16pt}
\begin{wraptable}{R}{0.5\textwidth}
\centering
\vspace{-9pt}
\caption{
Effect of modifications proposed in \Cref{sec:3.method.3}.
}
\vspace{-3pt}
\label{tab:modifications}
\footnotesize
\SetTblrInner{rowsep=1.75pt}                     
\SetTblrInner{colsep=3.0pt}                      
\begin{tblr}{
  colspec={c l c},
  row{1}={font=\bfseries},
  hline{1,Z}={1pt},
  hline{2}={0.5pt},
}
Config & Modifications & FID ($\downarrow$) \\
A & Baseline & 2.00 \\
B & $+$ Input pertubation, $-$ $y$-softmax & 1.86 \\
C & $+$ 192 pos., $+$ 192 detached neg.  & \textbf{1.74} \\
\end{tblr}
\vspace{-5pt}
\end{wraptable}

%% file: tables/main_results_decrease_by_NFE.tex
\begin{table*}[t]
\centering
\caption{
Main results on ImageNet $256\times256$.
$^\dag$ indicates distillation methods. 
For each group separately, i.e., vanilla diffusion models, few-step models, and \textit{one-step} models, \textbf{bold} indicates the best result, and \underline{underlining} indicates the second-best result.
}
\vspace{-4pt}
\label{tab:main_results}
\footnotesize
\SetTblrInner{rowsep=1.0pt}                     
\SetTblrInner{colsep=6.0pt}                      
\begin{tblr}{
    cell{1-28}{2-7}={halign=c,valign=m},         
    cell{1-28}{1}={halign=l,valign=m},           
    hline{1-2,29}={1-7}{1.0pt},                  
    hline{7,14}={1-7}{},                  
    hline{27}={1-7}{dotted},                  
}
\textbf{Method}                                            & \textbf{Space} & NFE & Epochs & FID ($\downarrow$) & Prec. ($\uparrow$) & Rec. ($\uparrow$) \\
ADM-G~\citep{dhariwal2021adm}                              & Pixel          & 250 &    400 &               4.59 &   \underline{0.82} &              0.52 \\
DiT-XL/2~\citep{peebles2023dit}                            & VAE            & 250 &   1400 &               2.27 &           \bf 0.83 &              0.57 \\
SiT-XL/2~\citep{ma2024sit}                                 & VAE            & 250 &   1400 &               2.06 &   \underline{0.82} &              0.59 \\
LightningDiT-XL~\citep{Yao2025ReconstructionVG}            & VAE            & 250 &    800 &   \underline{1.35} &               0.79 &  \underline{0.65} \\
DiT$^{\mathrm{DH}}$-XL~\citep{zheng2025diffusion}          & RAE            &  50 &    800 &           \bf 1.13 &               0.78 &          \bf 0.67 \\
IMM-XL/2 ($\omega=1.5$)~\citep{zhou2025inductive}          & VAE            &   8 &   3837 &               1.99 &                 -- &                -- \\
STEI$^\dag$~\citep{liu2026learning}                        & VAE            &   8 &     20 &               1.96 &                 -- &                -- \\
MeanFlow-XL/2+~\citep{geng2025meanflow}                    & VAE            &   2 &   1000 &               2.20 &                 -- &                -- \\
iMF-XL/2~\citep{geng2025improved}                          & VAE            &   2 &    800 &   \underline{1.61} &               \textbf{0.79} &              \textbf{0.63} \\
$\pi$-Flow$^\dag$~\citep{piflow}                           & VAE            &   2 &     76 &               1.97 &                 -- &                -- \\
FACM$^\dag$~\citep{peng2026facm}                           & VAE            &   2 &     60 &           \bf 1.32 &                 -- &                -- \\
MF-RAE$^\dag$~\citep{hu2025raemeanflow}                    & RAE            &   2 &     41 &               1.89 &                 -- &                -- \\
BigGAN~\citep{brock2019biggan}                             & Pixel          &   1 &     -- &               6.95 &      \textbf{0.89} &              0.38 \\
StyleGAN-XL~\citep{sauer2022styleganxl}                    & Pixel          &   1 &     -- &               2.30 &               0.78 &              0.53 \\
Drifting Model, L/16~\citep{deng2026generative}            & Pixel          &   1 &    640 &               1.61 &   \underline{0.81} &              0.60 \\
MeanFlow-H/16~\citep{lu2026one}                            & Pixel          &   1 &    320 &               2.29 &               0.80 &              0.59 \\
PaGoDa$^\dag$~\citep{Kim2024pagoda}                        & Pixel          &   1 &     -- &               1.56 &                 -- &              0.59 \\
IMM-XL/2 ($\omega=1.5$)~\citep{zhou2025inductive}          & VAE            &   1 &   3837 &               8.05 &                 -- &                -- \\
STEI$^\dag$~\citep{liu2026learning}                        & VAE            &   1 &     20 &               7.12 &                 -- &                -- \\
MeanFlow-XL/2~\citep{geng2025meanflow}                     & VAE            &   1 &    240 &               3.43 &                 -- &                -- \\
iMF-XL/2~\citep{geng2025improved}                          & VAE            &   1 &    800 &               1.82 &               0.78 &              0.63 \\
Drifting Model, L/2~\citep{deng2026generative}             & VAE            &   1 &   1280 &               1.54 &               0.79 &              0.63 \\
$\pi$-Flow$^\dag$~\citep{piflow}                           & VAE            &   1 &    448 &               2.85 &                 -- &                -- \\
FreeFlow-XL/2$^\dag$~\citep{tong2025freeflow}              & VAE            &   1 &    300 &      \textbf{1.45} &                 -- &                -- \\
MF-RAE$^\dag$~\citep{hu2025raemeanflow}                    & RAE            &   1 &     41 &               2.03 &                 -- &                -- \\
\method (DiT$^{\mathrm{DH}}$-L)$^\dag$                     & RAE            &   1 &     16 &               1.82 &               0.77 &  \underline{0.64} \\
\method (DiT$^{\mathrm{DH}}$-XL)$^\dag$                    & RAE            &   1 &     16 &   \underline{1.48} &               0.77 &     \textbf{0.66} \\
\end{tblr}
\vspace{-10pt}
\end{table*}

%% file: tables/ablation_BandC.tex
\begin{table}[t]
    \label{tab:ablation_configs}
    \begin{minipage}[t]{0.48\linewidth}
    \centering
    \captionof{table}{
        Ablation on input pertubation and removing $y$-softmax.
    }
    \vspace{-9pt}
    \label{tab:ablation_softmax_and_noise}
    \input{tables/ablation_configB}
    \vspace{0pt}
    \centering
    \captionof{table}{
        Ablation on $N_{\mathrm{extra\_neg}}$.
    }
    \vspace{0pt}
    \label{tab:ablation_more_bp}
    \input{tables/ablation_more_bp}
    \end{minipage}
    \hfill
    \begin{minipage}[t]{0.48\linewidth}
        \centering
        \captionof{table}{
            Ablation on different combinations of $N_{\mathrm{pos}}$, $N_{\mathrm{total\_neg}}$, and $N_{\mathrm{extra\_neg}}$.
            We highlight the \textbf{balanced} configurations with $N_{\mathrm{pos}}=N_{\mathrm{total\_neg}}$ in color \textcolor{gray}{gray}.
        }
        \vspace{-2pt}
        \label{tab:ablation_num_samples}
        \input{tables/ablation_configC}
    \end{minipage}
\end{table}

%% file: tables/ablation_configB.tex
\footnotesize
\SetTblrInner{rowsep=0.4pt}                     
\SetTblrInner{colsep=4.3pt}                      
\begin{tblr}{
  hline{1,2,Z}={1pt},
  cell{1-5}{1-3}={halign=c,valign=m},         
}
Input Pertubation             & Remove $y$-softmax            & FID ($\downarrow$) \\
\textcolor{red}{\ding{55}}    & \textcolor{red}{\ding{55}}    & 2.00               \\
\textcolor{red}{\ding{55}}    & \textcolor{green}{\checkmark}   & 1.90     \\
\textcolor{green}{\checkmark} & \textcolor{red}{\ding{55}}    & 1.88               \\
\textcolor{green}{\checkmark} & \textcolor{green}{\checkmark}   & \textbf{1.86}      \\
\end{tblr}

%% file: tables/ablation_more_bp.tex
\footnotesize
\SetTblrInner{rowsep=0.4pt}                     
\SetTblrInner{colsep=8.0pt}                      
\begin{tblr}{
  hline{1,2,Z}={1pt},
  cell{1-5}{1-4}={halign=c,valign=m},         
  cell{2}{1,2}={r=4}{},                        
}
$N_{\mathrm{pos}}$ & $N_{\mathrm{total\_neg}}$ & $N_\mathrm{extra\_neg}$ & FID ($\downarrow$) \\
256                & 256                       & 192                     &               1.74 \\
                   &                           & 128                     &               1.77 \\
                   &                           & 64                      &               1.75 \\
                   &                           & 0                       &               1.74 \\
\end{tblr}

%% file: tables/ablation_configC.tex
\footnotesize
\SetTblrInner{rowsep=1.45pt}                     
\SetTblrInner{colsep=8.0pt}                      
\begin{tblr}{
  cell{1-8}{1-4}={halign=c,valign=m},          
  hline{1,2,Z}={1pt},
  hline{5,7}={0.5pt},
  cell{2,6,8}{2-4}={bg=lightgray!35},          
  cell{2}{1}={r=3}{},                          
  cell{5,7}{1}={r=2}{},                        
}
$N_{\mathrm{pos}}$ & $N_{\mathrm{total\_neg}}$ & $N_\mathrm{extra\_neg}$ & FID ($\downarrow$) \\
64                 & 64                        & 0                       &               1.86 \\
                   & 128                       & 64                      &               2.74 \\
                   & 256                       & 192                     &               4.66 \\
128                & 64                        & 0                       &               2.20 \\
128                & 128                       & 64                      &               1.82 \\
256                & 64                        & 0                       &               4.49 \\
256                & 256                       & 192                     &      \textbf{1.74} \\
\end{tblr}

%% file: sections/5.conclusion.tex
\section{Conclusion}\label{sec:5.conclusion}

In this paper, we propose Drifting-based distillation for flow-based models in RAE latent spaces. 
We first quantitatively analyze the geometry of RAE latent spaces and theoretically study the dynamics of Drifting, showing that the highly curved ODE trajectories in RAE spaces make trajectory-based distillation challenging, while their semantically concentrated representations allow Drifting to operate without an auxilary MAE. 
Motivated by a connection between Drifting Models and the Diffusion-GAN framework, we introduce several practical modifications that improve distillation performance and the first extrapolation-based guided sampling method for one-step generation with barely no cost.
Experiments on ImageNet $256\times256$ demonstrate that our method outperforms previous distillation methods in RAE latent spaces and achieves performance comparable to the original Drifting Model and distilled models in other spaces.

%% file: sections/6.ref.tex
{
\bibliographystyle{iclr2027_conference}
\bibliography{ref.bib}
}

%% file: sections/8.appendix.tex
\section*{Appendix}\label{sec:appendix}

In this appendix, we provide additional technical details and discussions omitted from the main text. 
Appendix~\ref{sec:proof} presents the detailed proof of~\Cref{thm:counterexample} in~\Cref{sec:3.method.2}, and establishes the connection between Drifting Models and Diffusion-GAN. 
Appendix~\ref{sec:implementation_details} provides further implementation details and more ablation studies. Appendix~\ref{sec:attempts_train_from_scratch} presents additional attempts and discussions on training Drifting Models from scratch.
Appendix~\ref{sec:more_visualizations} presents additional qualitative generation results.

\section{Proofs and Derivatives}\label{sec:proof}

\subsection{Proof of~\Cref{thm:counterexample}}\label{sec:proof.1}

\begin{proof}
Note that the drifting term $\mathbf V_j$ can be reformulated as below:
\begin{align}
\mathbf V_j&=\mathbf V_j^+-\mathbf V_j^-,
\end{align}
\begin{align}
\mathbf V_j^+&=\sum_{i=1}^d\frac{e^{-\frac{1}{\tau}\|\mathbf y_i-\mathbf x_j\|}}{\sum\limits_{l=1}^de^{-\frac{1}{\tau}\|\mathbf y_l-\mathbf x_j\|}}\mathbf y_i, \quad\mathbf V_j^-=\sum_{k\neq j}\frac{e^{-\frac{1}{\tau}\|\mathbf x_k-\mathbf x_j\|}}{\sum\limits_{m\neq j}e^{-\frac{1}{\tau}\|\mathbf x_m-\mathbf x_j\|}}\mathbf x_k.
\end{align}
We first compute the effect of the negative part $\mathbf V_j^-$.
To simplify the derivation, we could reformulate the case, \textit{i.e.}, for uniformly sampled $\mathbf z_1,\mathbf z_2,\cdots,\mathbf z_m,\mathbf x \stackrel{\text{i.i.d.}}{\sim} \mathcal U_{[-r,r]^d}$, we compute the behavior of the following $\mathbf z$:
\begin{align}
\mathbf z=\sum_{i=1}^m\frac{e^{-\frac{1}{\tau}\|\mathbf z_i-\mathbf x\|}}{\sum\limits_{l=1}^me^{-\frac{1}{\tau}\|\mathbf z_l-\mathbf x\|}}\mathbf z_i.
\end{align}
Note that $\|\mathbf z_i-\mathbf x\|=\sqrt{\|\mathbf x\|^2+\|\mathbf z_i\|^2-2\langle\mathbf x,\mathbf z_i\rangle}$.
Let $R_i=\sqrt{\|\mathbf x\|^2+\|\mathbf z_i\|^2}$ and $u_i=\langle\mathbf x,\mathbf z_i\rangle$, then by Taylor's series we have
\begin{align}\label{eq:taylor_general}
\|\mathbf z_i-\mathbf x\|&=R_i\sum_{k=0}^{\infty}\binom{1/2}{k}\left(\frac{-2u_i}{R_i^2}\right)^k=R_i+R_i\sum_{k=1}^{\infty}\binom{1/2}{k}\left(\frac{-2u_i}{R_i^2}\right)^k,
\end{align}
where the convergence radius is $\left|\frac{2u_i}{R_i^2}\right|<1$.
Note that
\begin{align}
\left|\frac{2u_i}{R_i^2}\right|=\left|\frac{2\langle\mathbf x,\mathbf z_i\rangle}{\|\mathbf x\|^2+\|\mathbf z_i\|^2}\right|\leqslant\frac{2\|\mathbf x\|\|\mathbf z_i\|}{\|\mathbf x\|^2+\|\mathbf z_i\|^2}\leqslant1,
\end{align}
and the equality holds if and only if $\mathbf x=\pm\mathbf z_i$.
That is to say,~\cref{eq:taylor_general} holds almost everywhere.
Then we have
\begin{align}
e^{-\frac{1}{\tau}\|\mathbf z_i-\mathbf x\|}&=e^{-\frac{R_i}{\tau}}e^{-\frac{R_i}{\tau}\sum\limits_{k=1}^{\infty}\binom{1/2}{k}(\frac{-2u_i}{R_i^2})^k} \\
&=e^{-\frac{R_i}{\tau}}(1+O(\frac{u_i}{R_i})).
\end{align}
Note that for uniformly sampled $\mathbf z_i$, we have $\mathbb E\|\mathbf z_i\|^2=\frac{d}{3}r^2$ with standard deviation $\frac{2r^2}{3}\sqrt{\frac{d}{5}}$, and $\mathbb E[u_i]=\langle \mathbf x,\mathbb E[\mathbf z_i]\rangle=0$ with standard deviation $\frac{\|\mathbf x\|}{\sqrt3}r$.
Since the second standard deviation is independent with dimension $d$, one can deduce that $\frac{u_i}{R_i}\approx(\frac{1}{\|\mathbf x\|^2+\frac{dr}{3}})^{\frac{1}{2}}u_i\rightarrow0$ as $d$ goes to infinity.
Therefore we have
\begin{align}
\mathbf z&=\sum_{i=1}^m\frac{e^{-\frac{1}{\tau}\|\mathbf z_i-\mathbf x\|}}{\sum\limits_{l=1}^me^{-\frac{1}{\tau}\|\mathbf z_l-\mathbf x\|}}\mathbf z_i\approx\sum_{i=1}^m\frac{e^{-\frac{R_i}{\tau}}}{\sum\limits_{l=1}^me^{-\frac{R_l}{\tau}}}\mathbf z_i \\
&\rightarrow\sum_{i=1}^m\frac{e^{-\frac{1}{\tau}\sqrt{\frac{d}{3}r^2+\|\mathbf x\|^2}}}{\sum\limits_{l=1}^me^{-\frac{1}{\tau}\sqrt{\frac{d}{3}r^2+\|\mathbf x\|^2}}}\mathbf z_i \\
&=\frac{1}{m}\sum_{i=1}^m\mathbf z_i.
\end{align}
Then the mean and standard deviation of $\|\mathbf z\|$ can be deduced as below by Central Limit Theorem:
\begin{align}
\mathbb E\|\mathbf z\|&\rightarrow\sqrt{\frac{d}{3m}}r,\;\mathrm{std}(\|\mathbf z\|)\rightarrow\frac{r}{\sqrt{6m}}\quad\text{as }d\rightarrow+\infty.
\end{align}
Therefore, when $m=d-1$, the standard deviation will tend to zero, and we have
\begin{align}
\left\|\mathbf V_j^-\right\|\rightarrow\sqrt{\frac{1}{3}}r\quad\text{as }d\rightarrow+\infty.
\end{align}
That is to say, $\left\|\mathbf V_j^-\right\|$ converges to $\sqrt{\frac{1}{3}}r$ which is independent with the dimension $d$.

As for the positive part $\mathbf V_j^+$, note that for any $\mathbf x\in\mathbb R^d$, we have $\|\mathbf y_i-\mathbf x\|=\sqrt{1+\|\mathbf x\|^2-2\langle \mathbf y_i,\mathbf x_i\rangle}$.
Let $R=\sqrt{1+\|\mathbf x\|^2}$ and $u_i=\langle \mathbf y_i,\mathbf x\rangle$, then we have similar equality which holds for any $\mathbf x\neq\pm\mathbf y_i$:
\begin{align}
e^{-\frac{1}{\tau}\|\mathbf y_i-\mathbf x\|}=e^{-\frac{R}{\tau}}(1+O(\frac{u_i}{R})).
\end{align}
Denote by
\begin{align}
\mathbf y=\sum_{i=1}^d\frac{e^{-\frac{1}{\tau}\|\mathbf y_i-\mathbf x\|}}{\sum\limits_{l=1}^de^{-\frac{1}{\tau}\|\mathbf y_l-\mathbf x\|}}\mathbf y_i.
\end{align}
Recall that $\mathbf y_i$ is uniformly sampled from $\mathbb S^{d-1}$, then $\|\mathbf y_i\|=1$ and $\mathbb E[u_i]=\langle\mathbf x,\mathbb E[\mathbf y_i]\rangle=0$ with standard deviation $\frac{\|\mathbf x\|}{\sqrt d}$.
Since $\frac{\|\mathbf x\|}{\sqrt d}$ tends to zero as $d\rightarrow\infty$, we can still deduce that $\frac{u_i}{R}\rightarrow0$.
Then we have
\begin{align}
\mathbf y&=\sum_{i=1}^d\frac{e^{-\frac{1}{\tau}\|\mathbf y_i-\mathbf x\|}}{\sum\limits_{l=1}^de^{-\frac{1}{\tau}\|\mathbf y_l-\mathbf x\|}}\mathbf y_i\approx\sum_{i=1}^d\frac{e^{-\frac{R}{\tau}}}{\sum\limits_{l=1}^de^{-\frac{R}{\tau}}}\mathbf y_i \\
&=\frac{1}{d}\sum_{i=1}^d\mathbf y_i.
\end{align}
Note that
\begin{align}
\mathbb E\|\mathbf y\|^2=\frac{1}{d},\quad\mathrm{var}(\|\mathbf y\|^2)=\frac{2(d-1)}{d^4}.
\end{align}
Therefore $\mathbf y\rightarrow\mathbf0$ for almost any $\mathbf x\in\mathbb R^d$ as the dimension $d$ goes to infinity.
That is to say, the positive part directly vanishes.

Recall that $\left\|\mathbf V_j^-\right\|\rightarrow\sqrt{\frac{1}{3}}r$ as $d$ goes to infinity, we can deduce that
\begin{align}
\|\mathbf V_j\|\rightarrow\sqrt{\frac{1}{3}}r\quad\text{as }d\rightarrow+\infty.
\end{align}
Note that the length of the diagonal of $[-r,r]^d$ is $r\sqrt{d}$, and $\frac{\frac{1}{3}}{d}\rightarrow0$ when $d$ goes to infinity.
Therefore we can deduce that the optimizing target of the drifting field collapses to the origin with sufficiently large dimension $d$.
\end{proof}

\subsection{Drifting as Empirical Diffusion-GAN}\label{sec:appendix:diffusiongan}

In this section, we connect Drifting Models to adversarial training, especially the viewpoint of Diffusion-GAN~\citep{wang2023diffusiongan}.
The key observation is that, after smoothing the real and generated empirical distributions with a kernel, the Drifting field can be interpreted as the gradient of the logit of the optimal discriminator.
%

Let $q$ denote the target distribution and $p={\mathbf{G}_\theta}_\#p_{\mathbf z}$ denote the generated distribution where $p_{\mathbf{z}}$ is the noise distribution.
For a fixed generator, the optimal discriminator for the standard GAN objective is
\begin{align}
D^*_{q,p}(\mathbf x)=\frac{q(\mathbf x)}{q(\mathbf x)+p(\mathbf x)},
\end{align}
whose logit is
\begin{align}
\operatorname{logit}D^*_{q,p}(\mathbf x)
=
\log\frac{D^*_{q,p}(\mathbf x)}{1-D^*_{q,p}(\mathbf x)}
=
\log q(\mathbf x)-\log p(\mathbf x).
\end{align}
Therefore, the gradient of the optimal discriminator logit gives the score difference
\begin{align}\label{eq:optimal_discriminator_score_diff}
\nabla_{\mathbf x}\operatorname{logit}D^*_{q,p}(\mathbf x)
=
\nabla_{\mathbf x}\log q(\mathbf x)-\nabla_{\mathbf x}\log p(\mathbf x).
\end{align}
The gradient of the non-saturating generator loss $-\log D^*_{q,p}(\mathbf x)$ is the negative score difference up to a positive scalar factor, since
\begin{align}
\nabla_{\mathbf x}\left[-\log D^*_{q,p}(\mathbf x)\right]
=
\frac{p(\mathbf x)}{q(\mathbf x)+p(\mathbf x)}
\left(
\nabla_{\mathbf x}\log p(\mathbf x)
-
\nabla_{\mathbf x}\log q(\mathbf x)
\right).
\end{align}

We now consider empirical distributions smoothed by a kernel.
Given real samples $\{\mathbf y_i\}_{i=1}^{N}\sim q$ and generated samples $\{\mathbf x_j\}_{j=1}^{M}\sim p$, define empirical measures
\begin{align}
\hat q=\frac{1}{N}\sum_{i=1}^{N}\delta_{\mathbf y_i},
\qquad
\hat p=\frac{1}{M}\sum_{j=1}^{M}\delta_{\mathbf x_j}.
\end{align}
For $l>0$, consider the exponential kernel
\begin{align}
k_l(\mathbf x;\tau)=\exp\left(-\frac{\|\mathbf x\|_2^l}{\tau}\right),
\end{align}
and the smoothed empirical densities
\begin{align}
\hat q_l(\mathbf x)=(k_l*\hat q)(\mathbf x)
=\frac{1}{N}\sum_{i=1}^{N}k_l(\mathbf x-\mathbf y_i;\tau),
\quad
\hat p_l(\mathbf x)=(k_l*\hat p)(\mathbf x)
=\frac{1}{M}\sum_{j=1}^{M}k_l(\mathbf x-\mathbf x_j;\tau).
\end{align}
Then the following proposition establishes the connection of the drifting field and the gradient of the logit of the optimal discriminator, which is also closely related to the results in~\cite{lai2026unifiedviewscorebaseddrifting}.

\begin{proposition}\label{prop:drifting_diffusion_gan}
Let
\begin{align}
\mathbf V_l(\mathbf x)
=
\sum_{i=1}^{N}
\alpha_i^+(\mathbf x)
\|\mathbf y_i-\mathbf x\|_2^{l-2}
(\mathbf y_i-\mathbf x)
-
\sum_{j=1}^{M}
\alpha_j^-(\mathbf x)
\|\mathbf x_j-\mathbf x\|_2^{l-2}
(\mathbf x_j-\mathbf x),
\end{align}
where
\begin{align}
\alpha_i^+(\mathbf x)
=
\frac{k_l(\mathbf x-\mathbf y_i;\tau)}
{\sum_{n=1}^{N}k_l(\mathbf x-\mathbf y_n;\tau)},
\qquad
\alpha_j^-(\mathbf x)
=
\frac{k_l(\mathbf x-\mathbf x_j;\tau)}
{\sum_{m=1}^{M}k_l(\mathbf x-\mathbf x_m;\tau)}.
\end{align}
Then
\begin{align}
\nabla_{\mathbf x}\operatorname{logit}D^*_{\hat q_l,\hat p_l}(\mathbf{x})
=
\frac{l}{\tau}\mathbf V_l(\mathbf x).
\end{align}
\end{proposition}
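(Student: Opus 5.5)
We prove the proposition by direct differentiation, starting from \cref{eq:optimal_discriminator_score_diff}. With $q$ and $p$ replaced by the smoothed empirical densities $\hat q_l$ and $\hat p_l$, it gives
\begin{align}
\nabla_{\mathbf x}\operatorname{logit}D^*_{\hat q_l,\hat p_l}(\mathbf x)=\nabla_{\mathbf x}\log\hat q_l(\mathbf x)-\nabla_{\mathbf x}\log\hat p_l(\mathbf x).
\end{align}
So it suffices to compute the score of a kernel-smoothed empirical measure and show it has the stated form. The real and generated terms have identical structure, so we treat a generic mixture $\hat\rho_l(\mathbf x)=\frac{1}{K}\sum_{n}k_l(\mathbf x-\mathbf a_n;\tau)$ once. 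We then specialize to $\mathbf a_n=\mathbf y_i$ (with $K=N$) and to $\mathbf a_n=\mathbf x_j$ (with $K=M$).

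The first step is the gradient of a single kernel term. For $\mathbf x\neq\mathbf a$, the chain rule gives
\begin{align}
\nabla_{\mathbf x}\|\mathbf x-\mathbf a\|_2^l=l\|\mathbf x-\mathbf a\|_2^{l-2}(\mathbf x-\mathbf a),
\end{align}
so that
\begin{align}
\nabla_{\mathbf x}k_l(\mathbf x-\mathbf a;\tau)=\frac{l}{\tau}k_l(\mathbf x-\mathbf a;\tau)\|\mathbf a-\mathbf x\|_2^{l-2}(\mathbf a-\mathbf x).
\end{align}
The second step is the log-derivative of the mixture. Since $\nabla\log\hat\rho_l=\nabla\hat\rho_l/\hat\rho_l$, the normalizing constant $1/K$ cancels between numerator and denominator. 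The remaining weights $k_l(\mathbf x-\mathbf a_n;\tau)/\sum_m k_l(\mathbf x-\mathbf a_m;\tau)$ are exactly the softmax coefficients $\alpha_n^\pm(\mathbf x)$. This yields
\begin{align}
\nabla_{\mathbf x}\log\hat\rho_l(\mathbf x)=\frac{l}{\tau}\sum_n\alpha_n(\mathbf x)\|\mathbf a_n-\mathbf x\|_2^{l-2}(\mathbf a_n-\mathbf x).
\end{align}
Subtracting the generated-sample version from the real-sample version gives $\frac{l}{\tau}\mathbf V_l(\mathbf x)$, which proves the claim.

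The only delicate point is regularity, and we expect it to be the main obstacle, though a mild one. When $l<2$, and in particular for $l=1$ as in \cref{eq:drifting}, the map $\mathbf x\mapsto\|\mathbf x-\mathbf a\|_2^l$ is not differentiable at $\mathbf x=\mathbf a$. The factor $\|\cdot\|^{l-2}$ is also singular there. We therefore state the identity for $\mathbf x\notin\{\mathbf y_i\}\cup\{\mathbf x_j\}$, a finite set, so the identity holds almost everywhere. We also need $\hat q_l,\hat p_l>0$, which is automatic because the kernel is strictly positive; this ensures the logarithms and $D^*$ are well defined.

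A remark on conventions: if $\mathbf x$ is itself one of the generated samples, the self-term has zero displacement. Its contribution $\|\mathbf x_j-\mathbf x\|^{l-2}(\mathbf x_j-\mathbf x)$ is taken as $\mathbf 0$ for $l>1$, or the term is excluded by a limiting argument. Even with the self-term contributing zero displacement, its weight still enters the normalizer $\sum_m k_l(\mathbf x-\mathbf x_m;\tau)$ of $\alpha_j^-$. Excluding it from that normalizer is what recovers the $k\neq j$ normalization of \cref{eq:drifting}. For $l=1$ the recovery is in that sense. Apart from this, the argument is a routine computation.
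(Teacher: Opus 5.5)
Your proposal is correct and follows the paper's proof: you apply the score-difference identity to $\hat q_l,\hat p_l$, differentiate each kernel term, note that the log-derivative of the mixture yields the softmax weights $\alpha^\pm$, and subtract. Your restriction to $\mathbf x$ outside the finite sample set is a genuine refinement that the paper omits. One small slip: for $1<l<2$ the map $\mathbf x\mapsto\|\mathbf x-\mathbf a\|_2^l$ is in fact differentiable at $\mathbf a$, with zero gradient, so only the factor $\|\cdot\|_2^{l-2}$ in the formula needs your convention; this does not affect the argument.
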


\begin{proof}
We first compute the score of $\hat q_l$.
Since
\begin{align}
\nabla_{\mathbf x} k_l(\mathbf x-\mathbf y_i;\tau)
=
-\frac{l}{\tau}
\|\mathbf x-\mathbf y_i\|_2^{l-2}
(\mathbf x-\mathbf y_i)
k_l(\mathbf x-\mathbf y_i;\tau),
\end{align}
we have
\begin{align}\label{eq:logqhat}
\nabla_{\mathbf x}\log \hat q_l(\mathbf x)
&=
\frac{\sum_{i=1}^{N}\nabla_{\mathbf x}k_l(\mathbf x-\mathbf y_i;\tau)}
{\sum_{n=1}^{N}k_l(\mathbf x-\mathbf y_n;\tau)}
\\
&=
\frac{l}{\tau}
\sum_{i=1}^{N}
\frac{k_l(\mathbf x-\mathbf y_i;\tau)}
{\sum_{n=1}^{N}k_l(\mathbf x-\mathbf y_n;\tau)}
\|\mathbf y_i-\mathbf x\|_2^{l-2}
(\mathbf y_i-\mathbf x)
\\
&=
\frac{l}{\tau}
\sum_{i=1}^{N}
\alpha_i^+(\mathbf x)
\|\mathbf y_i-\mathbf x\|_2^{l-2}
(\mathbf y_i-\mathbf x).
\end{align}
Analogously,
\begin{align}\label{eq:logphat}
\nabla_{\mathbf x}\log \hat p_l(\mathbf x)
=
\frac{l}{\tau}
\sum_{j=1}^{M}
\alpha_j^-(\mathbf x)
\|\mathbf x_j-\mathbf x\|_2^{l-2}
(\mathbf x_j-\mathbf x).
\end{align}
Subtracting~\Cref{eq:logqhat} and~\Cref{eq:logphat} to~\Cref{eq:optimal_discriminator_score_diff} gives the desired result.
\end{proof}

\Cref{prop:drifting_diffusion_gan} shows that Drifting estimates the optimal discriminator logit gradient by Monte Carlo samples.
In particular, when $l=2$, the norm factor disappears and $\mathbf V_l$ becomes the standard attraction-repulsion field induced by a Gaussian/RBF kernel, up to the constant factor $2/\tau$.
When $l=1$ as used in practice, the same derivation yields a \textit{normalized} displacement direction.
This is also consistent with the practical implementation of Drifting Models, where feature vectors are often normalized and only the direction of the drifting field is used. In high-dimensional spaces, the distance between two normalized feature vectors becomes nearly constant. 
Thus Drifting can be interpreted as an estimate of the gradient of the optimal discriminator logit between two perturbed distributions, making it closely related to the Diffusion-GAN~\citep{wang2023diffusiongan} framework.

Such connection provides the motivation for the modifications introduced in~\Cref{sec:3.method.3}:

First, the softmax weights in Drifting arise from differentiating the log-density of an exponential-kernel mixture.
Therefore, the additional $y$-softmax is not directly induced by this derivation and may alter the gradient direction of the optimal discriminator logit. 

Second, as Drifting is connected to the discriminator between perturbed distributions, the input should be sampled from the perturbed generated distribution $\hat p_l(\mathbf{x})$, which can be approximated by injecting noise into negative samples. 
However, using the theoretically matched perturbation scale can be inefficient in high-dimensional spaces, as perturbed samples may rarely stay near the clean generated samples and thus provide less direct supervision to the generator. 
In practice, we therefore use random-direction noise whose norm follows a Laplace distribution, which provides a practical trade-off between sampling from the perturbed distribution and maintaining sample efficiency.

Finally, Drifting relies on Monte Carlo estimation of the underlying distributions. 
Using more samples improves the empirical approximation of both real and generated distributions, leading to more accurate estimates of the Drifting direction.

\section{Additional Implementation Details}
\label{sec:implementation_details}

\subsection{Details of Statistical Analysis in~\Cref{sec:3.method.2}}
\label{app:statistical_analysis}
Here we provide additional details for the statistical analyses used in~\Cref{sec:3.method.2}, including trajectory curvature, isotropy, and dispersion statistics.

\noindent\textbf{Trajectory curvature.}
For the SD-VAE latent space, we use a DiT-XL model from~\cite{peebles2023dit}; for the RAE latent space, we use the DiT$^{\mathrm{DH}}$-XL model from~\cite{zheng2025diffusion}.
The curvature is computed using the open-source implementation of~\citet{chen2024trajectory}.

\noindent\textbf{Isotropy statistics.}
We quantify the isotropy of latents using the participation ratio (PR) and spectral entropy (SE).
For \textit{each class} and \textit{each spatial token}, we collect the corresponding latents across samples and compute their covariance matrix.
Let $\{\mu_i\}_{i=1}^{r}$ be the non-negative eigenvalues of this covariance matrix, where $r$ is the number of eigenvalues.
We define the normalized spectrum as
\begin{align}
p_i=\frac{\mu_i}{\sum_{j=1}^{r}\mu_j}.
\end{align}
The normalized participation ratio is computed as
\begin{align}
\mathrm{PR}
=
\frac{\left(\sum_{i=1}^{r}\mu_i\right)^2}
{r\sum_{i=1}^{r}\mu_i^2}
=
\frac{1}{r\sum_{i=1}^{r}p_i^2},
\end{align}
and the normalized spectral entropy is computed as
\begin{align}
\mathrm{SE}
=
\frac{1}{r}
\exp\left(-\sum_{i=1}^{r}p_i\log p_i\right).
\end{align}
Both metrics are normalized to lie in $[1/r,1]$, with larger values indicating a more isotropic spectrum.
We compute PR and SE separately for each class and each token, and then average over all classes and all tokens.
%
We note that a recent concurrent work~\citep{zhang2026rit} also studies representation-space geometry and reports conclusions that appear different from ours.
We suspect that the discrepancy mainly comes from the aggregation protocol.
\cite{zhang2026rit} measures global statistics after mixing samples from all classes and aggregating token positions, while our analysis is performed per class and per token.
For our purpose, the latter protocol is more preferred since DiT-type models condition on class embeddings and process latents as token sequences. Therefore, the relevant geometry is the local within-class, within-token geometry rather than the global geometry obtained by aggregating all classes and tokens.

\paragraph{Dispersion statistics.}
We measure how dispersed samples are in SD-VAE and RAE latent spaces using nearest-neighbor distance (NN-d) and spherical maximum mean discrepancy (S-MMD). Since SD-VAE and RAE latents have different dimensionalities, we normalize all Euclidean distances by $\sqrt{d}$, where $d$ denotes the corresponding feature dimension, which removes the scaling of Euclidean distance with dimensionality and makes the statistics more comparable across latent spaces.

NN-d is the average distance from each sample to its nearest neighbor within the same class:
\begin{align}
\mathrm{NN\mbox{-}d}
=
\frac{1}{\sum_{c=1}^C n_c}\sum_{c=1}^C
\sum_{i=1}^{n_c}
\min_{j\neq i}\frac{1}{\sqrt{d}}\|\mathbf x_{c, i}-\mathbf x_{c, j}\|_2,
\end{align}
where $C$ is the total number of classes and $n_c$ is the number of samples within class $c$.
A smaller NN-d indicates closer neighbors and thus more concentrated samples.

S-MMD compares the empirical sample distribution with a reference spherical distribution.
For a set of \textit{centered} samples $\{\tilde{\mathbf x}_{c,i}\}_{i=1}^{n_c}$, we set the sphere radius to the average centered norm,
\begin{align}
\rho=\frac{1}{n_c}\sum_{i=1}^{n_c}\|\tilde{\mathbf x}_{c,i}\|_2.
\end{align}
Instead of sampling infinitely many points from the sphere, we use a simple deterministic approximation consisting of all poles of the sphere:
\begin{align}
\mathcal S_\rho=\{\pm \rho\mathbf e_1,\ldots,\pm \rho\mathbf e_d\},
\end{align}
where $\{\mathbf e_i\}_{i=1}^{d}$ denotes the standard basis.
We then compute the standard squared MMD between the centered samples and $\mathcal S_\rho$:
\begin{align}
\mathrm{MMD}^2(\mathcal{X},\mathcal{Y})
=
\frac{1}{|\mathcal{X}|^2}\sum_{\mathbf x,\mathbf x'\in \mathcal{X}}k(\mathbf x,\mathbf x')
+
\frac{1}{|\mathcal{Y}|^2}\sum_{\mathbf y,\mathbf y'\in \mathcal{Y}}k(\mathbf y,\mathbf y')
-
\frac{2}{|\mathcal{X}||\mathcal{Y}|}\sum_{\mathbf x\in \mathcal{X}}\sum_{\mathbf y\in \mathcal{Y}}k(\mathbf x,\mathbf y),
\end{align}
where $|\mathcal{X}|$ denotes the number of samples in a finite set $\mathcal{X}$, and $k$ is a selected kernel as
\begin{align}
    k(\mathbf{x},\mathbf{y})=e^{-\frac{1}{\tau\sqrt{d}}\|\mathbf{x}-\mathbf{y}\|_2},
\end{align}
with $\tau=1.0$.
A larger S-MMD indicates that the sample distribution is less sphere-like.

\subsection{Hyperparameter Settings}
\label{app:hyperparameters}
We summarize the main hyperparameters used for \method{} in~\Cref{tab:hyperparameters}.
We omit the exponential moving average (EMA) of model parameters, as it does not yield consistent improvement. 
For IDG with multiple intermediate layers, we concatenate the features from all selected layers along the feature dimension.
The concatenated feature is then projected to the output dimension by a projection head comprising three linear layers with SiLU activations. For results with IDG, the guidance weight is default to $\omega=1.4$.

For the generated CLS tokens, we directly take the CLS tokens of real samples produced by the frozen DINO encoder to compute the drifting field.
Following~\cite{caron2021dino}, we also append four extra registers tokens to the Transformer token sequence.
These extra tokens have no explicit target or auxiliary loss and simply provide redundancy during training and inference.
\input{tables/hyperparameters}

\subsection{Additional Ablations}
\label{app:idg_ablation}
Here we further ablate the additional tokens and the implementation details of IDG.
All results are evaluated on ImageNet $256\times256$ with distilled $\text{DiT}^{\text{DH}}\text{-XL}$.

\input{tables/idg_layers_ablation}

\input{tables/idg_tokens_ablation}

\noindent\textbf{CLS and extra tokens.}
Table~\ref{tab:idg_tokens} reports the comparison of FID scores with and without the additional tokens. The inclusion of the additional tokens leads to slight improvement across guidance scales.

\noindent\textbf{Intermediate layers for IDG.}
\Cref{tab:idg_layers} compares all combinations of layers $\{8,10,12,14\}$ for IDG.
Using two intermediate layers yields slightly better results than using a single layer, while adding more layers does not consistently improve performance.
Among the tested configurations, layers $\{8,14\}$ achieve the best FID of $1.48$ and are used in our final model.

\input{tables/ablation_idg_schedule}

\noindent\textbf{Training schedule for IDG.}
We also evaluate a two-stage training schedule as applied in~\cite{fu2026frozenpixelspacediffusionmodel}.
Specifically, we first distill the backbone and then train the IDG head in a dedicated stage with the forzen backbone.
As shown in~\Cref{tab:idg_training_schedule}, this two-stage schedule achieves an FID of $1.50$ compared to $1.48$ of our joint training strategy.
Given these commensurate performance, we adopt the joint training paradigm as it eliminates the need for an additional training stage.

\input{tables/ablation_idg_loss}

\noindent\textbf{Training loss for guidance head.} 
Though training the guidance head via drifting brings theoretical benifits via a clear quality order, constructing the drifting field is generally more consuming.
Theorefore, we further evaluate MSE and cosine similarity losses to train the guidance head. As pair data is not available in one-step regime, we take the final output of the backbone as the auxilary target for these losses. As shown in~\Cref{tab:idg_training_loss}, MSE loss yields similary results as Drifting loss, while cosine similarity fails to produce reasonable guidance, though its the default choice in REPA~\citep{yu2025repa}. Using cosine similarity alone lost amplitude information, and thus we introduce a modified loss as
\begin{align}
    \mathcal{L}_{\text{Cosine sim.+norm}}(\mathbf{x}, \mathbf{y})=-\frac{\min\left\{\|\mathbf{x}\|,\| \mathbf{y}\|\right\}}{\max\left\{\|\mathbf{x}\|,\|\mathbf{y}\|\right\}}\cos \angle(\mathbf{x}, \mathbf{y}).\nonumber
\end{align}
As the cosine similarity is positive upon convergence, such loss is minimized when the cosine similarity converges to $1$ while the norms match. Applying the modified loss also leads to comparable results with drifting loss.

\section{Attempts to Train Drifting Models from Scratch with RAEs}\label{sec:attempts_train_from_scratch}
\input{tables/from_scratch}
We also explore whether Drifting Models can be trained from scratch without auxiliary MAEs. 
As shown in~\Cref{tab:from_scratch}, directly training Drifting Models in either SD-VAE or RAE latent spaces without an additional MAE fails to produce effective results. 
\Cref{thm:counterexample} suggests that this failure in RAE latent space may be caused by the poor initialization encountered in from-scratch Drifting training.

To alleviate this issue, we further try a simple decode-encode strategy. 
Specifically, we first decode the generated latents using the RAE decoder, and then re-encode the decoded samples with the RAE encoder to compute the Drifting direction and apply gradient backpropagation. 
We denote this variant as ``+ decode-encode''. 
As shown in~\Cref{tab:from_scratch}, this strategy enables training from scratch in RAE spaces and achieves an FID of $7.04$. 
We hypothesize that the decode-encode process can effectively project off-manifold generated samples back toward the data manifold, thus alleviating the influence of poor initialization.
Despite this improvement, training Drifting Models from scratch in RAE spaces still lags behind state-of-the-art methods. 
We leave further improvement of MAE-free from-scratch Drifting training as an important direction for future work.

\section{More Visualizations}\label{sec:more_visualizations}

Additional class-wise qualitative results are shown in~
\Cref{fig:appendix_more_visualizations_1,fig:appendix_more_visualizations_2,fig:appendix_more_visualizations_4}.

\newcommand{\morevis}[2]{%
\begin{minipage}[t]{0.485\linewidth}
\centering
\includegraphics[width=\linewidth]{figs/more_visualizations/classes_guidance1.4_8samples/class_#1.pdf}\\[-0.25em]
{\scriptsize\textbf{Class #1:} #2}
\end{minipage}%
}

\begin{figure}[p]
\centering
\morevis{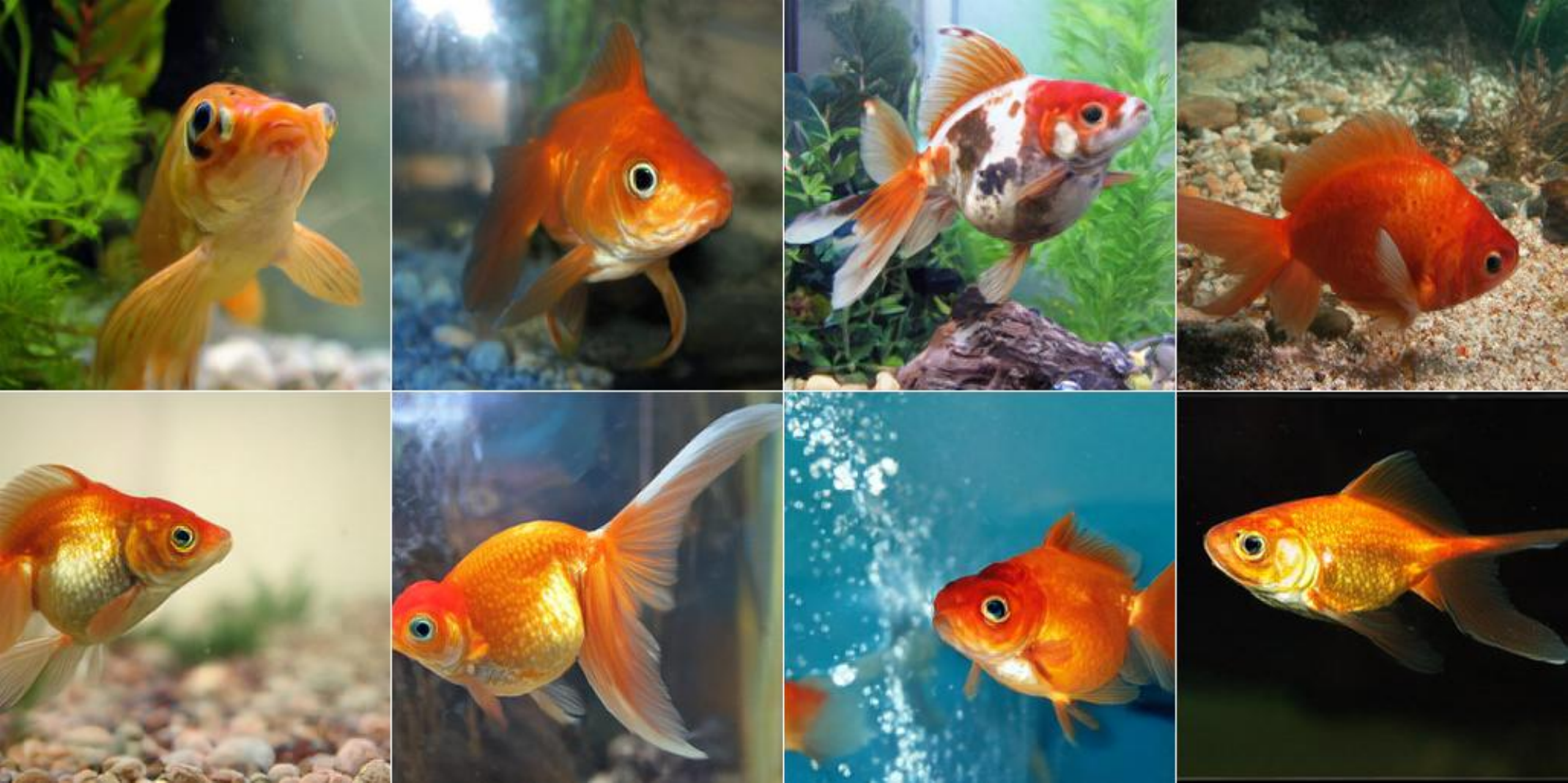}{goldfish}\hfill
\morevis{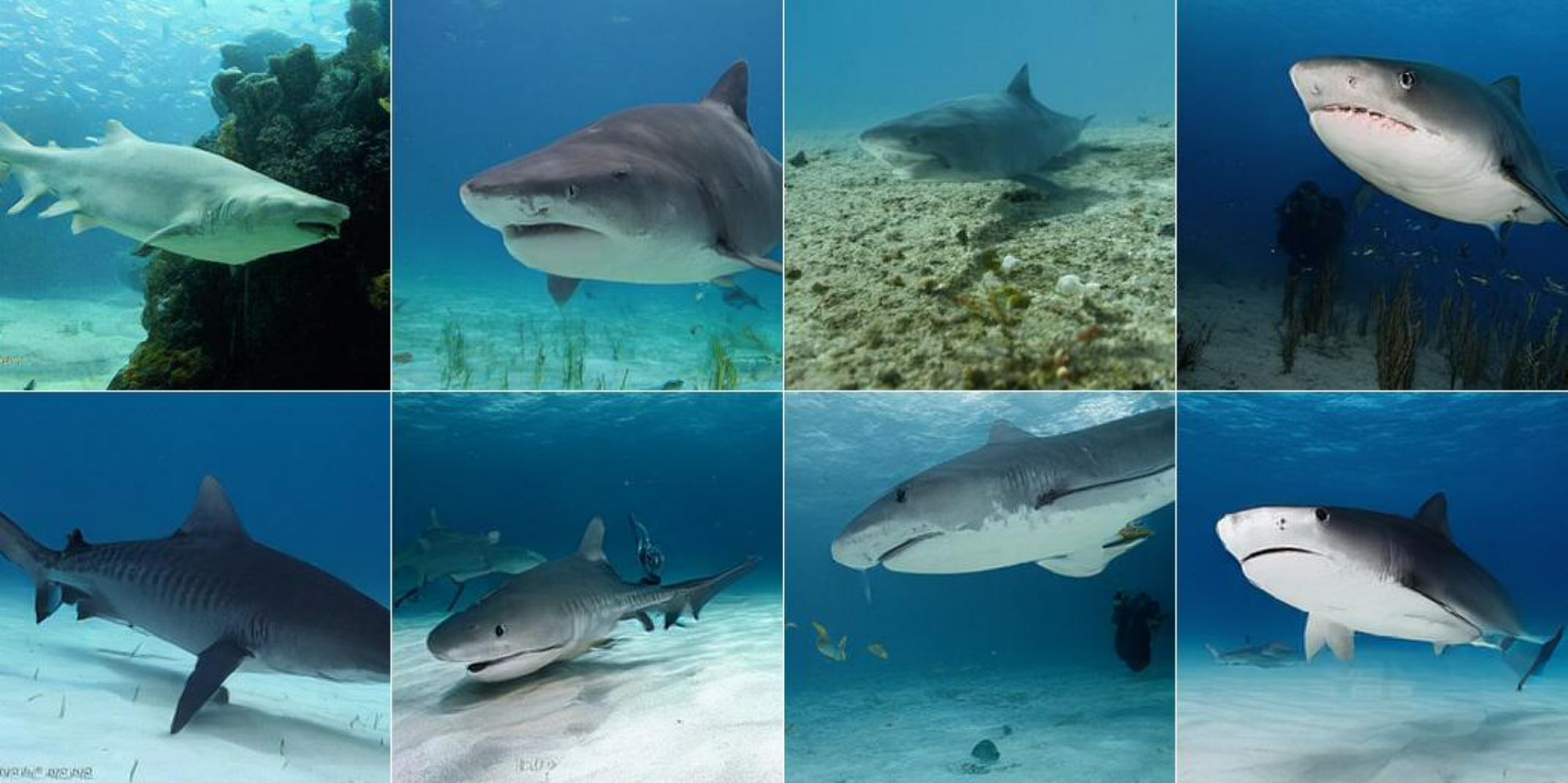}{tiger shark}\\[0.35em]
\morevis{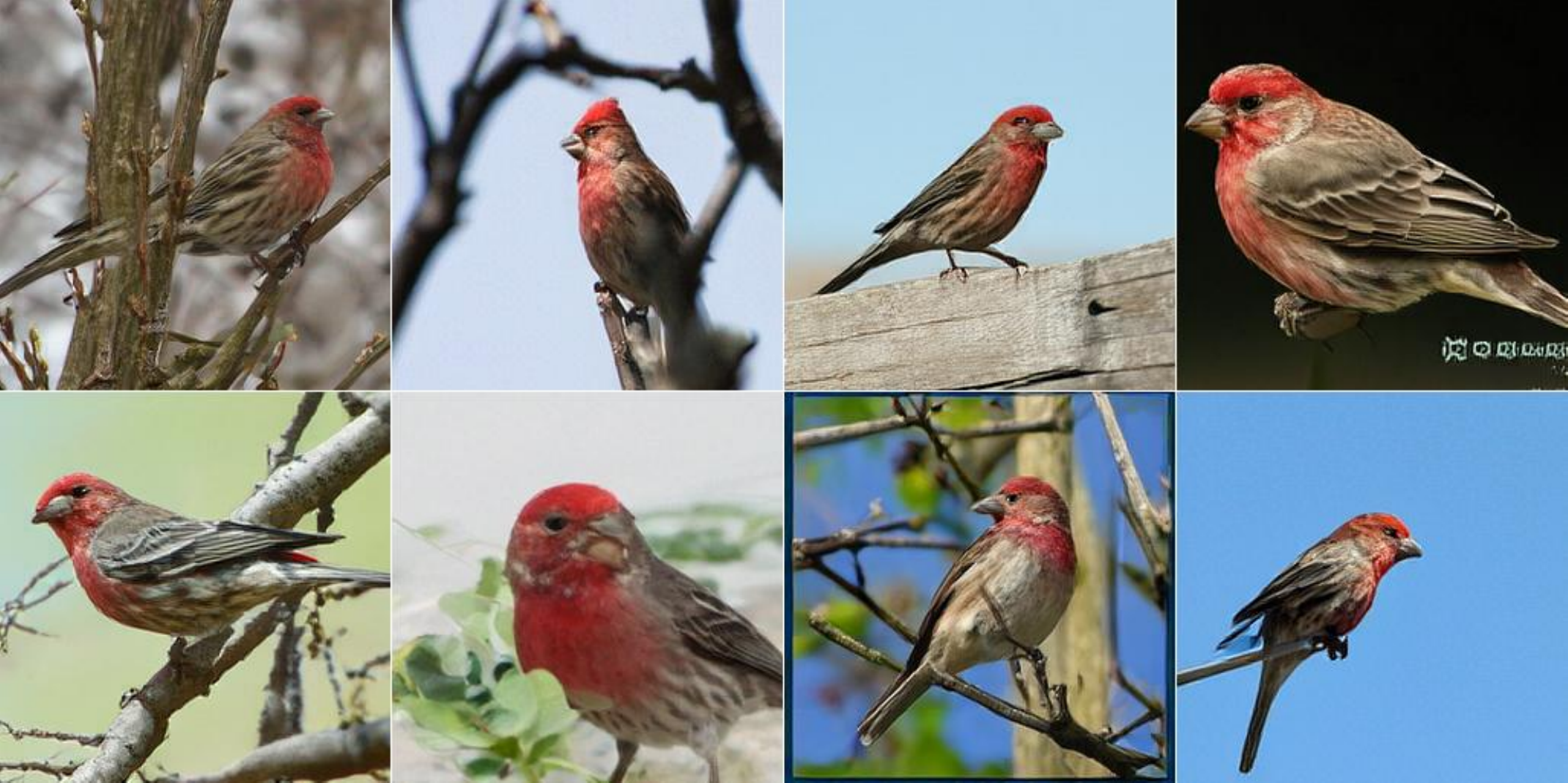}{house finch}\hfill
\morevis{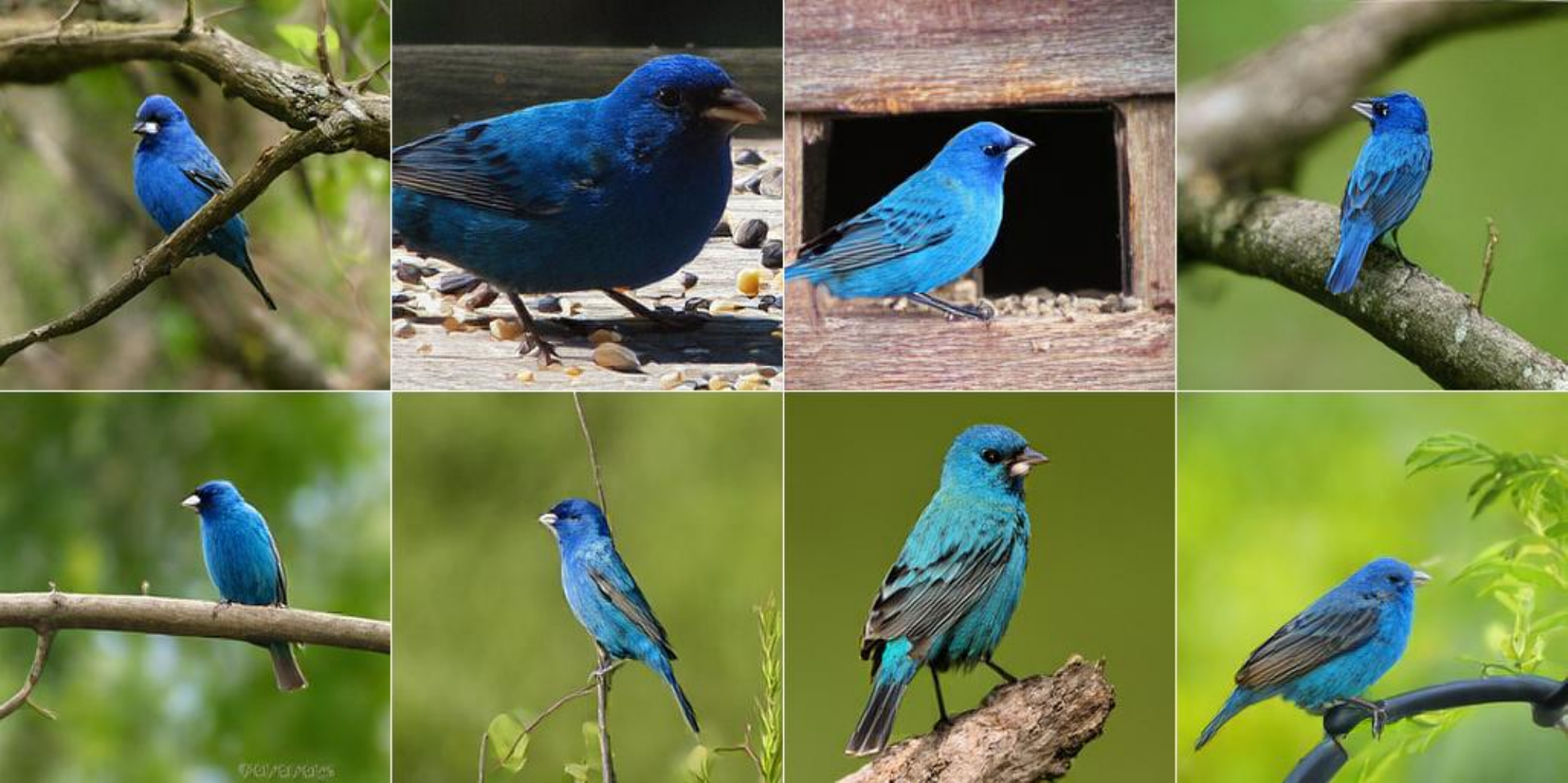}{indigo bunting}\\[0.35em]
\morevis{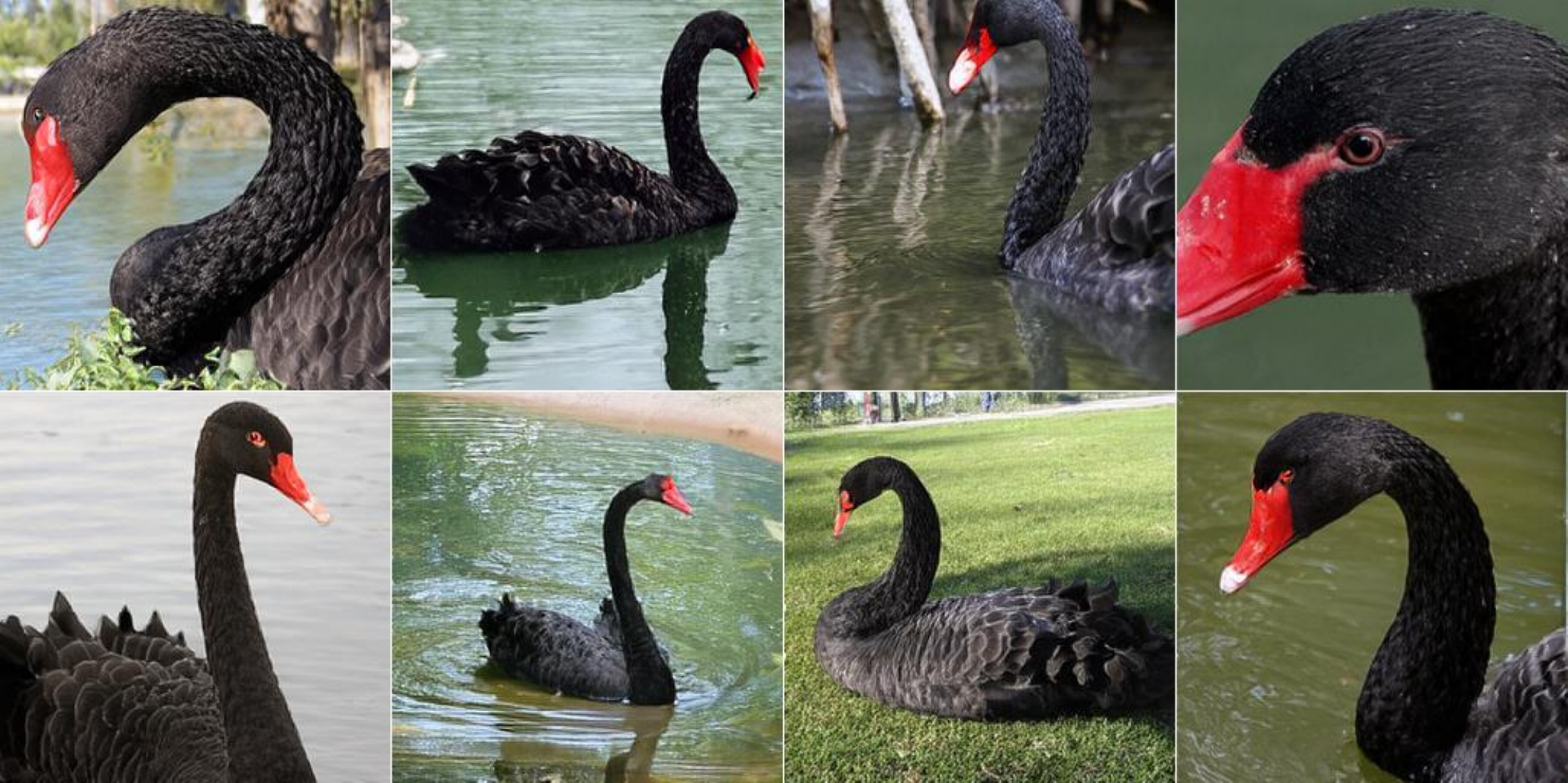}{black swan}\hfill
\morevis{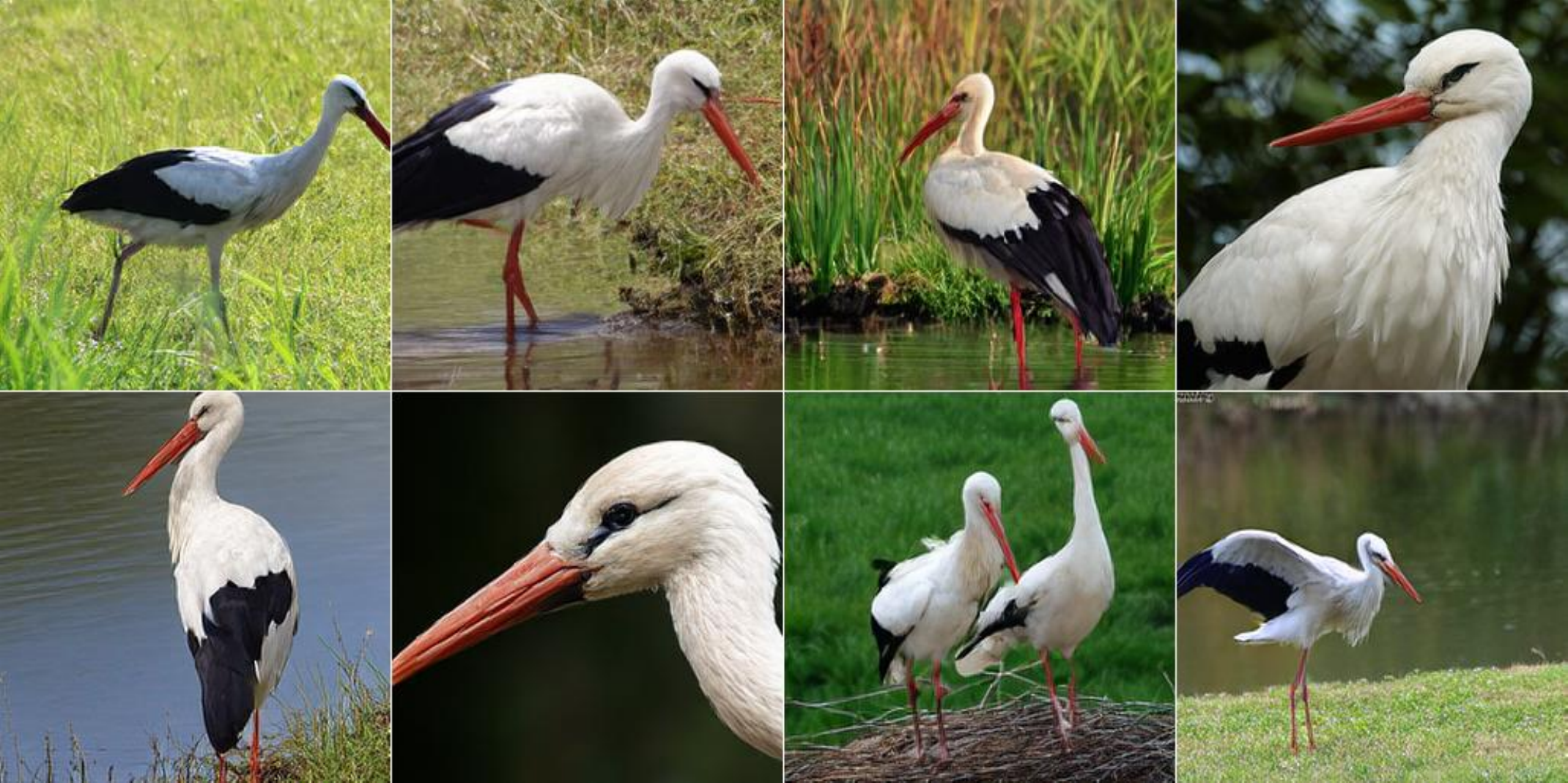}{white stork}\\[0.35em]
\morevis{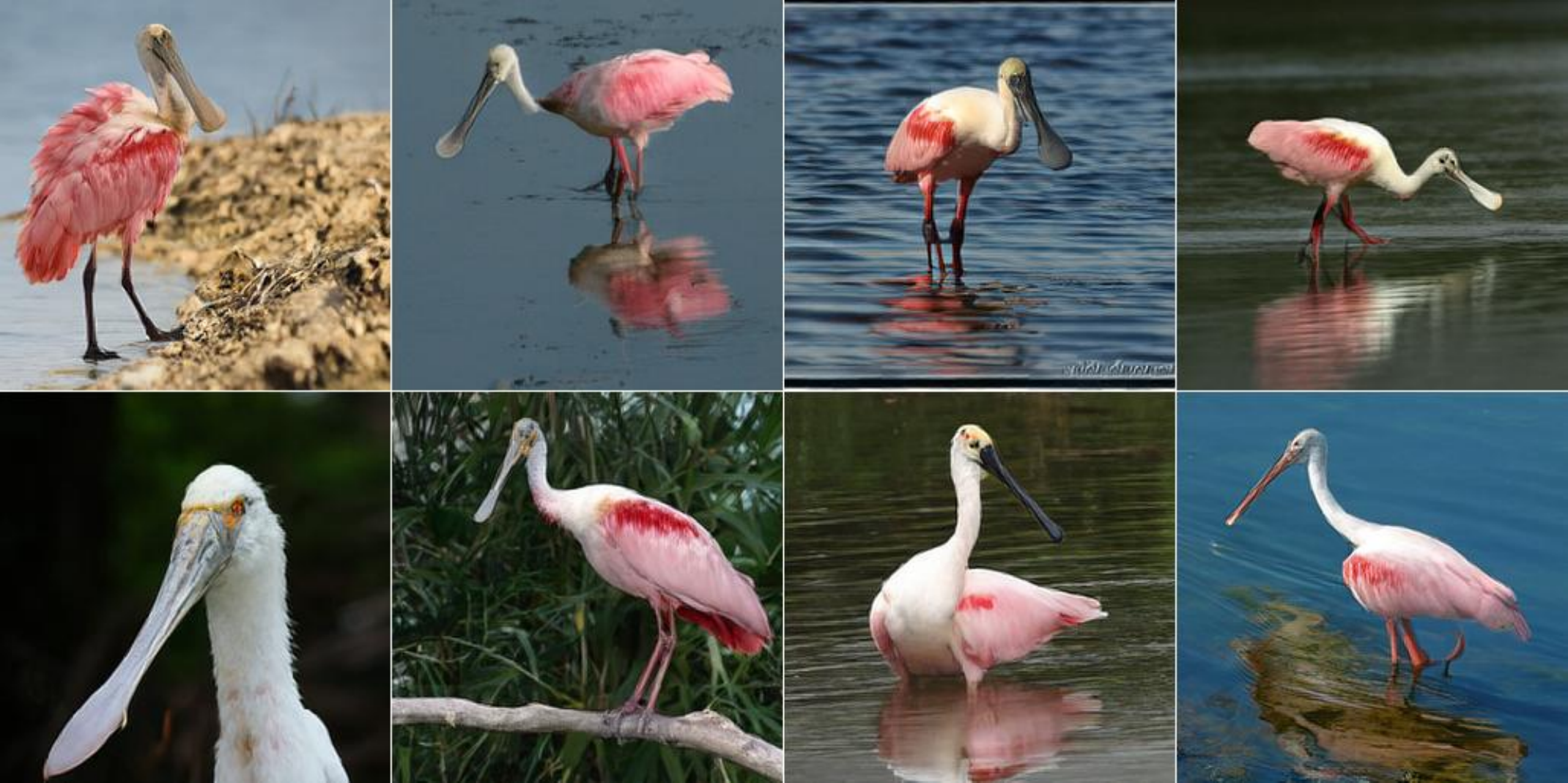}{spoonbill}\hfill
\morevis{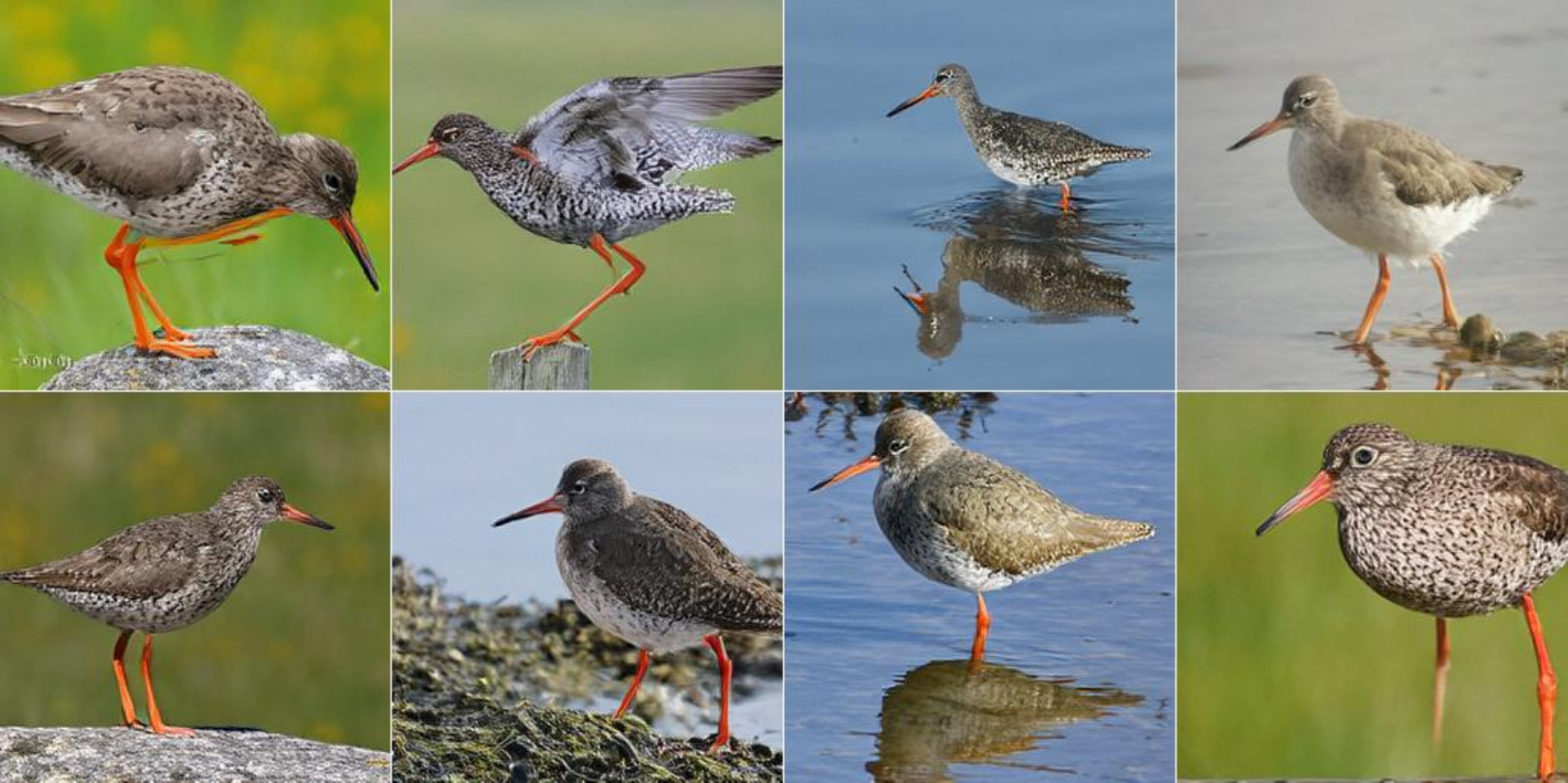}{redshank}\\[0.35em]
\morevis{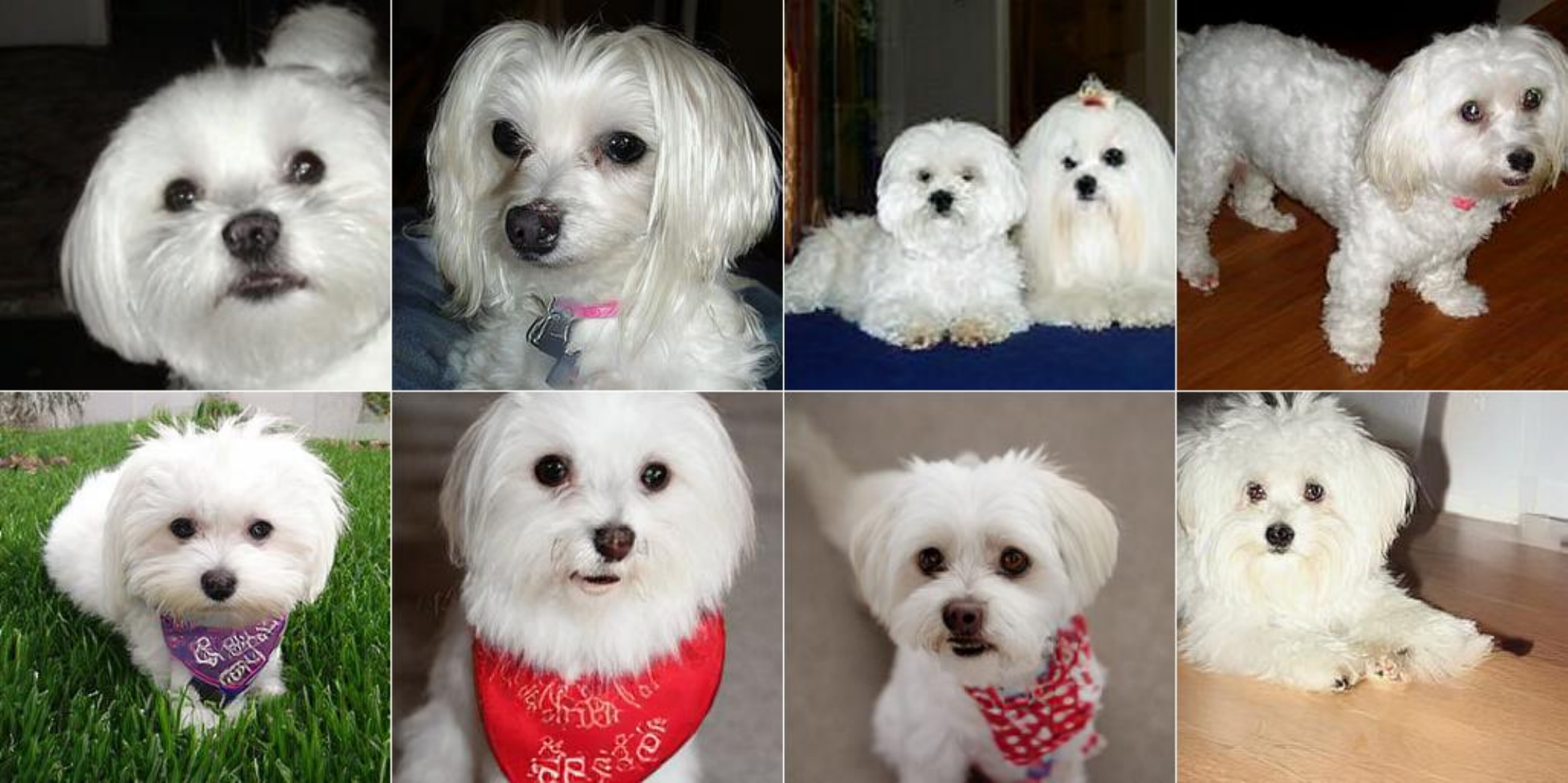}{Maltese dog}\hfill
\morevis{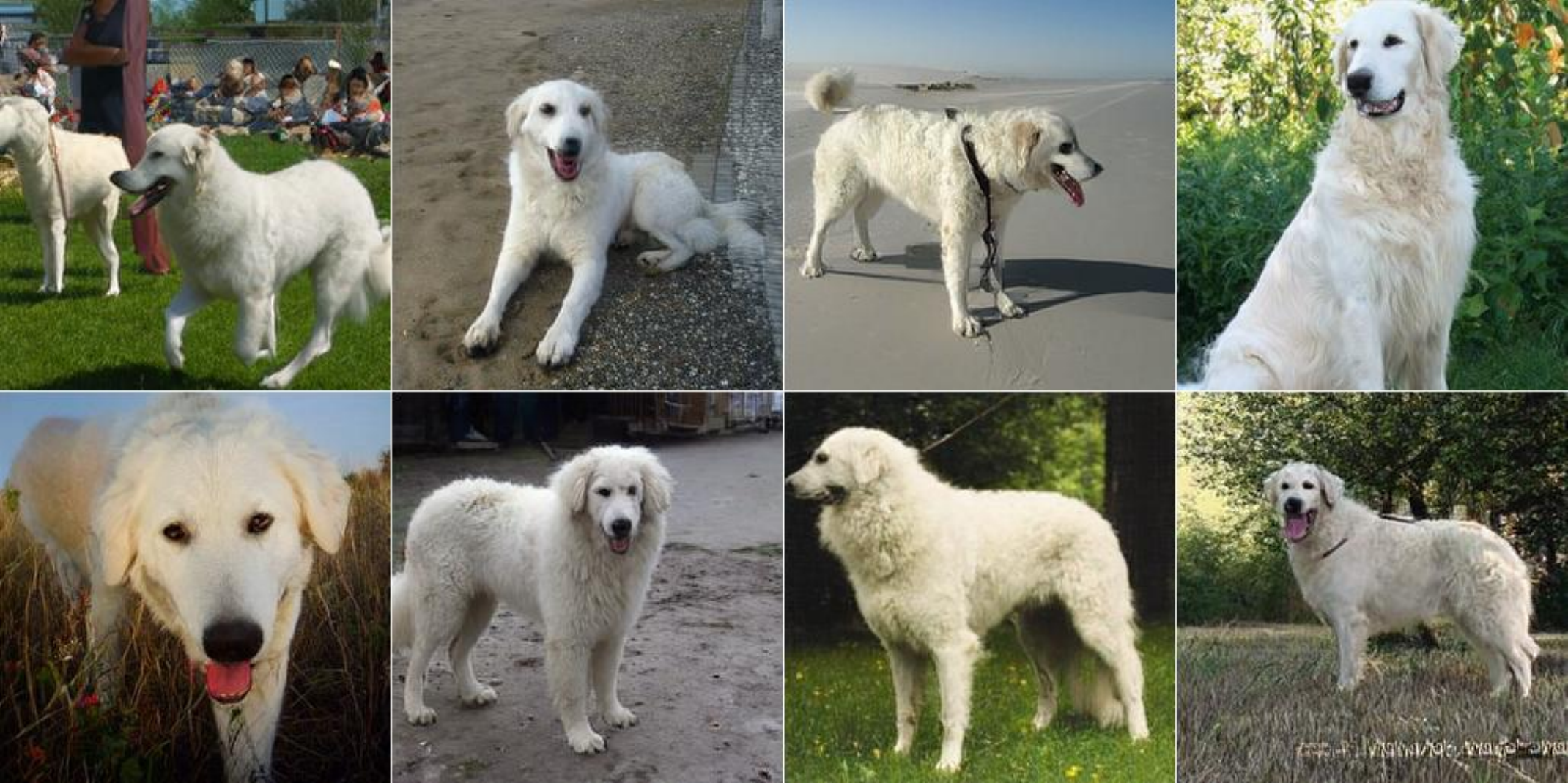}{kuvasz}
\caption{Additional visualizations of generated samples from distilled $\text{DiT}^\text{DH}\text{-XL}$ (FID=1.48).}
\label{fig:appendix_more_visualizations_1}
\end{figure}
\clearpage

\begin{figure}[p]
\centering
\morevis{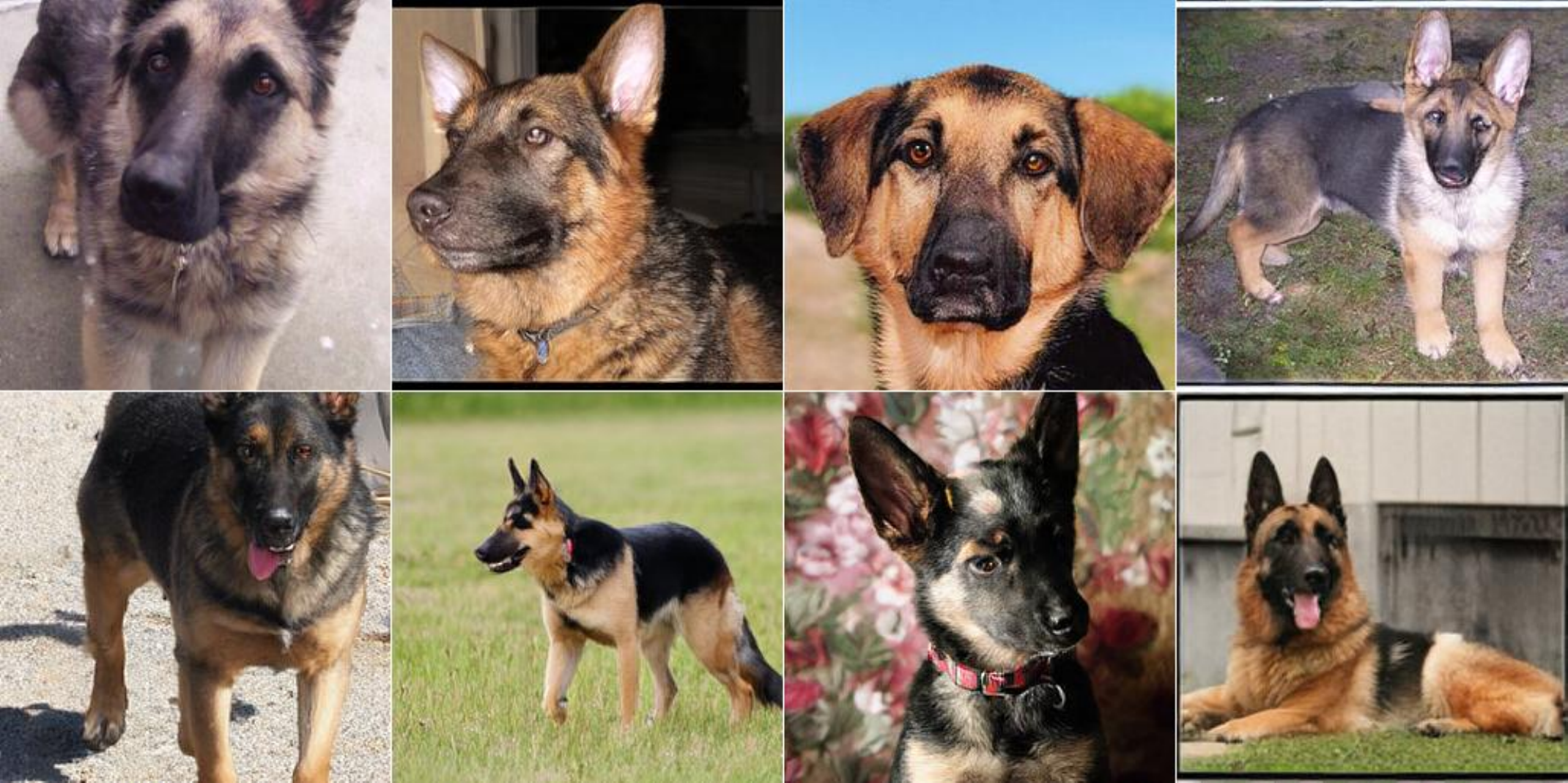}{German shepherd}\hfill
\morevis{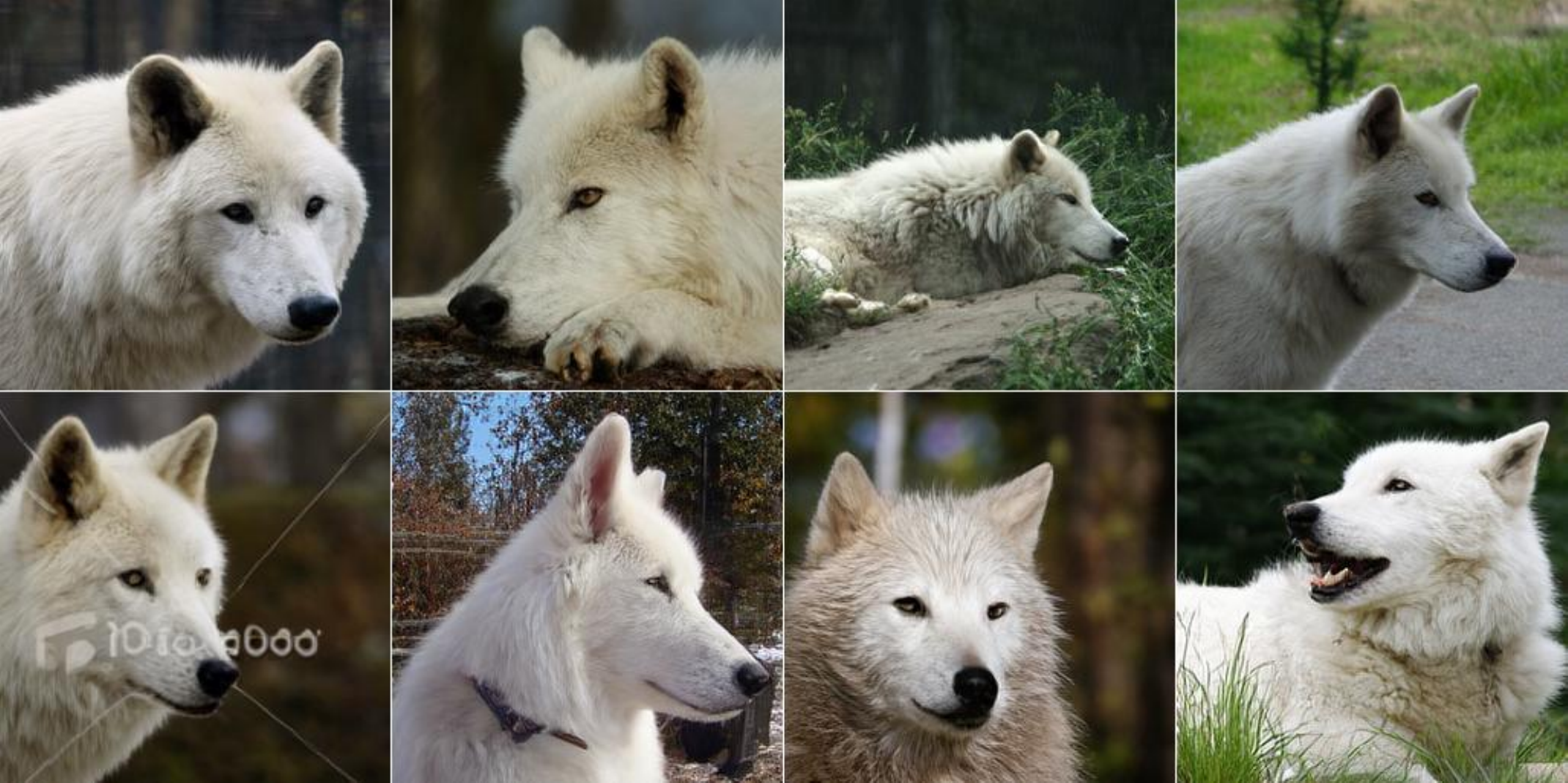}{white wolf}\\[0.35em]
\morevis{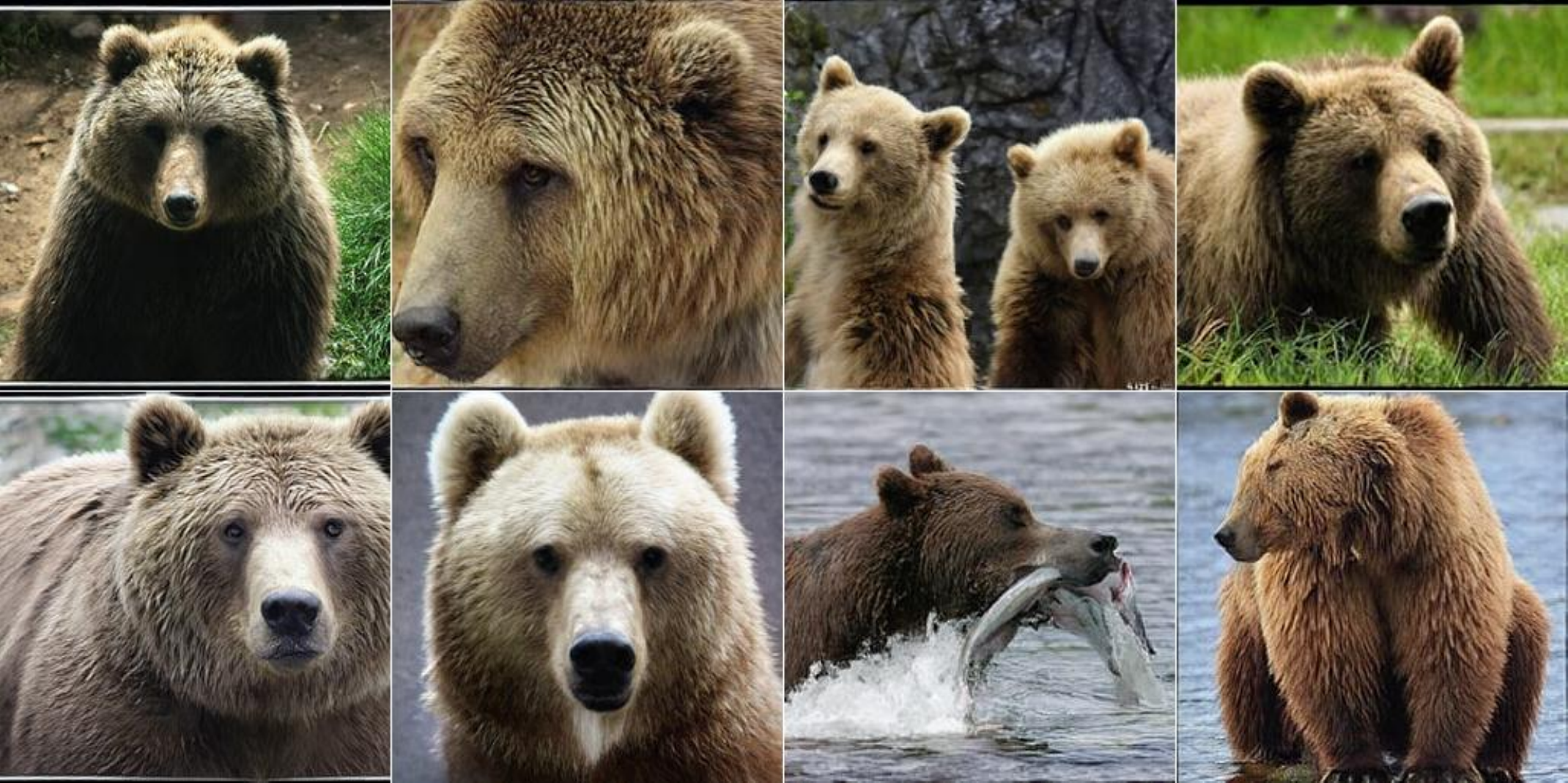}{brown bear}\hfill
\morevis{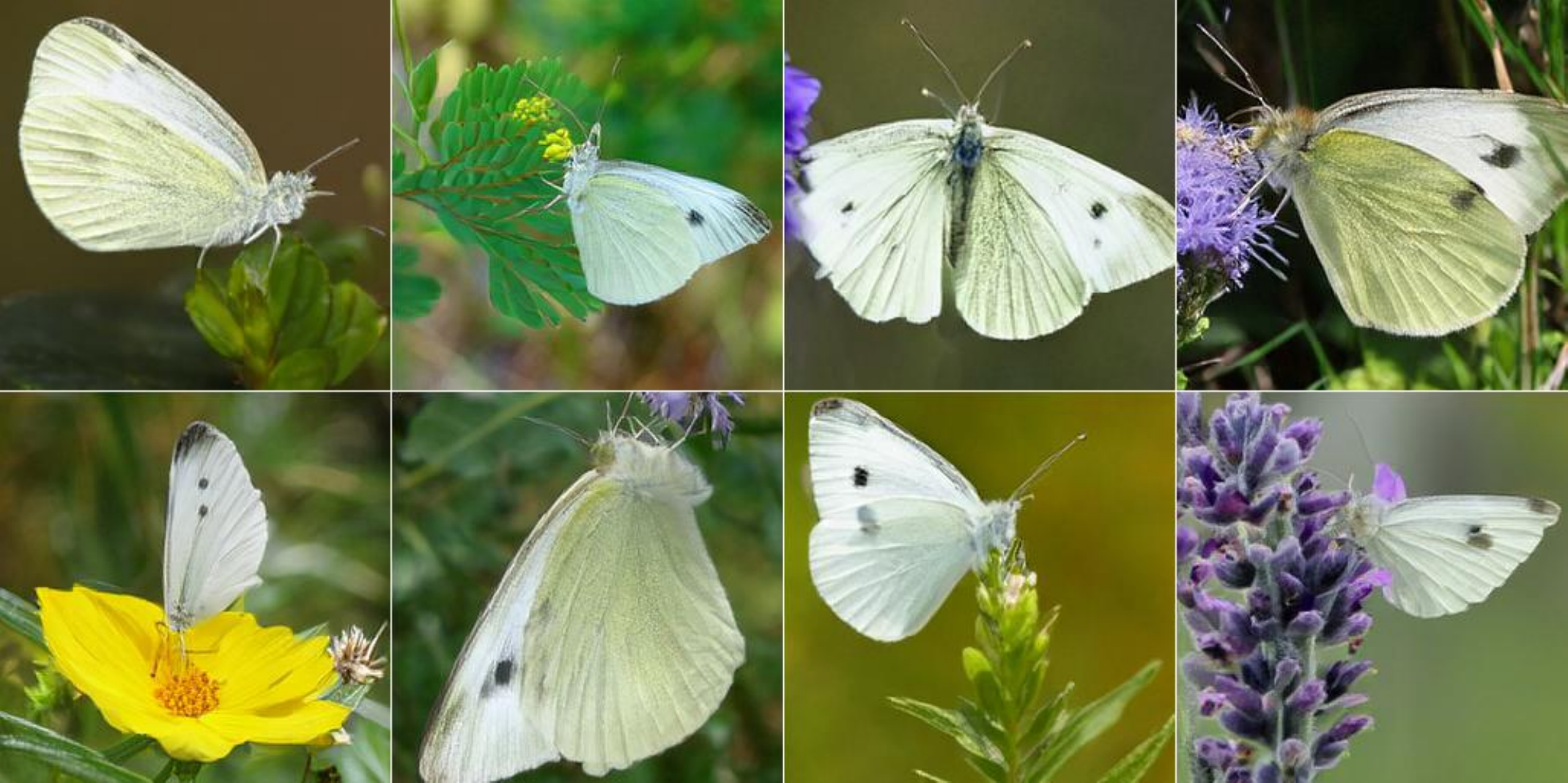}{cabbage butterfly}\\[0.35em]
\morevis{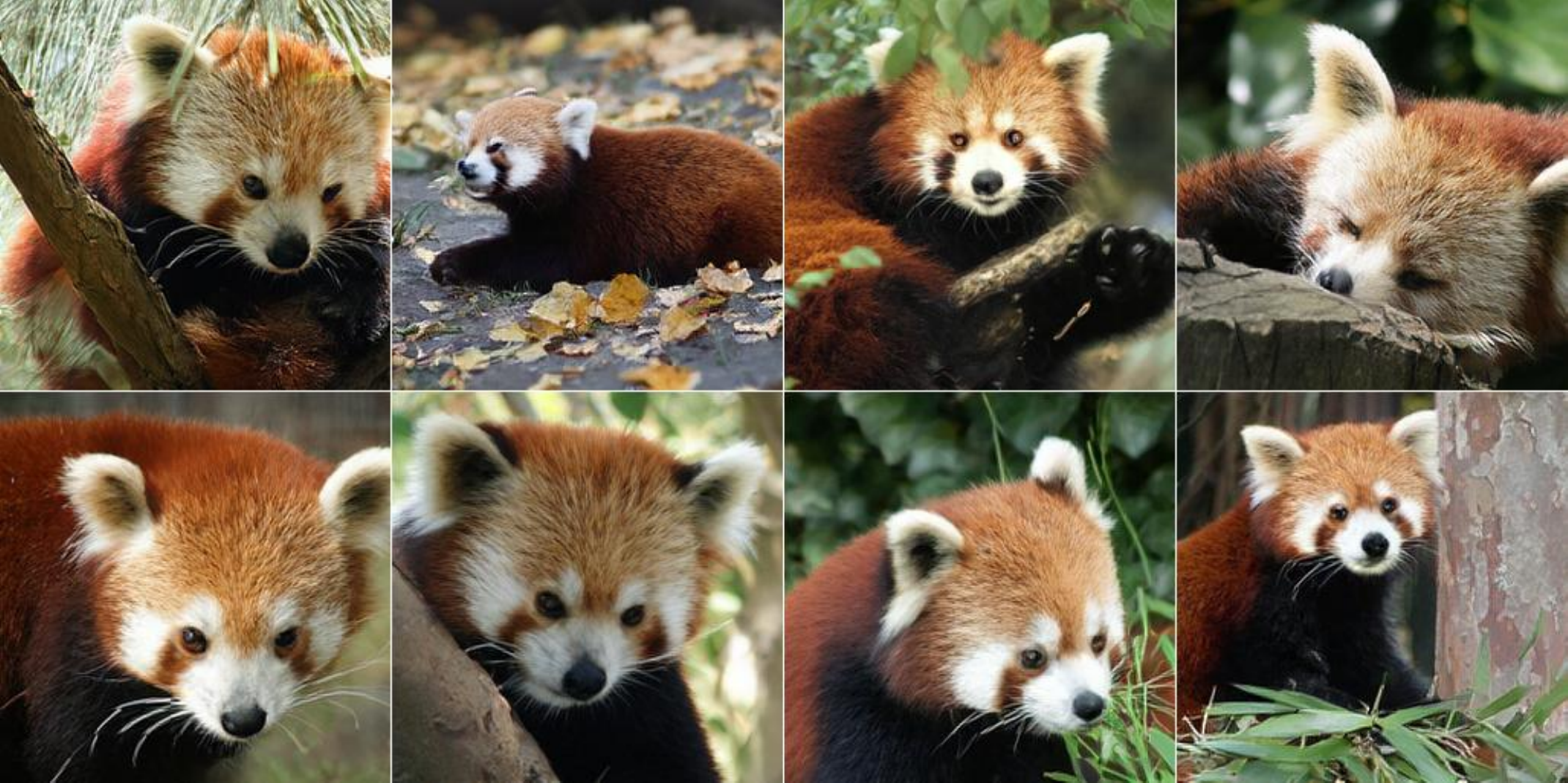}{red panda}\hfill
\morevis{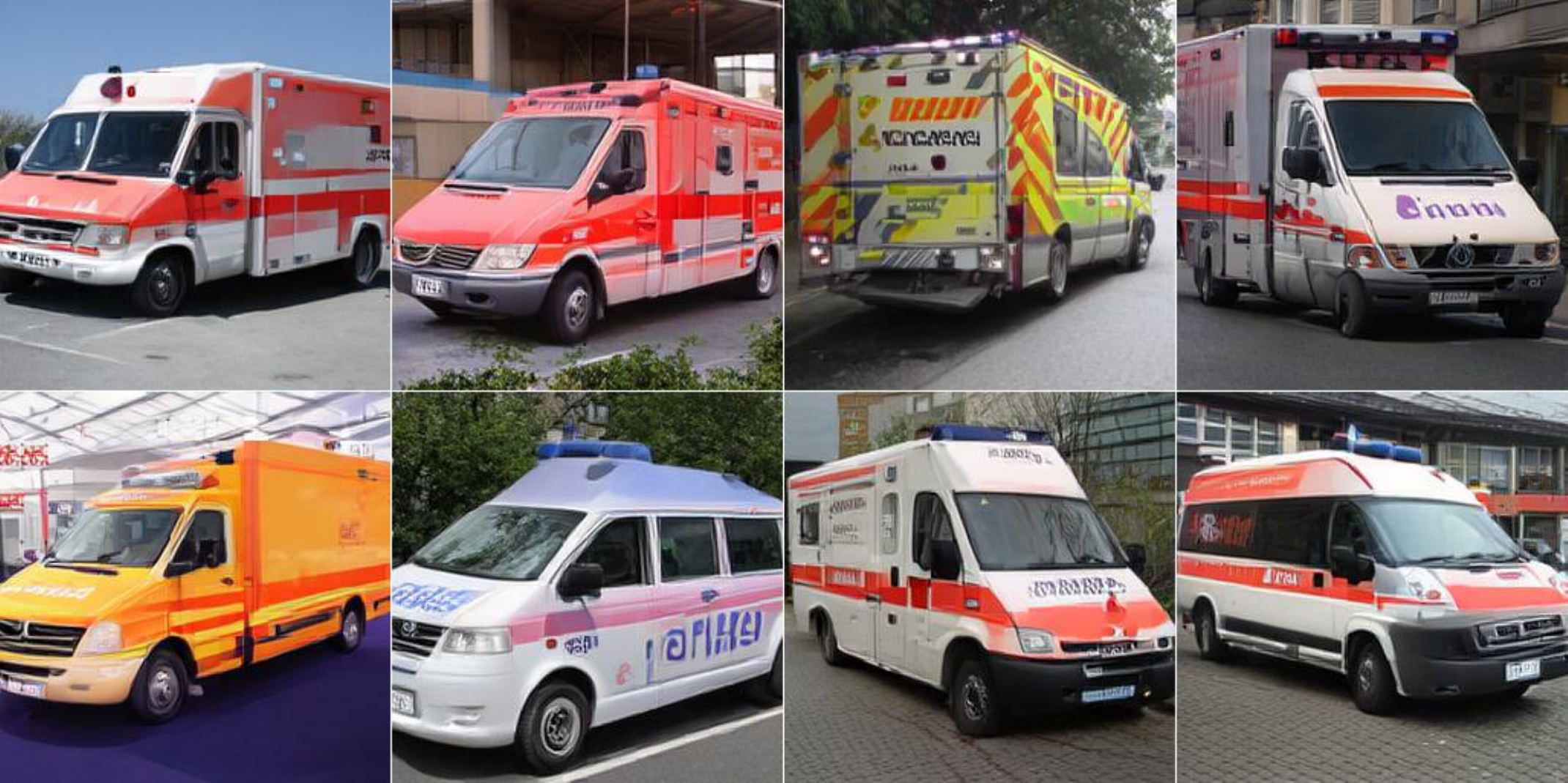}{ambulance}\\[0.35em]
\morevis{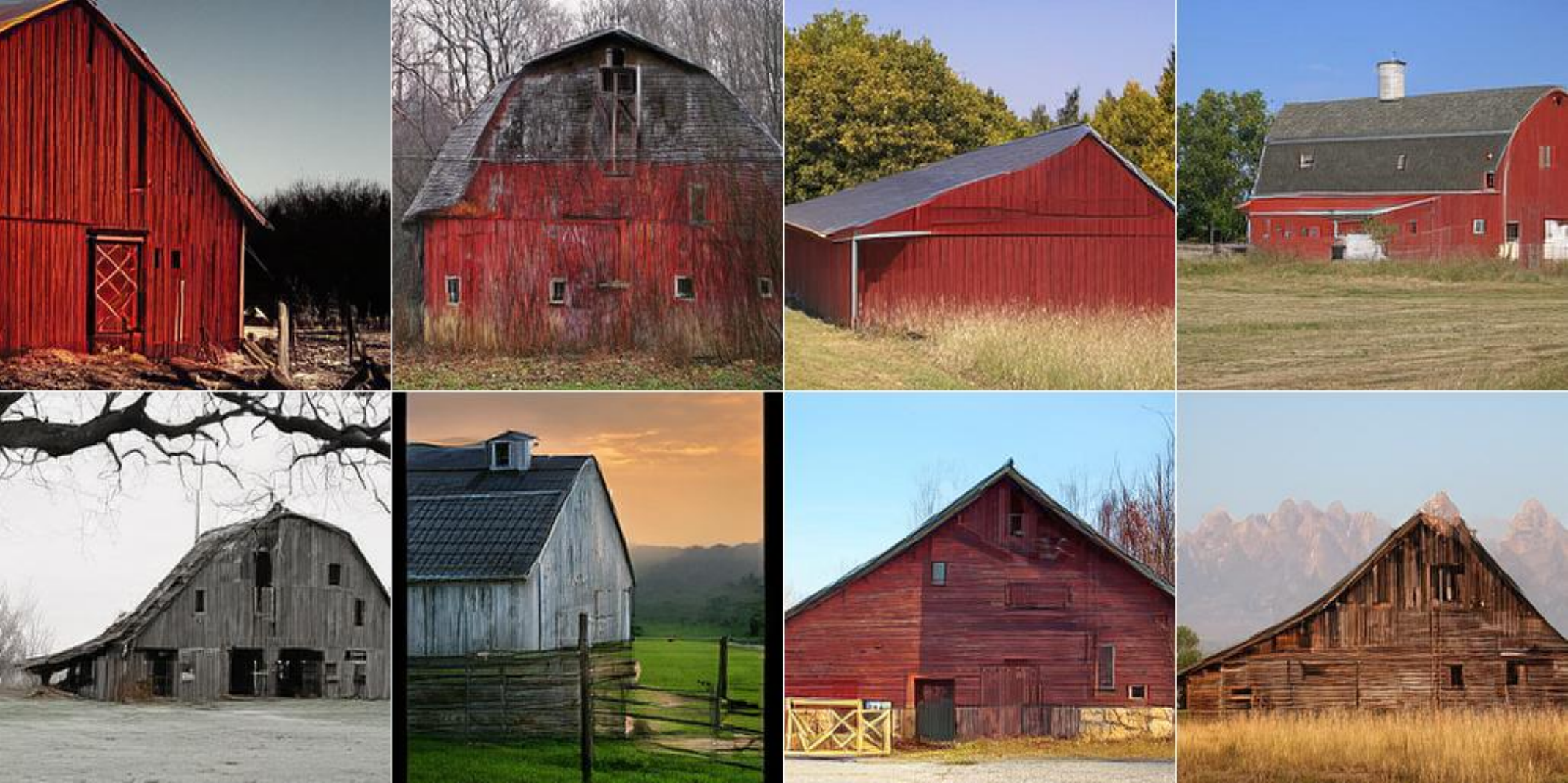}{barn}\hfill
\morevis{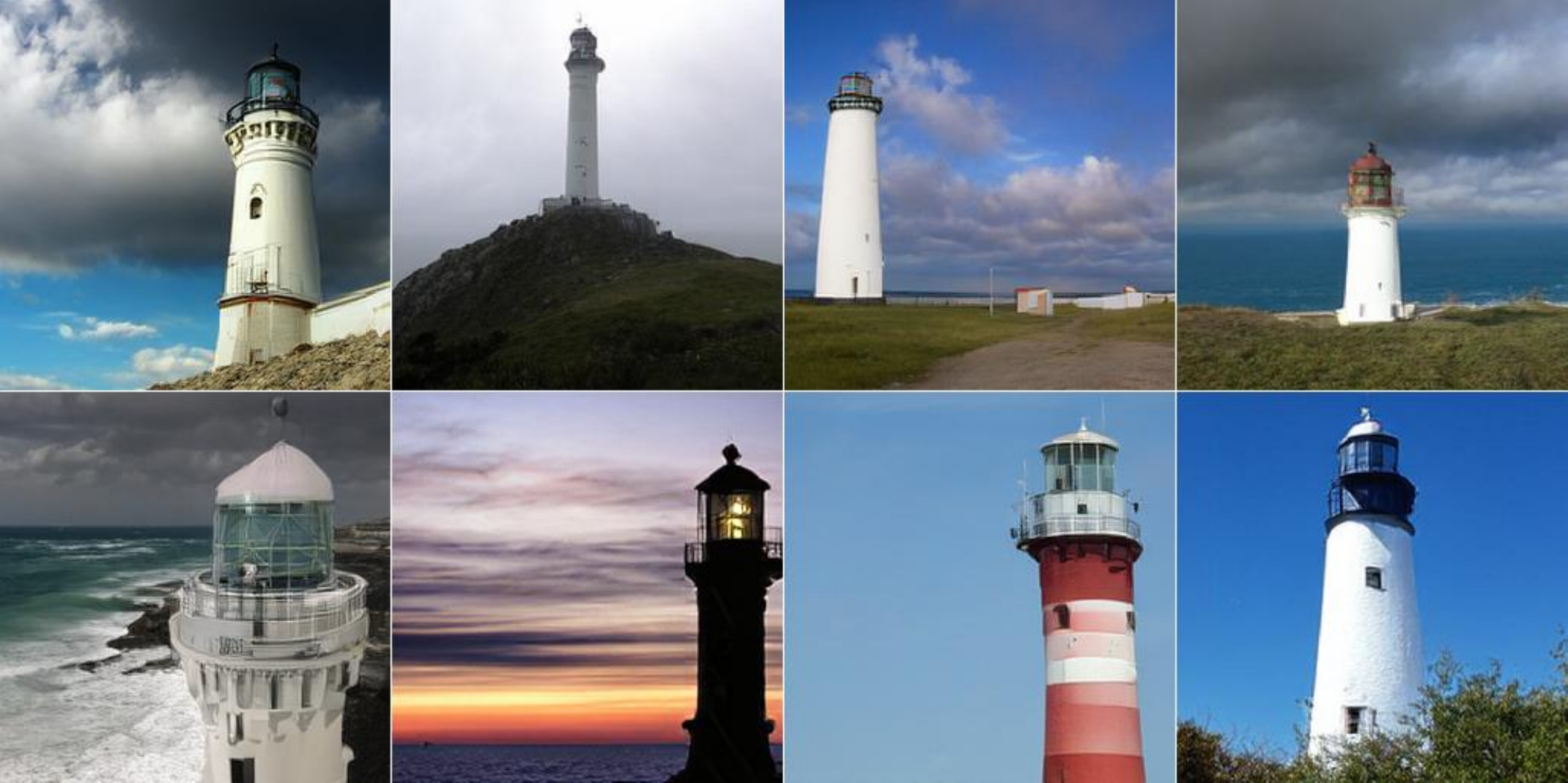}{beacon}\\[0.35em]
\morevis{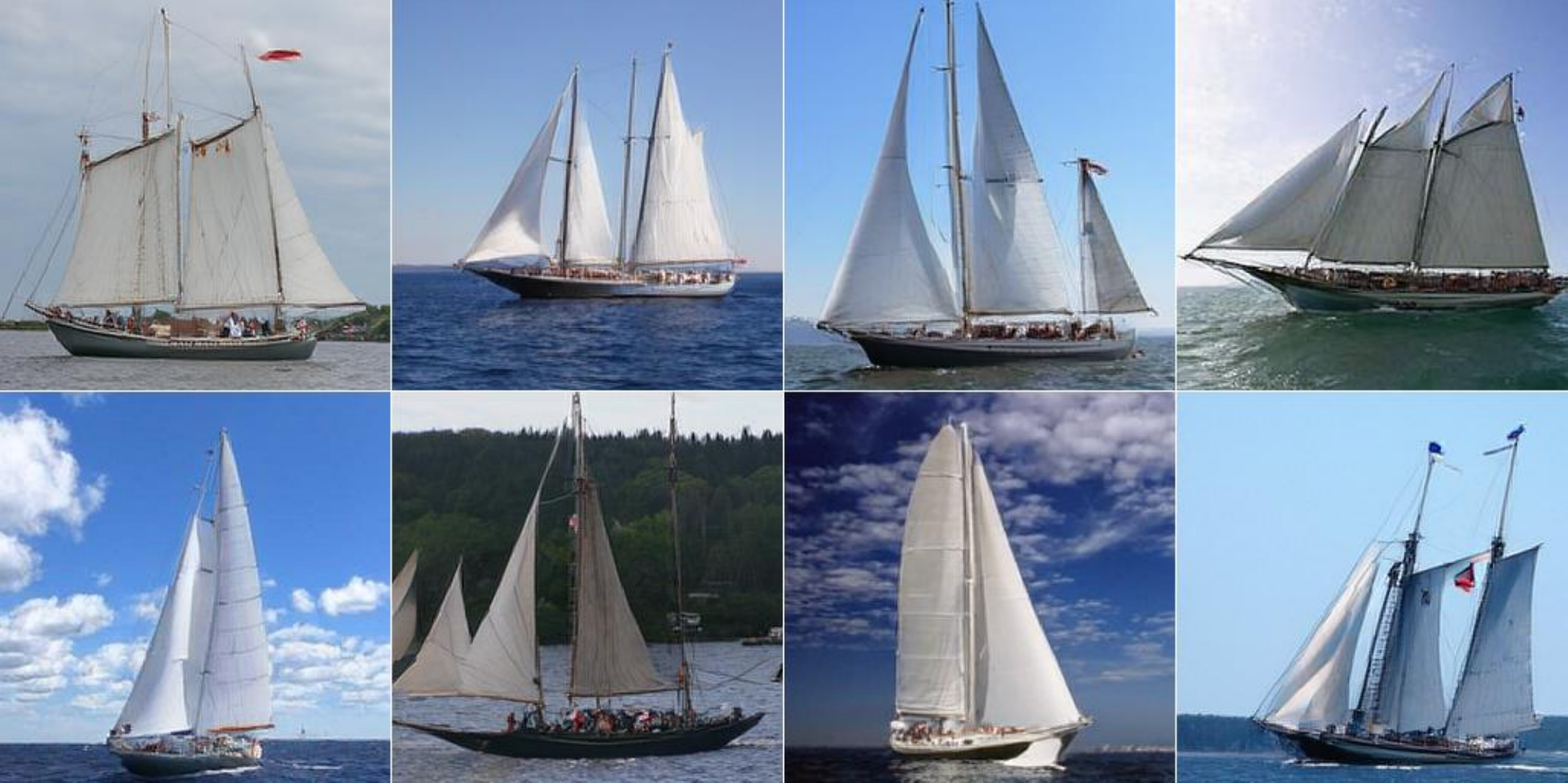}{schooner}\hfill
\morevis{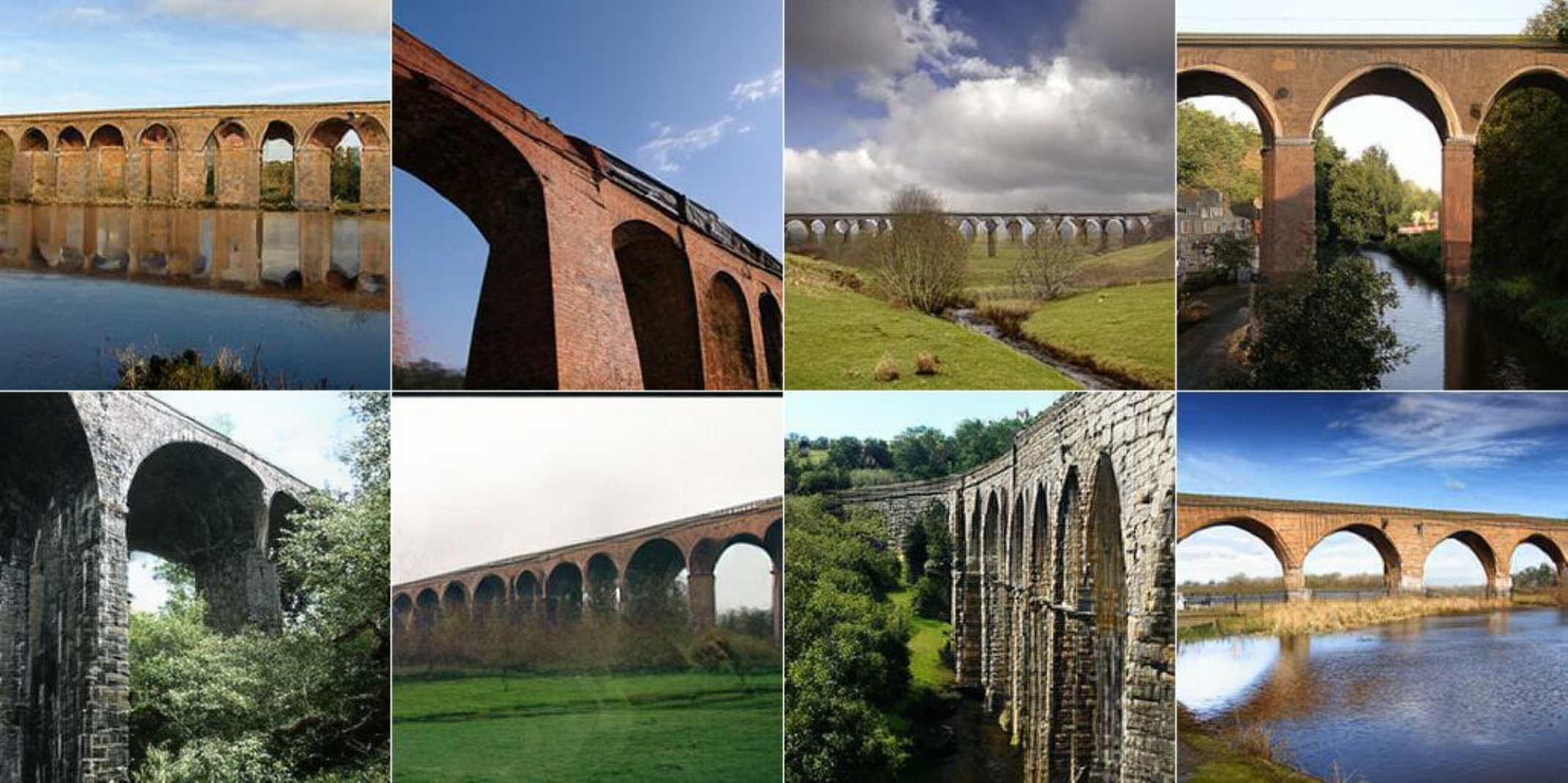}{viaduct}
\caption{Additional visualizations of generated samples from distilled $\text{DiT}^\text{DH}\text{-XL}$ (FID=1.48).}
\label{fig:appendix_more_visualizations_2}
\end{figure}
\clearpage


\begin{figure}[p]
\centering
\morevis{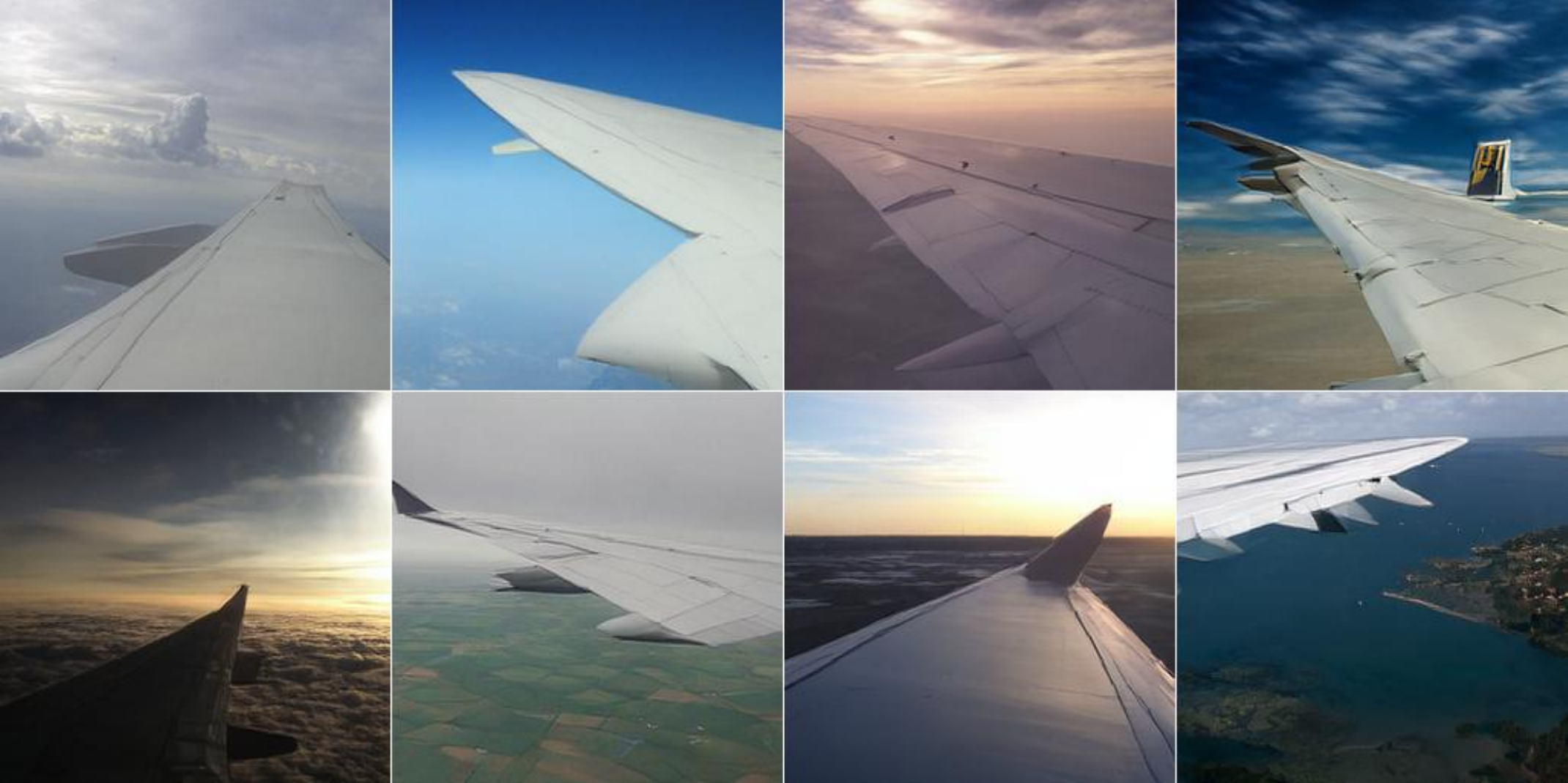}{wing}\hfill
\morevis{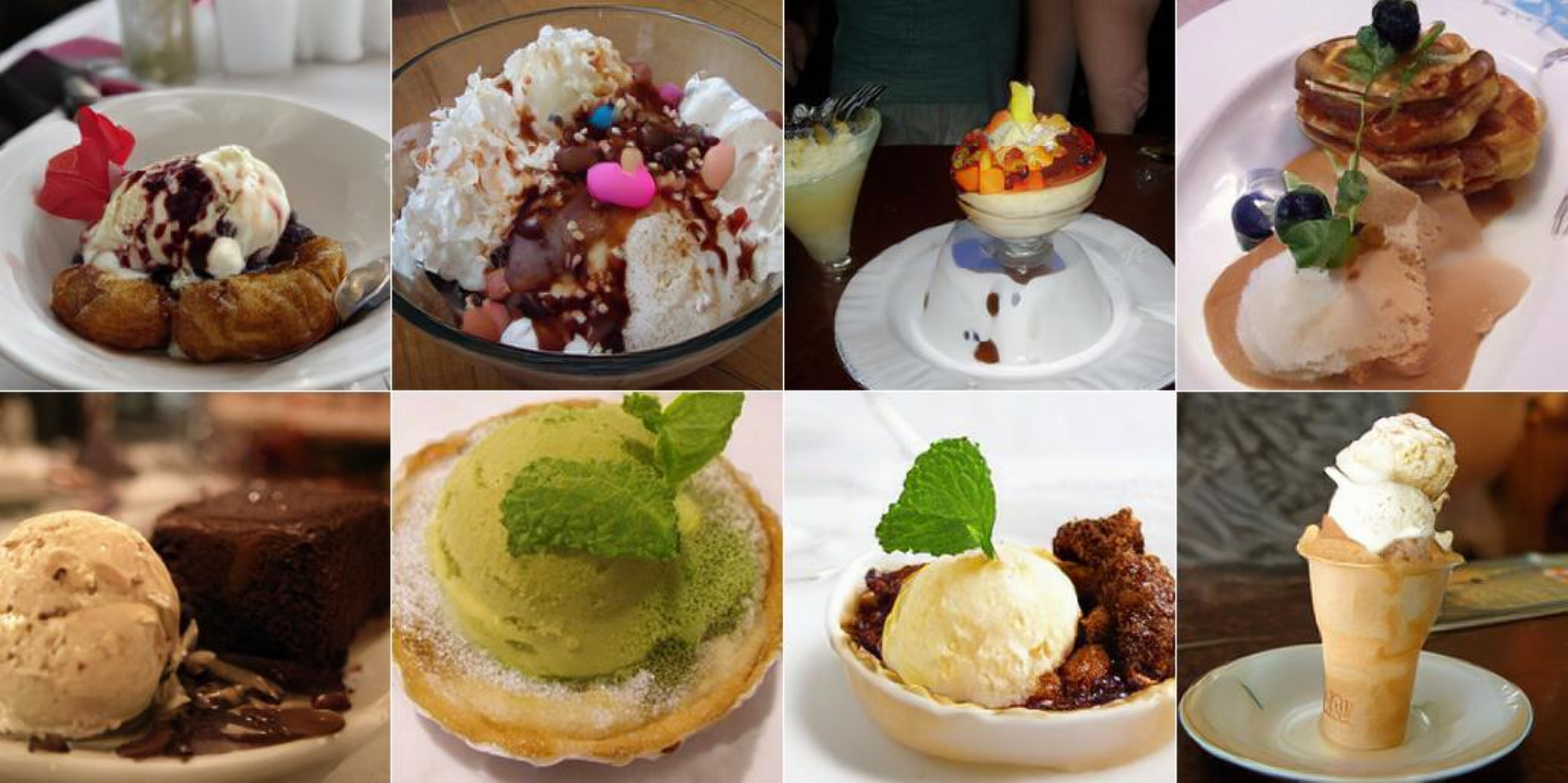}{ice cream}\\[0.35em]
\morevis{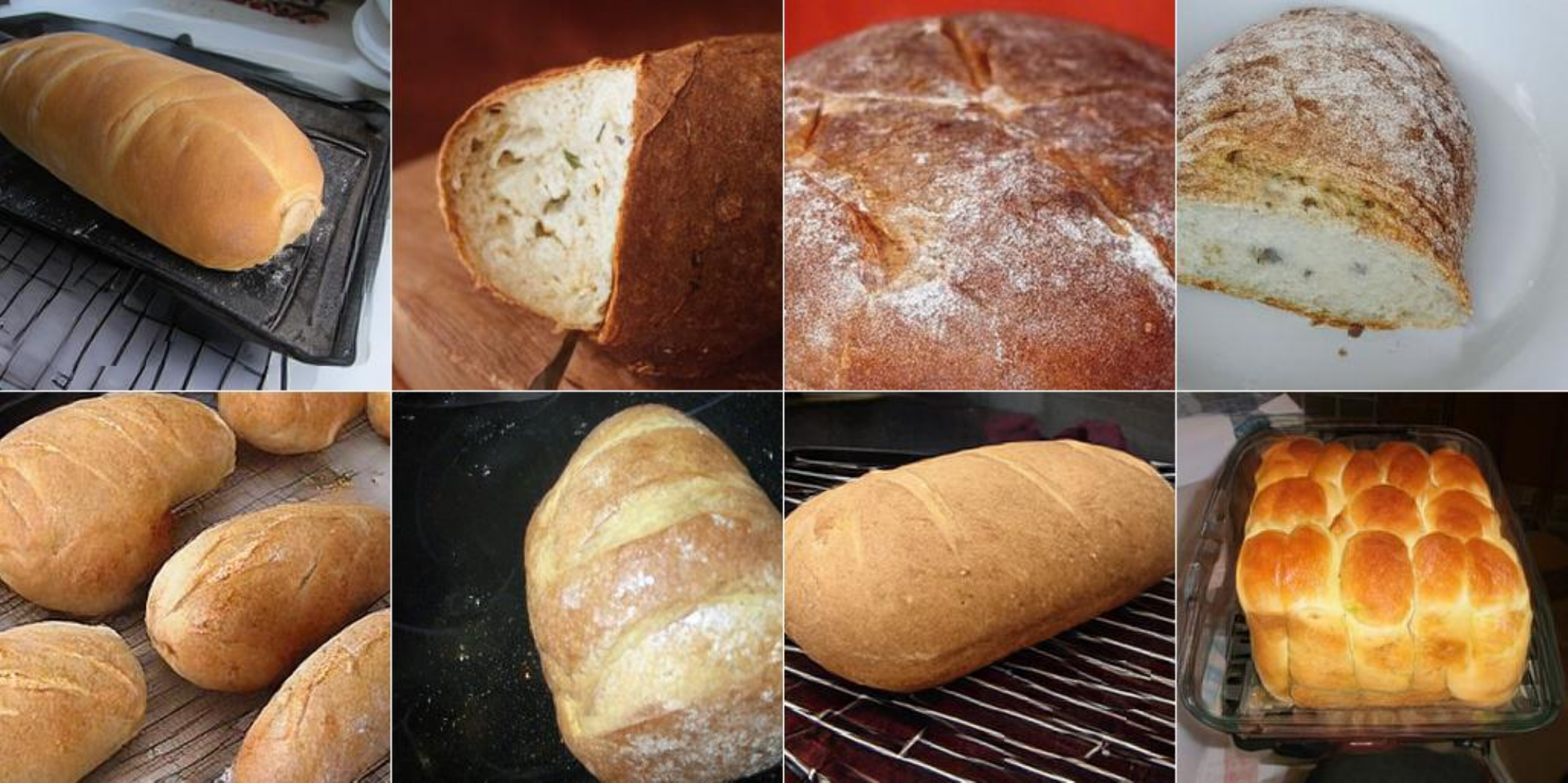}{French loaf}\hfill
\morevis{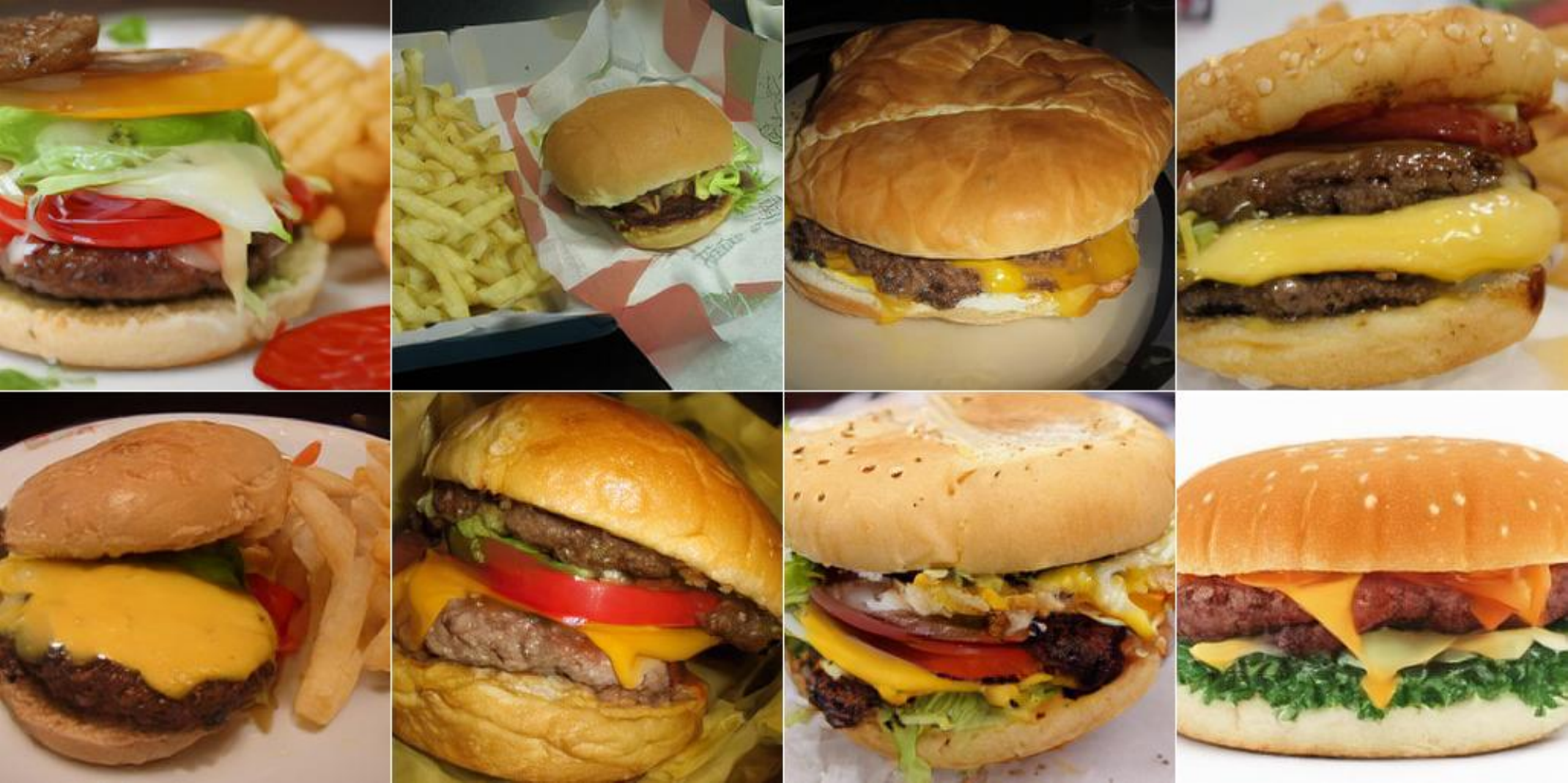}{cheeseburger}\\[0.35em]
\morevis{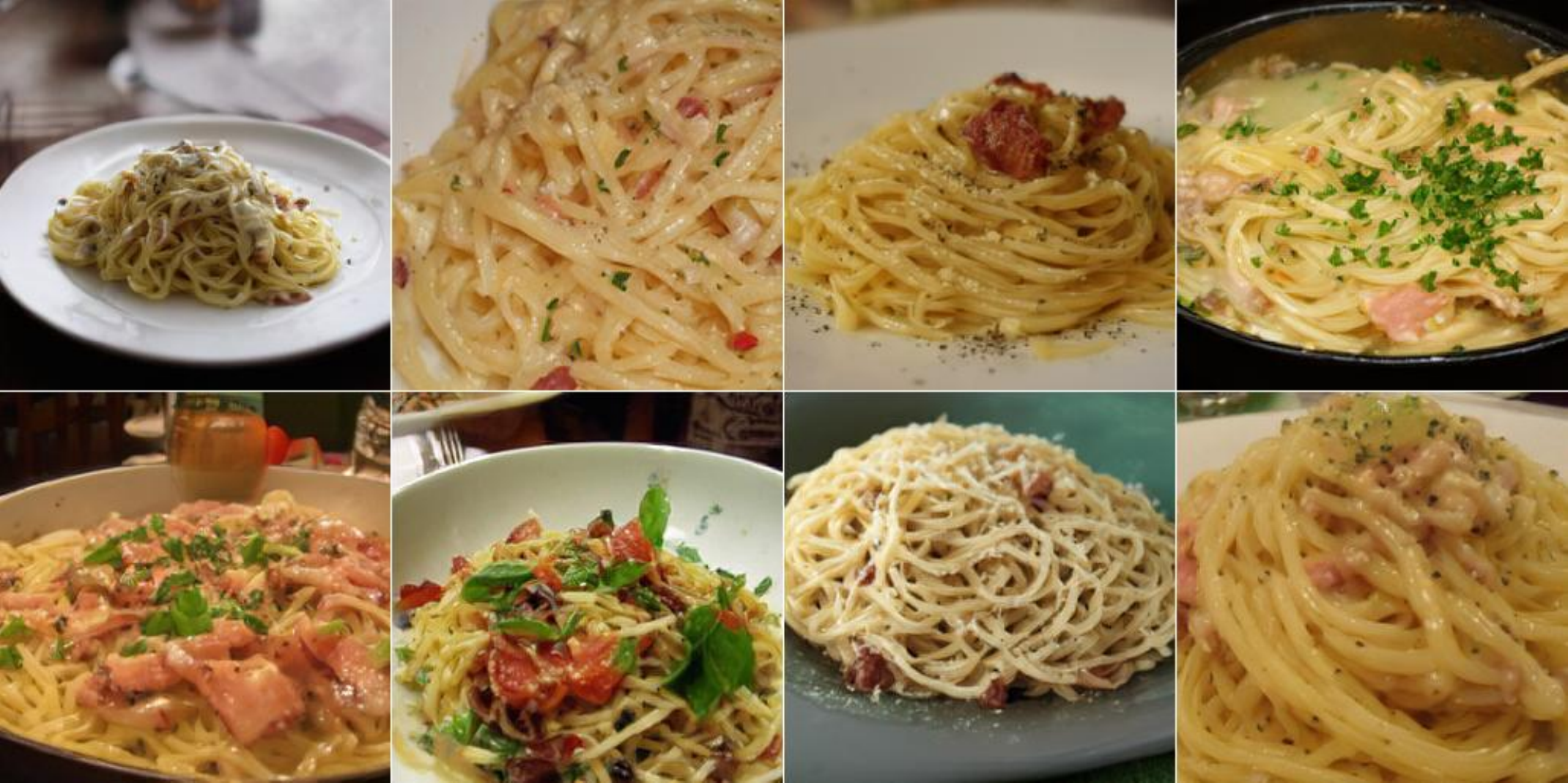}{carbonara}\hfill
\morevis{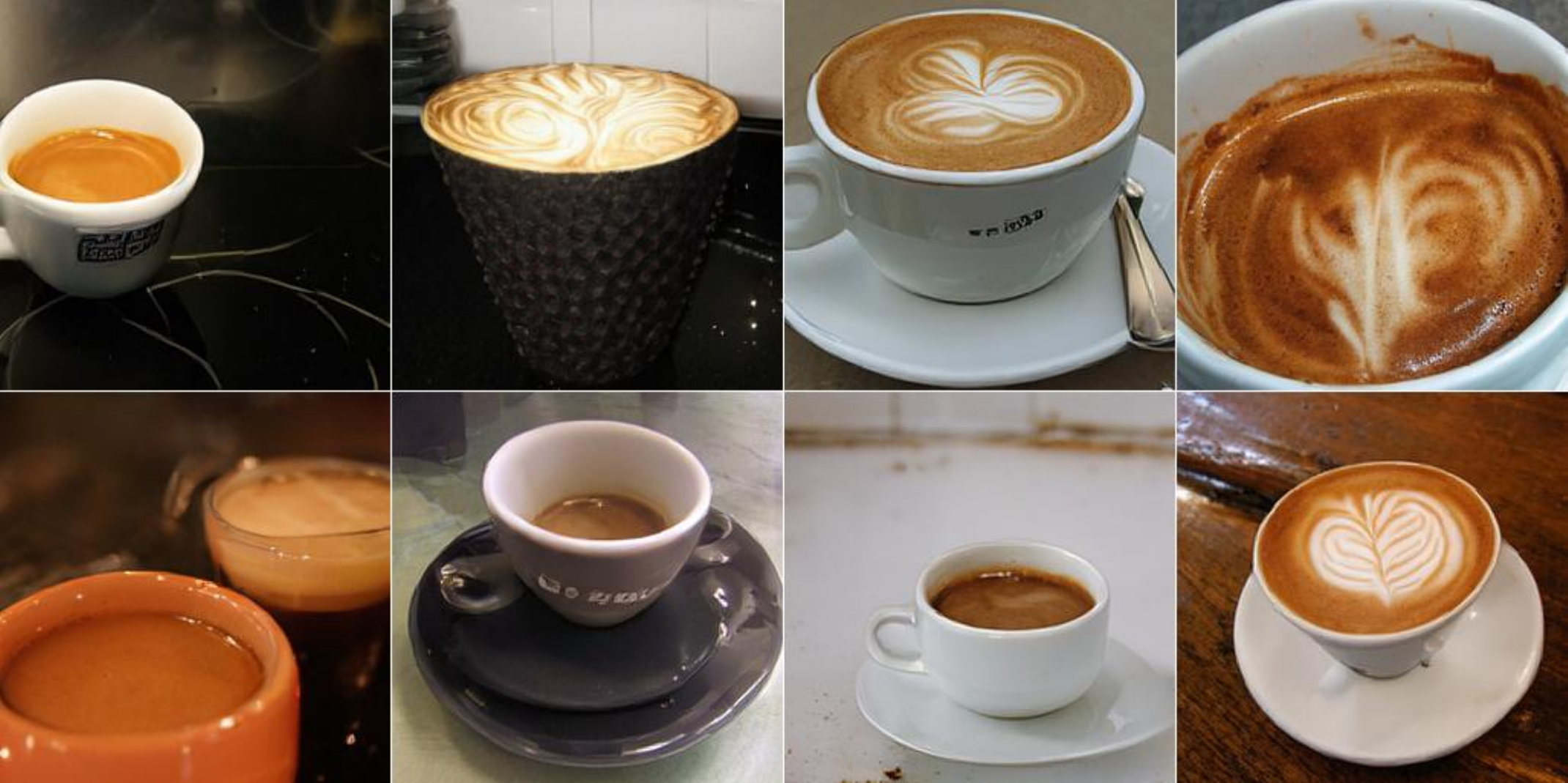}{espresso}\\[0.35em]
\morevis{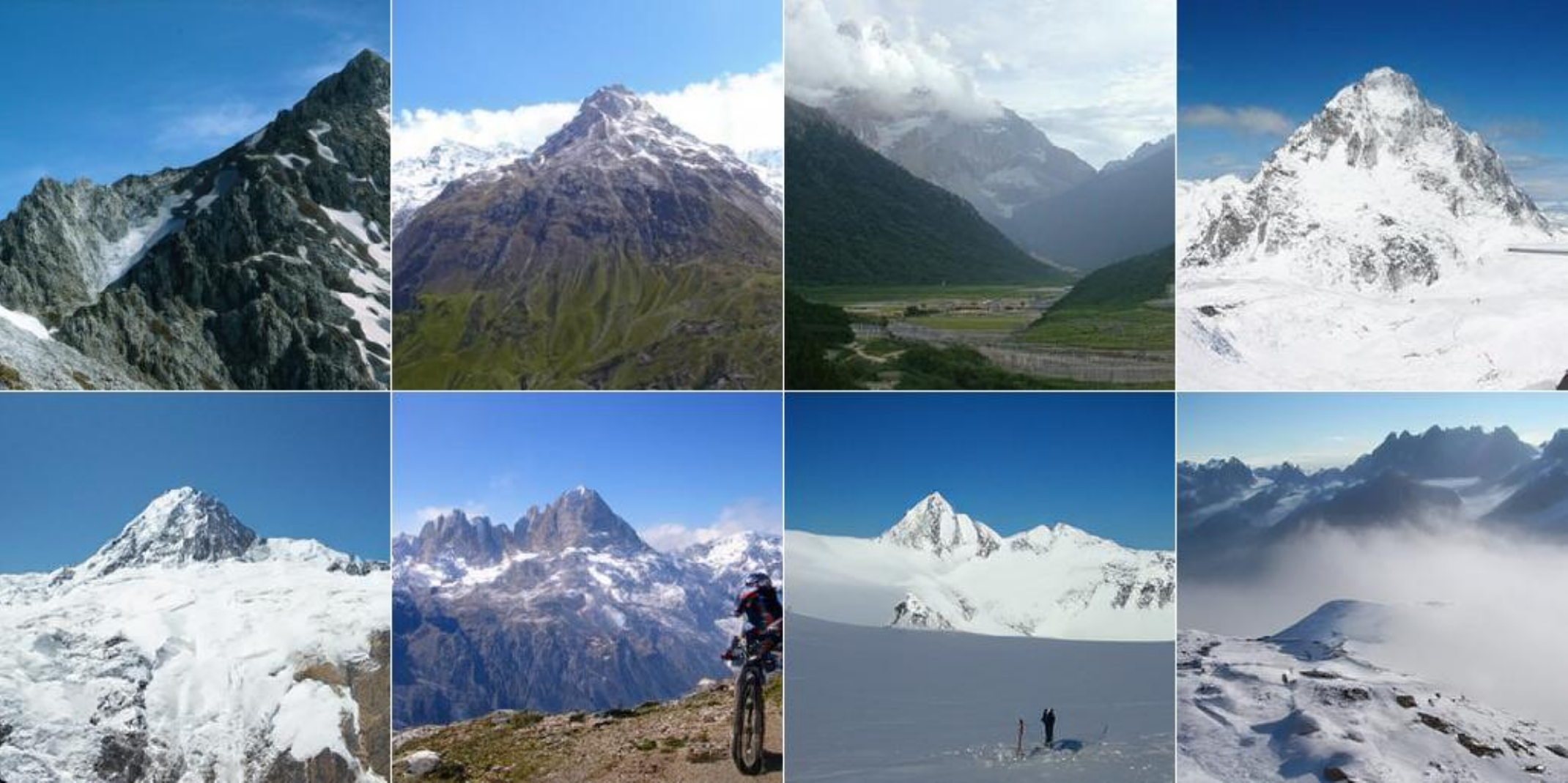}{alp}\hfill
\morevis{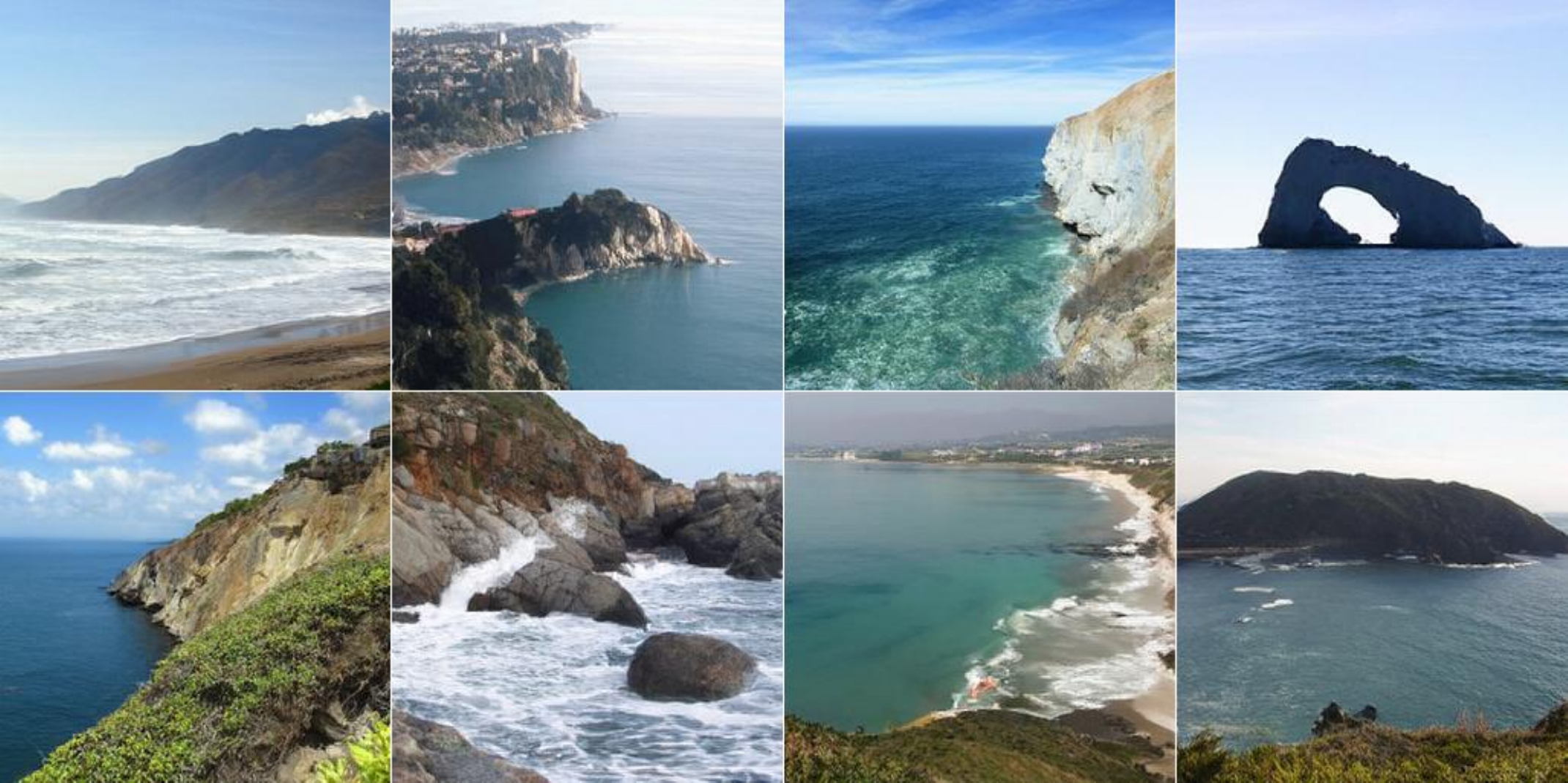}{promontory, headland}\\[0.35em]
\morevis{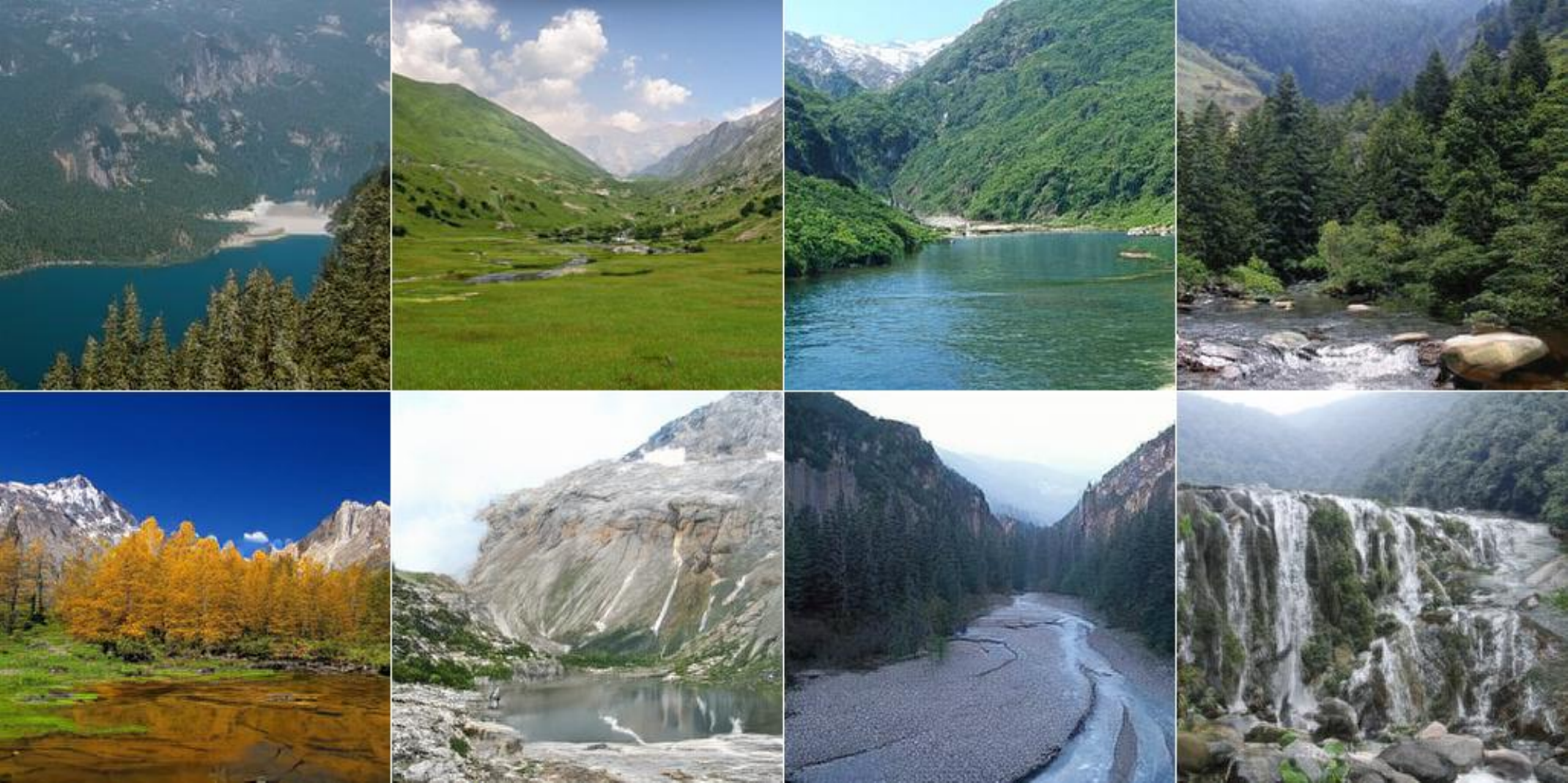}{valley}\hfill
\morevis{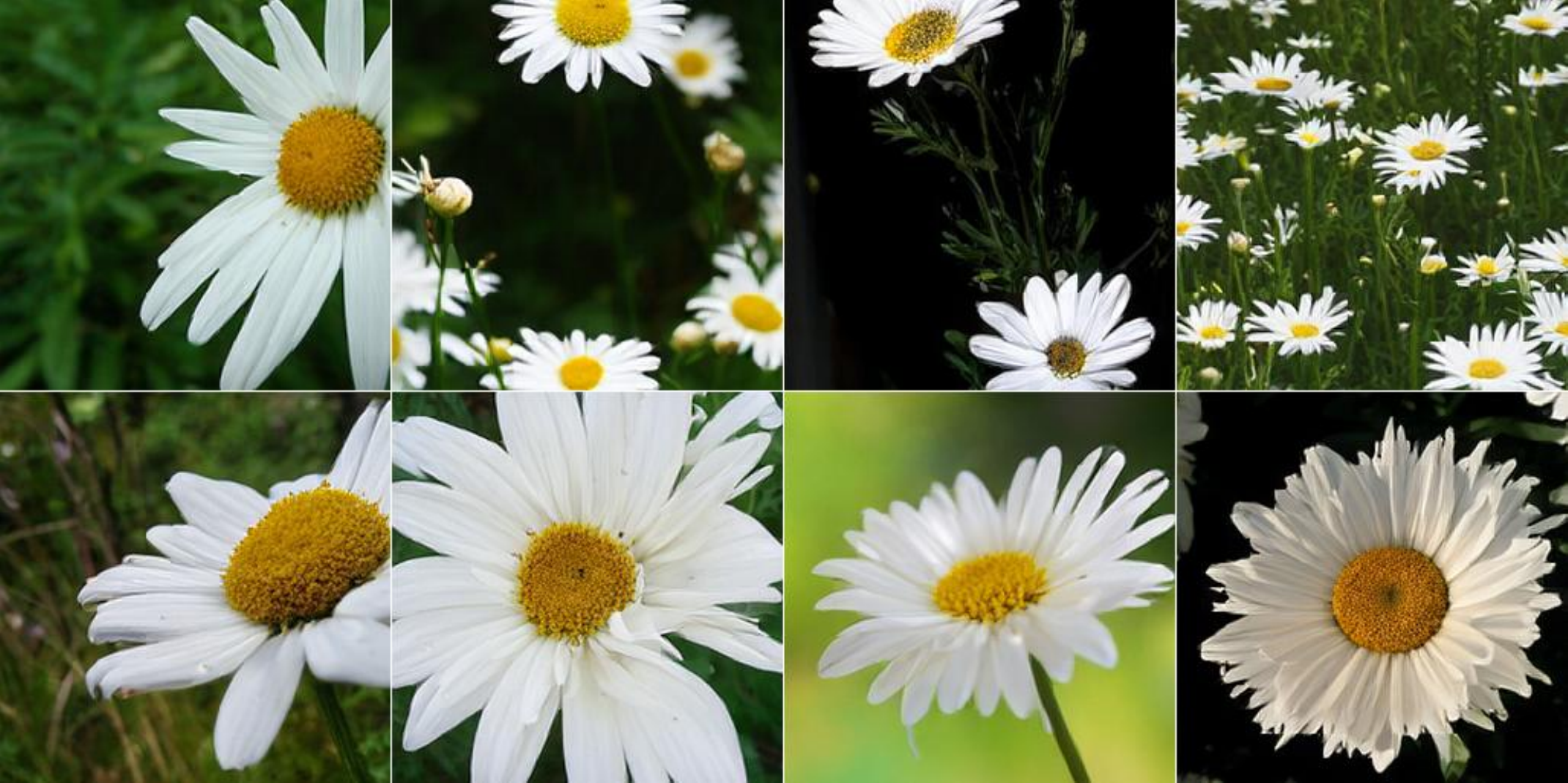}{daisy}
\caption{Additional visualizations of generated samples from distilled $\text{DiT}^\text{DH}\text{-XL}$ (FID=1.48).}
\label{fig:appendix_more_visualizations_4}
\end{figure}

%% file: tables/hyperparameters.tex
\begin{table}[t]
\centering
\caption{
Hyperparameter settings for \method.
}
\vspace{8pt}
\label{tab:hyperparameters}
\footnotesize
\SetTblrInner{rowsep=1.25pt}                     
\SetTblrInner{colsep=18.0pt}                      
\begin{tblr}{
    cell{1-11}{2}={halign=c,valign=m},         
    cell{1-11}{1}={halign=l,valign=m},           
    hline{1-2,12}={1-2}{1.0pt},      
}
Hyperparameter & Value \\
$N_c$ & $32$ \\
$N_{\mathrm{pos}}$ & $256$ \\
$N_{\mathrm{neg}}$ & $64$ \\
$N_{\mathrm{extra\_neg}}$ & $192$ \\
Temperatures $\tau$ & $\{0.02,\ 0.05,\ 0.2\}$ \\
Training steps $T$ & $10{,}000$ \\
Learning rate schedule & $\max\left\{3\times 10^{-7},\ 3\times 10^{-5}-\frac{(3\times 10^{-5}-3\times 10^{-7})}{10000}\cdot t\right\}$ \\
Gradient clipping & $1.0$ \\
Optimizer & AdamW ($\beta_1=0.9,\ \beta_2=0.95$) \\
Weight decay & $0.0$ \\
\end{tblr}
\vspace{4pt}
\end{table}

%% file: tables/idg_layers_ablation.tex
\begin{table}[t]
    \centering
    \caption{
        Ablation on the intermediate layers used by IDG at $\lambda=1.4$.\newline
    }
    \label{tab:idg_layers}
    \scriptsize
    \SetTblrInner{rowsep=2.0pt}
    \SetTblrInner{colsep=2.0pt}
    \begin{tblr}{
        width=\textwidth,
        colspec={Q[c] *{15}{X[c]}},
        cell{1-6}{1-16}={halign=c,valign=m},
        hline{1,2,6,Z}={1pt},
        vline{2}={0.5pt},
        vline{6,12,16}={0.25pt},
        cell{2-6}{8}={bg=lightgray!35},
    }
        \SetCell[c=16]{c}\textbf{Layers for Internal Drifting Guidance}
        & & & & & & & & & & & & & & & \\
        \textbf{8}
        & \checkmark &            &            &
        & \checkmark & \checkmark & \checkmark &            &            &
        & \checkmark & \checkmark & \checkmark &
        & \checkmark \\
        \textbf{10}
        &            & \checkmark &            &
        & \checkmark &            &            & \checkmark & \checkmark &
        & \checkmark & \checkmark &            &
          \checkmark & \checkmark \\
        \textbf{12}
        &            &            & \checkmark &
        &            & \checkmark &            & \checkmark &            &
          \checkmark & \checkmark &            & \checkmark &
          \checkmark & \checkmark \\
        \textbf{14}
        &            &            &            & \checkmark
        &            &            & \checkmark &            & \checkmark
        & \checkmark &            & \checkmark & \checkmark
        & \checkmark & \checkmark \\
        \textbf{FID} ($\downarrow$)
        & 1.54 & 1.56 & 1.55 & 1.52
        & 1.49 & 1.56 & \textbf{1.48} & 1.53 & 1.54
        & 1.50 & 1.60 & 1.56 & 1.55
        & 1.57 & 1.52 \\
    \end{tblr}
\end{table}

%% file: tables/idg_tokens_ablation.tex
\setlength{\columnsep}{16pt}
\begin{wraptable}{R}{0.5\textwidth}
\vspace{-1em}
\centering
\caption{
    Ablation on CLS and extra tokens.
}
\vspace{-5pt}
\label{tab:idg_tokens}
\footnotesize
\SetTblrInner{rowsep=2.0pt}
\SetTblrInner{colsep=13.0pt}
\begin{tblr}{
    cell{1-4}{1-3}={halign=c,valign=m},
    cell{1}{1}={r=2}{},
    cell{1}{2}={c=2}{},
    hline{2}={2-4}{},
    hline{1,3,Z}={1pt},
    vline{2}={1-5}{},
    vline{3}={2-5}{},
}
    \textbf{CLS + extra tokens}   & \SetCell[c=2]{c}\textbf{FID} ($\downarrow$) & \\
                                  & $\lambda=1.0$ & $\lambda=1.4$ \\
    \textcolor{red}{\ding{55}}    & 1.77          & 1.56 \\
    \textcolor{green}{\checkmark} & \textbf{1.74} & \textbf{1.48} \\
\end{tblr}
\end{wraptable}

%% file: tables/ablation_idg_schedule.tex
\setlength{\columnsep}{16pt}
\begin{wraptable}{R}{0.4\textwidth}
        \centering
        \caption{
            Ablation on the IDG training schedule with $\omega=1.4$.
        }
        \label{tab:idg_training_schedule}
        \footnotesize
        \SetTblrInner{rowsep=2.0pt}
        \SetTblrInner{colsep=15.0pt}
        \begin{tblr}{
            cell{1-3}{1-2}={halign=c,valign=m},
            hline{1,2,Z}={1pt},
            cell{3}{1-2}={bg=lightgray!35},
        }
            \textbf{Training schedule} & \textbf{FID} ($\downarrow$) \\
            Two-stage                & 1.50 \\
            One-stage                & \textbf{1.48} \\
        \end{tblr}
\end{wraptable}

%% file: tables/ablation_idg_loss.tex
\setlength{\columnsep}{16pt}
\begin{wraptable}{R}{0.4\textwidth}
        \centering
        \caption{
            Ablation on the training loss for the guidance head ($\omega=1.4$).
        }
        \label{tab:idg_training_loss}
        \footnotesize
        \SetTblrInner{rowsep=2.0pt}
        \SetTblrInner{colsep=15.0pt}
        \begin{tblr}{
            cell{1-5}{1}={halign=l,valign=m},
            cell{1-5}{2}={halign=c,valign=m},
            hline{1,2,Z}={1pt},
        }
            \textbf{Training schedule} & \textbf{FID} ($\downarrow$) \\
            Drifting                 & \textbf{1.48} \\
            MSE                      & 1.51 \\
            Cosine sim.              & 267.65 \\
            Cosine sim. + norm       & 1.49
        \end{tblr}
\end{wraptable}

%% file: tables/from_scratch.tex
\begin{table*}[t]
\centering
\caption{
Attempts to train drifting models from scratch without auxiliary MAEs.
}
\label{tab:from_scratch}
\footnotesize
\SetTblrInner{rowsep=1.25pt}                     
\SetTblrInner{colsep=22.0pt}                      
\begin{tblr}{
  colspec={l c c c},
  row{1}={font=\bfseries},
  hline{1,Z}={1pt},
  hline{2}={0.5pt},
}
Latent space & Model              & Strategy & FID ($\downarrow$) \\
SD-VAE       & DiT-XL             & --               & 220.55 \\
RAE          & DiT$^\text{DH}$-XL & --               & 100.31 \\
RAE          & DiT$^\text{DH}$-XL & + decode-encode   & 7.04 \\
\end{tblr}
\end{table*}

%% file: iclr2027_arxiv.bbl
\begin{thebibliography}{60}
\providecommand{\natexlab}[1]{#1}
\providecommand{\url}[1]{\texttt{#1}}
\expandafter\ifx\csname urlstyle\endcsname\relax
  \providecommand{\doi}[1]{doi: #1}\else
  \providecommand{\doi}{doi: \begingroup \urlstyle{rm}\Url}\fi

\bibitem[Arjovsky \& Bottou(2017)Arjovsky and Bottou]{arjovsky2017a}
Mart{\'{\i}}n Arjovsky and L{\'{e}}on Bottou.
\newblock Towards principled methods for training generative adversarial networks.
\newblock In \emph{International Conference on Learning Representations}, 2017.

\bibitem[Arjovsky et~al.(2017)Arjovsky, Chintala, and Bottou]{arjovsky2017wgan}
Mart{\'{\i}}n Arjovsky, Soumith Chintala, and L{\'{e}}on Bottou.
\newblock Wasserstein generative adversarial networks.
\newblock In \emph{International Conference on Machine Learning}, 2017.

\bibitem[Blattmann et~al.(2023)Blattmann, Dockhorn, Kulal, Mendelevitch, Kilian, Lorenz, Levi, English, Voleti, Letts, Jampani, and Rombach]{blattmann2024svd}
Andreas Blattmann, Tim Dockhorn, Sumith Kulal, Daniel Mendelevitch, Maciej Kilian, Dominik Lorenz, Yam Levi, Zion English, Vikram Voleti, Adam Letts, Varun Jampani, and Robin Rombach.
\newblock Stable video diffusion: Scaling latent video diffusion models to large datasets.
\newblock \emph{arXiv preprint arXiv:2311.15127}, 2023.

\bibitem[Brock et~al.(2019)Brock, Donahue, and Simonyan]{brock2019biggan}
Andrew Brock, Jeff Donahue, and Karen Simonyan.
\newblock Large scale {GAN} training for high fidelity natural image synthesis.
\newblock In \emph{International Conference on Learning Representations}, 2019.

\bibitem[Caron et~al.(2021)Caron, Touvron, Misra, J{\'{e}}gou, Mairal, Bojanowski, and Joulin]{caron2021dino}
Mathilde Caron, Hugo Touvron, Ishan Misra, Herv{\'{e}} J{\'{e}}gou, Julien Mairal, Piotr Bojanowski, and Armand Joulin.
\newblock Emerging properties in self-supervised vision transformers.
\newblock In \emph{International Conference on Computer Vision}, 2021.

\bibitem[Chen et~al.(2024)Chen, Zhou, Wang, Shen, and Lyu]{chen2024trajectory}
Defang Chen, Zhenyu Zhou, Can Wang, Chunhua Shen, and Siwei Lyu.
\newblock On the trajectory regularity of ode-based diffusion sampling.
\newblock In \emph{International Conference on Machine Learning}, 2024.

\bibitem[Chen et~al.(2025)Chen, Zhang, Tan, Guibas, Wetzstein, and Bi]{piflow}
Hansheng Chen, Kai Zhang, Hao Tan, Leonidas Guibas, Gordon Wetzstein, and Sai Bi.
\newblock pi-flow: Policy-based few-step generation via imitation distillation.
\newblock \emph{arXiv preprint arXiv:2510.14974}, 2025.
\newblock URL \url{https://arxiv.org/abs/2510.14974}.

\bibitem[Deng et~al.(2009)Deng, Dong, Socher, Li, Li, and Fei{-}Fei]{deng2009imagenet}
Jia Deng, Wei Dong, Richard Socher, Li{-}Jia Li, Kai Li, and Li~Fei{-}Fei.
\newblock Imagenet: {A} large-scale hierarchical image database.
\newblock In \emph{IEEE/CVF Conference on Computer Vision Pattern Recognition}, 2009.

\bibitem[Deng et~al.(2026)Deng, Li, Li, Du, and He]{deng2026generative}
Mingyang Deng, He~Li, Tianhong Li, Yilun Du, and Kaiming He.
\newblock Generative modeling via drifting.
\newblock \emph{arXiv preprint arXiv:2602.04770}, 2026.

\bibitem[Dhariwal \& Nichol(2021)Dhariwal and Nichol]{dhariwal2021adm}
Prafulla Dhariwal and Alexander~Quinn Nichol.
\newblock Diffusion models beat gans on image synthesis.
\newblock In \emph{Advances in Neural Information Processing System}, 2021.

\bibitem[Esser et~al.(2024)Esser, Kulal, Blattmann, Entezari, M{\"{u}}ller, Saini, Levi, Lorenz, Sauer, Boesel, Podell, Dockhorn, English, and Rombach]{esser2024sd3}
Patrick Esser, Sumith Kulal, Andreas Blattmann, Rahim Entezari, Jonas M{\"{u}}ller, Harry Saini, Yam Levi, Dominik Lorenz, Axel Sauer, Frederic Boesel, Dustin Podell, Tim Dockhorn, Zion English, and Robin Rombach.
\newblock Scaling rectified flow transformers for high-resolution image synthesis.
\newblock In \emph{International Conference on Machine Learning}, 2024.

\bibitem[Fan et~al.(2026)Fan, Wu, Cao, et~al.]{fan2026scot}
Xuhui Fan, Hongyu Wu, Longbing Cao, et~al.
\newblock Scot: Unifying consistency models and rectified flows via straight-consistent trajectories.
\newblock \emph{Advances in Neural Information Processing System}, 2026.

\bibitem[Fu et~al.(2026)Fu, Wang, Guo, Zhou, Nie, and Wen]{fu2026frozenpixelspacediffusionmodel}
Zixuan Fu, Chong Wang, Lanqing Guo, Kailai Zhou, Jiahao Nie, and Bihan Wen.
\newblock A frozen pixel-space diffusion model can guide itself with its own samples.
\newblock \emph{arXiv preprint arXiv:2607.29122}, 2026.

\bibitem[Geng et~al.(2026{\natexlab{a}})Geng, Deng, Bai, Kolter, and He]{geng2025meanflow}
Zhengyang Geng, Mingyang Deng, Xingjian Bai, Zico Kolter, and Kaiming He.
\newblock Mean flows for one-step generative modeling.
\newblock \emph{Advances in Neural Information Processing System}, 2026{\natexlab{a}}.

\bibitem[Geng et~al.(2026{\natexlab{b}})Geng, Lu, Wu, Shechtman, Kolter, and He]{geng2025improved}
Zhengyang Geng, Yiyang Lu, Zongze Wu, Eli Shechtman, J~Zico Kolter, and Kaiming He.
\newblock Improved mean flows: On the challenges of fastforward generative models.
\newblock \emph{IEEE/CVF Conference on Computer Vision Pattern Recognition}, 2026{\natexlab{b}}.

\bibitem[Goodfellow et~al.(2014)Goodfellow, Pouget-Abadie, Mirza, Xu, Warde-Farley, Ozair, Courville, and Bengio]{goodfellow2014gan}
Ian Goodfellow, Jean Pouget-Abadie, Mehdi Mirza, Bing Xu, David Warde-Farley, Sherjil Ozair, Aaron Courville, and Yoshua Bengio.
\newblock Generative adversarial nets.
\newblock In \emph{Advances in Neural Information Processing System}, 2014.

\bibitem[Heusel et~al.(2017)Heusel, Ramsauer, Unterthiner, Nessler, and Hochreiter]{heusel2017fid}
Martin Heusel, Hubert Ramsauer, Thomas Unterthiner, Bernhard Nessler, and Sepp Hochreiter.
\newblock Gans trained by a two time-scale update rule converge to a local nash equilibrium.
\newblock In \emph{Advances in Neural Information Processing System}, 2017.

\bibitem[Ho \& Salimans(2021)Ho and Salimans]{ho2021cfg}
Jonathan Ho and Tim Salimans.
\newblock Classifier-free diffusion guidance.
\newblock In \emph{Advances in Neural Information Processing System Workshop}, 2021.

\bibitem[Ho et~al.(2020)Ho, Jain, and Abbeel]{ho2020ddpm}
Jonathan Ho, Ajay Jain, and Pieter Abbeel.
\newblock Denoising diffusion probabilistic models.
\newblock In \emph{Advances in Neural Information Processing System}, 2020.

\bibitem[Hu et~al.(2025)Hu, Lai, Wu, Mitsufuji, and Ermon]{hu2025raemeanflow}
Zheyuan Hu, Chieh-Hsin Lai, Ge~Wu, Yuki Mitsufuji, and Stefano Ermon.
\newblock Meanflow transformers with representation autoencoders.
\newblock \emph{arXiv preprint arXiv:2511.13019}, 2025.

\bibitem[Karras et~al.(2024)Karras, Aittala, Kynk{\"{a}}{\"{a}}nniemi, Lehtinen, Aila, and Laine]{karras2024autoguidance}
Tero Karras, Miika Aittala, Tuomas Kynk{\"{a}}{\"{a}}nniemi, Jaakko Lehtinen, Timo Aila, and Samuli Laine.
\newblock Guiding a diffusion model with a bad version of itself.
\newblock In \emph{Advances in Neural Information Processing System}, 2024.

\bibitem[Kim et~al.(2024)Kim, Lai, Liao, Takida, Murata, Uesaka, Mitsufuji, and Ermon]{Kim2024pagoda}
Dongjun Kim, Chieh{-}Hsin Lai, Wei{-}Hsiang Liao, Yuhta Takida, Naoki Murata, Toshimitsu Uesaka, Yuki Mitsufuji, and Stefano Ermon.
\newblock Pagoda: Progressive growing of a one-step generator from a low-resolution diffusion teacher.
\newblock In \emph{Advances in Neural Information Processing System}, 2024.

\bibitem[Kynk{\"{a}}{\"{a}}nniemi et~al.(2019)Kynk{\"{a}}{\"{a}}nniemi, Karras, Laine, Lehtinen, and Aila]{kynkaanniemi2019pr}
Tuomas Kynk{\"{a}}{\"{a}}nniemi, Tero Karras, Samuli Laine, Jaakko Lehtinen, and Timo Aila.
\newblock Improved precision and recall metric for assessing generative models.
\newblock In \emph{Advances in Neural Information Processing System}, 2019.

\bibitem[Labs et~al.(2025)Labs, Batifol, Blattmann, Boesel, Consul, Diagne, Dockhorn, English, English, Esser, Kulal, Lacey, Levi, Li, Lorenz, Müller, Podell, Rombach, Saini, Sauer, and Smith]{labs2025flux1kontextflowmatching}
Black~Forest Labs, Stephen Batifol, Andreas Blattmann, Frederic Boesel, Saksham Consul, Cyril Diagne, Tim Dockhorn, Jack English, Zion English, Patrick Esser, Sumith Kulal, Kyle Lacey, Yam Levi, Cheng Li, Dominik Lorenz, Jonas Müller, Dustin Podell, Robin Rombach, Harry Saini, Axel Sauer, and Luke Smith.
\newblock Flux.1 kontext: Flow matching for in-context image generation and editing in latent space.
\newblock \emph{arXiv preprint arXiv:2506.15742}, 2025.

\bibitem[Lai et~al.(2026)Lai, Nguyen, Murata, Takida, Uesaka, Mitsufuji, Ermon, and Tao]{lai2026unifiedviewscorebaseddrifting}
Chieh-Hsin Lai, Bac Nguyen, Naoki Murata, Yuhta Takida, Toshimitsu Uesaka, Yuki Mitsufuji, Stefano Ermon, and Molei Tao.
\newblock A unified view of drifting and score-based models.
\newblock \emph{arXiv preprint arXiv:2603.07514}, 2026.

\bibitem[Lin et~al.(2024)Lin, Wang, and Yang]{lin2024sdxllightning}
Shanchuan Lin, Anran Wang, and Xiao Yang.
\newblock Sdxl-lightning: Progressive adversarial diffusion distillation.
\newblock \emph{arXiv preprint arXiv:2402.13929}, 2024.

\bibitem[Lipman et~al.(2023)Lipman, Chen, Ben{-}Hamu, Nickel, and Le]{lipman2023}
Yaron Lipman, Ricky T.~Q. Chen, Heli Ben{-}Hamu, Maximilian Nickel, and Matthew Le.
\newblock Flow matching for generative modeling.
\newblock In \emph{International Conference on Learning Representations}, 2023.

\bibitem[Liu \& Yue(2026)Liu and Yue]{liu2026learning}
Wenze Liu and Xiangyu Yue.
\newblock Learning to integrate diffusion {ODE}s by averaging the derivatives.
\newblock In \emph{Advances in Neural Information Processing System}, 2026.

\bibitem[Liu et~al.(2023)Liu, Gong, and Liu]{liu2023rectifiedflow}
Xingchao Liu, Chengyue Gong, and Qiang Liu.
\newblock Flow straight and fast: Learning to generate and transfer data with rectified flow.
\newblock In \emph{International Conference on Learning Representations}, 2023.

\bibitem[Liu et~al.(2024)Liu, Zhang, Li, Yan, Gao, Chen, Yuan, Huang, Sun, Gao, He, and Sun]{liu2024sora}
Yixin Liu, Kai Zhang, Yuan Li, Zhiling Yan, Chujie Gao, Ruoxi Chen, Zhengqing Yuan, Yue Huang, Hanchi Sun, Jianfeng Gao, Lifang He, and Lichao Sun.
\newblock Sora: {A} review on background, technology, limitations, and opportunities of large vision models.
\newblock \emph{arXiv preprint arXiv:2402.17177}, 2024.

\bibitem[Lu et~al.(2026)Lu, Lu, Sun, Zhao, Jiang, Wang, Li, Geng, and He]{lu2026one}
Yiyang Lu, Susie Lu, Qiao Sun, Hanhong Zhao, Zhicheng Jiang, Xianbang Wang, Tianhong Li, Zhengyang Geng, and Kaiming He.
\newblock One-step latent-free image generation with pixel mean flows.
\newblock \emph{arXiv preprint arXiv:2601.22158}, 2026.

\bibitem[Luo et~al.(2023)Luo, Tan, Huang, Li, and Zhao]{luo2023a}
Simian Luo, Yiqin Tan, Longbo Huang, Jian Li, and Hang Zhao.
\newblock Latent consistency models: Synthesizing high-resolution images with few-step inference.
\newblock \emph{arXiv preprint arXiv:2310.04378}, 2023.

\bibitem[Ma et~al.(2024)Ma, Goldstein, Albergo, Boffi, Vanden-Eijnden, and Xie]{ma2024sit}
Nanye Ma, Mark Goldstein, Michael~S Albergo, Nicholas~M Boffi, Eric Vanden-Eijnden, and Saining Xie.
\newblock Sit: Exploring flow and diffusion-based generative models with scalable interpolant transformers.
\newblock In \emph{European Conference on Computer Vision}, 2024.

\bibitem[Peebles \& Xie(2023)Peebles and Xie]{peebles2023dit}
William Peebles and Saining Xie.
\newblock Scalable diffusion models with transformers.
\newblock In \emph{International Conference on Computer Vision}, 2023.

\bibitem[Peng et~al.(2026)Peng, Zhu, Liu, Wu, Li, Sun, and Wu]{peng2026facm}
Yansong Peng, Kai Zhu, Yu~Liu, Pingyu Wu, Hebei Li, Xiaoyan Sun, and Feng Wu.
\newblock {FACM}: Flow-anchored consistency models.
\newblock In \emph{International Conference on Learning Representations}, 2026.

\bibitem[Rombach et~al.(2022)Rombach, Blattmann, Lorenz, Esser, and Ommer]{rombach2O22ldm}
Robin Rombach, Andreas Blattmann, Dominik Lorenz, Patrick Esser, and Bj{\"{o}}rn Ommer.
\newblock High-resolution image synthesis with latent diffusion models.
\newblock In \emph{IEEE/CVF Conference on Computer Vision Pattern Recognition}, 2022.

\bibitem[Salimans \& Ho(2022)Salimans and Ho]{salimans2022pd}
Tim Salimans and Jonathan Ho.
\newblock Progressive distillation for fast sampling of diffusion models.
\newblock In \emph{International Conference on Learning Representations}, 2022.

\bibitem[Sauer et~al.(2022)Sauer, Schwarz, and Geiger]{sauer2022styleganxl}
Axel Sauer, Katja Schwarz, and Andreas Geiger.
\newblock Stylegan-xl: Scaling stylegan to large diverse datasets.
\newblock In \emph{SIGGRAPH}, 2022.

\bibitem[Sauer et~al.(2024{\natexlab{a}})Sauer, Boesel, Dockhorn, Blattmann, Esser, and Rombach]{sauer2024ladd}
Axel Sauer, Frederic Boesel, Tim Dockhorn, Andreas Blattmann, Patrick Esser, and Robin Rombach.
\newblock Fast high-resolution image synthesis with latent adversarial diffusion distillation.
\newblock In \emph{SIGGRAPH Asia}, 2024{\natexlab{a}}.

\bibitem[Sauer et~al.(2024{\natexlab{b}})Sauer, Lorenz, Blattmann, and Rombach]{sauer2024add}
Axel Sauer, Dominik Lorenz, Andreas Blattmann, and Robin Rombach.
\newblock Adversarial diffusion distillation.
\newblock In \emph{European Conference on Computer Vision}, 2024{\natexlab{b}}.

\bibitem[Singh et~al.(2026)Singh, Zheng, Wu, Zhang, Shechtman, and Xie]{singh2026raev2}
Jaskirat Singh, Boyang Zheng, Zongze Wu, Richard Zhang, Eli Shechtman, and Saining Xie.
\newblock Improved baselines with representation autoencoders.
\newblock \emph{arXiv preprint arXiv:2605.18324}, 2026.

\bibitem[Sohl{-}Dickstein et~al.(2015)Sohl{-}Dickstein, Weiss, Maheswaranathan, and Ganguli]{dickstein2015dm}
Jascha Sohl{-}Dickstein, Eric~A. Weiss, Niru Maheswaranathan, and Surya Ganguli.
\newblock Deep unsupervised learning using nonequilibrium thermodynamics.
\newblock In \emph{International Conference on Machine Learning}, 2015.

\bibitem[Song et~al.(2021{\natexlab{a}})Song, Meng, and Ermon]{song2021ddim}
Jiaming Song, Chenlin Meng, and Stefano Ermon.
\newblock Denoising diffusion implicit models.
\newblock In \emph{International Conference on Learning Representations}, 2021{\natexlab{a}}.

\bibitem[Song \& Dhariwal(2024)Song and Dhariwal]{song2024ict}
Yang Song and Prafulla Dhariwal.
\newblock Improved techniques for training consistency models.
\newblock In \emph{International Conference on Learning Representations}, 2024.

\bibitem[Song et~al.(2021{\natexlab{b}})Song, Sohl{-}Dickstein, Kingma, Kumar, Ermon, and Poole]{song2021scoresde}
Yang Song, Jascha Sohl{-}Dickstein, Diederik~P. Kingma, Abhishek Kumar, Stefano Ermon, and Ben Poole.
\newblock Score-based generative modeling through stochastic differential equations.
\newblock In \emph{International Conference on Learning Representations}, 2021{\natexlab{b}}.

\bibitem[Song et~al.(2023)Song, Dhariwal, Chen, and Sutskever]{song2023cm}
Yang Song, Prafulla Dhariwal, Mark Chen, and Ilya Sutskever.
\newblock Consistency models.
\newblock In \emph{International Conference on Machine Learning}, 2023.

\bibitem[Tong et~al.(2025)Tong, Ma, Xie, and Jaakkola]{tong2025freeflow}
Shangyuan Tong, Nanye Ma, Saining Xie, and Tommi Jaakkola.
\newblock Flow map distillation without data.
\newblock \emph{arXiv preprint arXiv:2511.19428}, 2025.

\bibitem[Tong et~al.(2026)Tong, Zheng, Wang, Tang, Ma, Brown, Yang, Fergus, LeCun, and Xie]{scale-rae-2026}
Shengbang Tong, Boyang Zheng, Ziteng Wang, Bingda Tang, Nanye Ma, Ellis Brown, Jihan Yang, Rob Fergus, Yann LeCun, and Saining Xie.
\newblock Scaling text-to-image diffusion transformers with representation autoencoders.
\newblock \emph{arXiv preprint arXiv:2601.16208}, 2026.

\bibitem[Wang et~al.(2024)Wang, Huang, Bergman, Shen, Gao, Lingelbach, Sun, Bian, Song, Liu, et~al.]{wang2024phased}
Fu-Yun Wang, Zhaoyang Huang, Alexander~W Bergman, Dazhong Shen, Peng Gao, Michael Lingelbach, Keqiang Sun, Weikang Bian, Guanglu Song, Yu~Liu, et~al.
\newblock Phased consistency models.
\newblock \emph{Advances in Neural Information Processing System}, 2024.

\bibitem[Wang et~al.(2023)Wang, Zheng, He, Chen, and Zhou]{wang2023diffusiongan}
Zhendong Wang, Huangjie Zheng, Pengcheng He, Weizhu Chen, and Mingyuan Zhou.
\newblock Diffusion-gan: Training gans with diffusion.
\newblock In \emph{International Conference on Learning Representations}, 2023.

\bibitem[Yao \& Wang(2025)Yao and Wang]{Yao2025ReconstructionVG}
Jingfeng Yao and Xinggang Wang.
\newblock Reconstruction vs. generation: Taming optimization dilemma in latent diffusion models.
\newblock \emph{IEEE/CVF Conference on Computer Vision Pattern Recognition}, 2025.

\bibitem[Yin et~al.(2024{\natexlab{a}})Yin, Gharbi, Park, Zhang, Shechtman, Durand, and Freeman]{yin2024dmd2}
Tianwei Yin, Micha{\"e}l Gharbi, Taesung Park, Richard Zhang, Eli Shechtman, Fredo Durand, and William~T Freeman.
\newblock Improved distribution matching distillation for fast image synthesis.
\newblock In \emph{Advances in Neural Information Processing System}, 2024{\natexlab{a}}.

\bibitem[Yin et~al.(2024{\natexlab{b}})Yin, Gharbi, Park, Zhang, Shechtman, Durand, and Freeman]{yin2024improved}
Tianwei Yin, Micha{\"e}l Gharbi, Taesung Park, Richard Zhang, Eli Shechtman, Fredo Durand, and William~T Freeman.
\newblock Improved distribution matching distillation for fast image synthesis.
\newblock In \emph{Advances in Neural Information Processing System}, 2024{\natexlab{b}}.

\bibitem[Yin et~al.(2024{\natexlab{c}})Yin, Gharbi, Zhang, Shechtman, Durand, Freeman, and Park]{yin2024dmd}
Tianwei Yin, Micha{\"{e}}l Gharbi, Richard Zhang, Eli Shechtman, Fr{\'{e}}do Durand, William~T. Freeman, and Taesung Park.
\newblock One-step diffusion with distribution matching distillation.
\newblock In \emph{IEEE/CVF Conference on Computer Vision Pattern Recognition}, 2024{\natexlab{c}}.

\bibitem[Yu et~al.(2025)Yu, Kwak, Jang, Jeong, Huang, Shin, and Xie]{yu2025repa}
Sihyun Yu, Sangkyung Kwak, Huiwon Jang, Jongheon Jeong, Jonathan Huang, Jinwoo Shin, and Saining Xie.
\newblock Representation alignment for generation: Training diffusion transformers is easier than you think.
\newblock In \emph{International Conference on Learning Representations}, 2025.

\bibitem[Zhang et~al.(2026)Zhang, Mang, and Agrawal]{zhang2026rit}
Le~Zhang, Ning Mang, and Aishwarya Agrawal.
\newblock Rit: Vanilla diffusion transformers suffice in representation space.
\newblock \emph{arXiv preprint arXiv:2605.21981}, 2026.

\bibitem[Zheng et~al.(2025)Zheng, Ma, Tong, and Xie]{zheng2025diffusion}
Boyang Zheng, Nanye Ma, Shengbang Tong, and Saining Xie.
\newblock Diffusion transformers with representation autoencoders.
\newblock \emph{arXiv preprint arXiv:2510.11690}, 2025.

\bibitem[Zhou et~al.(2025)Zhou, Ermon, and Song]{zhou2025inductive}
Linqi Zhou, Stefano Ermon, and Jiaming Song.
\newblock Inductive moment matching.
\newblock In \emph{International Conference on Machine Learning}, 2025.

\bibitem[Zhou et~al.(2024)Zhou, Zheng, Wang, Yin, and Huang]{zhou2024sid}
Mingyuan Zhou, Huangjie Zheng, Zhendong Wang, Mingzhang Yin, and Hai Huang.
\newblock Score identity distillation: Exponentially fast distillation of pretrained diffusion models for one-step generation.
\newblock In \emph{International Conference on Machine Learning}, 2024.

\bibitem[Zhou et~al.(2026)Zhou, Li, Hu, Chen, and Gu]{zhou2025IG}
Xingyu Zhou, Qifan Li, Xiaobin Hu, Hai Chen, and Shuhang Gu.
\newblock Guiding a diffusion transformer with the internal dynamics of itself.
\newblock \emph{IEEE/CVF Conference on Computer Vision Pattern Recognition}, 2026.

\end{thebibliography}
